\PassOptionsToPackage{numbers,sort&compress}{natbib}
\documentclass[3p,a4paper]{elsarticle}
\usepackage[utf8]{inputenc}
\usepackage{subfigure}
\usepackage{graphicx}

\title{Neural Discovery of Memory and Nonlocal Kernels in Integro-Differential Equations with Constrained Kolmogorov--Arnold Networks}

\author[myUaddress]{Aruzhan Tleubek} 
\author[myUaddress,myUaddress2]{Salah A Faroughi \corref{mycorrespondingauthor}}

\cortext[mycorrespondingauthor]{Corresponding author: salah.faroughi@utah.edu}

\address[myUaddress]{Energy \& Intelligence Lab, Department of Chemical Engineering, University of Utah, Salt Lake City, Utah  84112, USA
}
\address[myUaddress2]{Department of Mechanical Engineering, University of Utah, Salt Lake City, Utah  84112, USA
}

\date{\today}

\usepackage{mwe}
\usepackage{amsmath, amssymb}
\usepackage{algorithm}
\usepackage{float}
\usepackage{mathtools}  
\usepackage{algpseudocode}
\usepackage[table]{xcolor}
\usepackage{algpseudocode}
\usepackage{makecell}

\usepackage{setspace} 
\usepackage{tabularx}
\usepackage{lineno,hyperref}
\usepackage{amsmath}
\usepackage{bm}
\usepackage{amssymb}
\usepackage[]{amsmath}
\usepackage{upgreek}
\usepackage{float}
\usepackage{listings}
\usepackage{xcolor}

\lstdefinestyle{pythonstyle}{
    language=Python,
    basicstyle=\ttfamily\footnotesize,
    keywordstyle=\color{blue}\bfseries,
    commentstyle=\color{gray}\itshape,
    stringstyle=\color{orange},
    numberstyle=\tiny\color{gray},
    numbers=left,
    numbersep=5pt,
    breaklines=true,
    breakatwhitespace=true,
    frame=single,
    frameround=tttt,
    rulecolor=\color{black!30},
    backgroundcolor=\color{black!3},
    showstringspaces=false,
    tabsize=4,
    captionpos=b,
}

\let\today\relax
\makeatletter
\def\ps@pprintTitle{%
    \let\@oddhead\@empty
    \let\@evenhead\@empty
    \def\@oddfoot{\footnotesize\itshape
         {Submitted preprint — July 2026} \hfill\today}%
    \let\@evenfoot\@oddfoot
    }
\makeatother

\usepackage{pgfplots}
\pgfplotsset{compat=1.5}
\usepgfplotslibrary{units}
\usetikzlibrary{spy}
\usetikzlibrary{pgfplots.groupplots}

\usepackage{epstopdf}
\usepackage{units}
\usepackage{lscape}
\usepackage{booktabs}
\usepackage{gensymb}
\usepackage{multirow}

\usepackage{graphicx}
\usepackage{caption}
\usepackage{subcaption}

\usepackage{booktabs}
\usepackage{multirow}
\usepackage{caption}
\usepackage{subcaption}

\usepackage{soul,color}
\soulregister\cite7
\soulregister\ref7
\soulregister\pageref7

\usepackage[flushleft]{threeparttable}
\usepackage{tikz}
\usepackage{placeins}
\usepackage{color}
\usepackage{hyperref}

\usepackage{stackengine}

\usepackage{array}
\usepackage{tabularx}
\usepackage{pdfpages}
\usepackage[numbers,sort&compress]{natbib}
\usepackage{textcomp, gensymb}
\usepackage{booktabs} 

\newtheorem{theorem}{Theorem}[section]

\newtheorem{lemma}[theorem]{Lemma}

\newtheorem{remark}[theorem]{Remark}

\newtheorem{open problem}[theorem]{Open Problem}
\newenvironment{proof}%
{\par\noindent\textbf{Proof.}\ }%
{\hfill$\square$\par}

\begin{document}

\begin{abstract}
Discovering the memory or nonlocal kernel governing an integro-differential equation (IDE) from sparse and noisy observations is an ill-posed inverse problem. Existing identification methods often rely on problem-specific analytical derivations, specialized observation requirements, or restrictive assumptions about the kernel, limiting their applicability across different classes of IDEs. In this work, we propose a differentiable-solver-based framework for discovering memory and nonlocal kernels directly from spatiotemporal observations. Within the solver, the unknown kernel is represented using a constrained Kolmogorov--Arnold Network (KAN) parameterization, with the physical constraints imposed through two different approaches: a Bernstein-polynomial-based Monotone--Convex KAN (MC-KAN), whose coefficient constraints enforce positivity, monotonic decrease, and convexity by construction, and a Chebyshev-based KAN (Cheb-KAN), in which the same properties are encouraged through soft penalty terms. After training, symbolic regression is applied to the learned kernels to obtain interpretable closed-form representations. We evaluate both methods on benchmarks spanning a one-dimensional Volterra equation, a one-dimensional viscoelastic wave partial integro-differential equation, and a two-dimensional nonlocal reaction-diffusion equation with an anisotropic coupled kernel.  For the 1D problems, both methods recover the correct kernel functional form and achieve comparable solution-reconstruction accuracy. In contrast, for the sparse and noisy 2D nonlocal problem, the hard-constrained MC-KAN consistently achieves lower kernel reconstruction errors than the soft-constrained Cheb-KAN.  Our results demonstrate that enforcing physically motivated shape constraints by construction provides greater robustness than soft penalties for multidimensional kernel discovery from sparse and noisy observations.
\end{abstract}

\begin{keyword}
Scientific Machine Learning \sep
Integro-Differential Equations \sep
Kernel Discovery \sep
Differentiable Solvers \sep
Inverse Problems \sep
Nonlocal Models \sep
Kolmogorov--Arnold Networks
\end{keyword}

\maketitle
\section{Introduction}\label{sec:Intro}
Many physical systems exhibit memory effects or spatial nonlocality, where the state variables depend on the entire history of the system's evolution and/or on long-range spatial interactions, including viscoelastic polymer relaxation~\cite{dehghan2006solution, dafermos1970asymptotic}, anomalous diffusion in biological membranes~\cite{caputo2008diffusion, metzler2000random, metzler2014anomalous}, and heat conduction with thermal memory~\cite{gurtin1968general, nunziato1971heat}. These responses are modeled by integro-differential equations (IDEs), where a differential operator, as in standard ordinary or partial differential equations (ODEs/PDEs), is coupled with an integral operator. The key element in IDEs is a kernel that weights contributions from past states or spatially distant regions. Often, the kernel is unknown, and identifying it is crucial for solving IDEs efficiently~\cite{schadle2006fast, gao2022kernel}, incorporating it into reduced-order models~\cite{chorin2000optimal, gouasmi2017priori}, and constructing physically interpretable surrogates. However, kernel identification from data remains a challenging problem. 

The inverse problem of identifying memory and nonlocal kernels from observational data has a long history in applied mathematics. Classical approaches typically reformulate the problem as a Volterra integral equation for the unknown kernel and use problem-specific information such as boundary flux measurements~\cite{pandolfi2015identification} or integral overdetermination constraints~\cite{durdiev2022memory}. The resulting equations are then solved using analytical techniques, including eigenfunction expansions~\cite{durdiev2024kernel}, Laplace transforms~\cite{janno2000inverse}, resolvent estimates~\cite{cavaterra1994identifying}, and fixed-point arguments~\cite{durdiev2022memory}, or formulated as regularized optimization problems solved using adjoint-based optimization, Tikhonov regularization~\cite{lamm2000survey}, iterative reconstruction algorithms, deconvolution~\cite{pandolfi2017identification}, and Prony-series fitting~\cite{soussou1970application}. Although rigorous uniqueness and stability results have been established under suitable structural assumptions, many classical reconstruction methods rely on problem-specific analytical derivations, specialized observation requirements, or restrictive assumptions on the kernel to derive inversion formulas and guarantee well-posedness. Moreover, these reconstruction procedures are often tailored to particular classes of IDEs and observation settings, limiting their applicability across diverse physical systems. These limitations motivate flexible data-driven approaches capable of identifying unknown kernel structures directly from observations.

Recent advances in scientific machine learning (SciML) have made it possible to learn governing equations, constitutive relations, and unknown operators directly from observational data across a wide range of ODEs, PDEs, and dynamical systems~\cite{brunton2016sindy, chen2018neural, rudy2017pdefind, lu2021deeponet, faroughi2026symbolic, koenig2024kan, mostajeran2025minpo}. The same data-driven approach has increasingly been applied to nonlocal systems to learn solution dynamics, kernel functions, or both. Existing work can be broadly grouped into neural-network-based methods, sparse and symbolic regression methods, and neural differential equations. Neural-network-based approaches, often within physics-informed learning frameworks, parameterize the unknown kernel as a neural network and identify its parameters by enforcing the governing equation during optimization. For example, \citet{difonzo2025physics} used an RBF-based physics-informed neural network to recover peridynamic kernels in a nonlocal wave equation, while \citet{khaldi2026learning} studied memory-kernel identification in viscoelasticity using a physics-informed neural architecture with a differentiable Prony state-space memory layer. These methods demonstrate that neural representations can reconstruct kernels in nonlocal and history-dependent systems, but the learned kernels are represented implicitly in network parameters or state-space parameterizations rather than recovered as closed-form expressions. To obtain more interpretable representations, sparse and symbolic regression have also been developed. Building on sparse identification of nonlinear dynamics (SINDy), \citet{breda2025sparse} recovered kernels in delay IDEs and renewal equations, and \citet{carrillo2025sparse} estimated nonlocal interaction kernels using sparse identification with partial inversion. Symbolic-regression approaches, such as the method of \citet{yu6404003physics}, combine preprocessing or transform-based steps with symbolic search to identify candidate kernel forms. These approaches can produce compact, interpretable expressions when the target kernel is well represented by the chosen candidate space, but their performance may depend strongly on preprocessing, prescribed libraries, or transform-based assumptions. A third related class of methods is neural ODEs and universal differential equations, in which unknown components of a dynamical system are replaced by trainable neural networks. Extending this idea to nonlocal dynamics, \citet{zappala2023neural} introduced neural integro-differential equations (NIDE) and showed that incorporating integral terms can improve the modeling of nonlocal brain dynamics compared with standard Neural ODEs. Their formulation follows an optimize-then-discretize~\cite{chen2018neural} strategy that requires deriving continuous adjoint equations specific to the IDE for gradient computation. A subsequent extension introduced Neural Integral Equations (NIEs) and Attentional Neural Integral Equations (ANIEs), in which integral operators are approximated via Monte Carlo sampling and attention mechanisms to improve scalability in higher-dimensional settings, with applications to Navier--Stokes equations and brain dynamics~\cite{zappala2024learning}. However, these methods learn the solution dynamics of the IDE, rather than recover the memory kernel itself as an explicit object of scientific interest. They also remain tied to case-specific formulations, requiring either IDE-specific adjoint derivations or iterative neural integral solvers. 

In this work, we instead target the identification of unknown memory and nonlocal kernels in IDEs directly from observations. Our framework replaces the unknown kernel with a trainable neural representation embedded within a differentiable IDE solver, and optimizes its parameters by minimizing the mismatch between predicted and observed spatiotemporal data. In contrast to adjoint-based methods, we use a discretize-then-optimize strategy~\cite{gholami2019anode, onken2020discretize}  where the gradients are computed directly through the numerical solver using automatic differentiation, removing the need for case-specific adjoint derivations. The framework is therefore compatible with standard time integrators and spatial discretizations. Kernel identification from sparse and noisy observations is generally ill-posed~\cite{engl1996regularization}, so physically meaningful constraints are required to restrict the solution space. In a broad class of memory-dominated systems, including the viscoelastic relaxation, anomalous diffusion, and thermal memory examples noted above, the kernel encodes a fading memory in which the influence of past states decays smoothly with elapsed time. We therefore focus on kernels that are positive, monotonically decreasing, and convex, consistent with standard exponential and power-law relaxation models~\cite{soussou1970application, metzler2000random, hanyga2018simple, luchko2020complete, hanyga2013wave}. To impose these properties, we consider two constrained Kolmogorov--Arnold Network (KAN) parameterizations. The proposed Monotone--Convex KAN (MC-KAN) is a Bernstein-polynomial-based architecture that enforces positivity, monotonic decrease, and convexity by construction through simple constraints on its polynomial coefficients. The second is a Chebyshev-based KAN (Cheb-KAN)~\cite{ss2024chebyshev}, in which the same physical properties are encouraged through soft penalty terms during optimization. After training, the learned neural kernel is converted into an interpretable closed-form expression using symbolic regression~\cite{cranmer2023pysr}. We validate the proposed framework on three benchmark problems spanning temporal and spatial kernel discovery: a Volterra IDE, a one-dimensional viscoelastic wave partial integro-differential equation (PIDE), and a two-dimensional nonlocal reaction-diffusion equation. Across these benchmarks, both KAN parameterizations successfully identify the underlying kernels, while the hard-constrained formulation becomes advantageous for sparse, noisy, and multidimensional inverse problems.

The remainder of this paper is organized as follows. Section~\ref{sec:methods} presents the methodology, including the problem formulation, memory kernel properties, and the MC-KAN framework. Section~\ref{sec:results} presents the numerical results and discussion. Section~\ref{sec:conclusion} concludes with key findings and future directions. \ref{sec-appendix:theorem} presents the theoretical guarantee showing that MC-KAN preserves the prescribed kernel constraints,~\ref{sec-appendix:implementation} describes the coefficient reparameterization, and \ref{sec:appendix-numerical} provides additional details on the numerical experiments.

\section{Methodology}\label{sec:methods}
In this section, we present the proposed framework for kernel discovery, illustrated in Fig.~\ref{fig:fig1}. We first formulate the kernel discovery problem and define the observational data. We then introduce the two KAN parameterizations considered in this work, namely the hard-constrained MC-KAN and the soft-constrained Cheb-KAN. Finally, we describe the differentiable forward solver, the training procedure, and the symbolic regression step used to obtain an interpretable closed-form representation of the learned kernel.

\subsection{Problem Setup}
Let $\mathcal{D}=\{(t_k,\mathbf{x}_i,u(t_k,\mathbf{x}_i))\}$, $k=1,\ldots,N_t$, $i=1,\ldots,N_x$, denote a dataset of spatiotemporal observations of a scalar field $u:[0,T]\times\Omega\rightarrow\mathbb{R}$, where $\Omega\subset\mathbb{R}^n$ is the spatial domain; see the problem setup panel in Fig.~\ref{fig:fig1}. We assume that $u$ satisfies an IDE of the form, 
\begin{equation}
    \mathcal{T}[u](t,\mathbf{x})
    =
    \mathcal{N}[u](t,\mathbf{x})
    +
    \int_0^t \int_{\Omega}
    K(t-s,\mathbf{x}-\mathbf{y})\,
    \mathcal{F}[u](s,\mathbf{y})
    \, d\mathbf{y}\, ds
    +
    S(t,\mathbf{x}),
    \label{eq:general-IDE-method}
\end{equation}
where $\mathcal{T}[u](t,\mathbf{x})$ is the temporal evolution operator, $\mathcal{N}[u](t,\mathbf{x})$ is the local operator, $S(t,\mathbf{x})$ is an external source term, and $K(t-s,\mathbf{x}-\mathbf{y})$ is a spatiotemporal convolution kernel that weights the contribution of the field state at past time $s$ or spatial location $\mathbf{y}$. The operator $\mathcal F$ is a (possibly differential) operator acting on $u$ and evaluated at $(s,\mathbf y)$. Equation~\eqref{eq:general-IDE-method} is supplemented with boundary and initial conditions,
\begin{equation}
    \mathcal{B}[u](t,\mathbf{x}) = 0,
    \qquad \mathbf{x} \in \partial\Omega,
    \quad t \in [0,T],
    \label{eq:bc-method}
\end{equation}
\begin{equation}
    u(0,\mathbf{x}) = u_0(\mathbf{x}),
    \qquad
    \partial_t u(0,\mathbf{x})
    = v_0(\mathbf{x}),
    \qquad \mathbf{x} \in \Omega,
    \label{eq:ic-method}
\end{equation}
where $\mathcal{B}$ is a boundary operator, $u_0(\mathbf{x})$ and $v_0(\mathbf{x})$ are the prescribed initial
field values and initial rates of change, respectively. The second condition is imposed only when the temporal operator $\mathcal{T}$ is second order in time. The functional form of the kernel $K(t-s,\mathbf{x}-\mathbf{y})$ encodes how strongly the current state is influenced by past states and/or spatial locations, and is assumed to be \textit{unknown}. We seek to identify $K(t-s,\mathbf{x}-\mathbf{y})$ directly from the spatiotemporal dataset $\mathcal{D}$. 

\begin{figure}[h!]
    \centering
    \includegraphics[width=\linewidth]{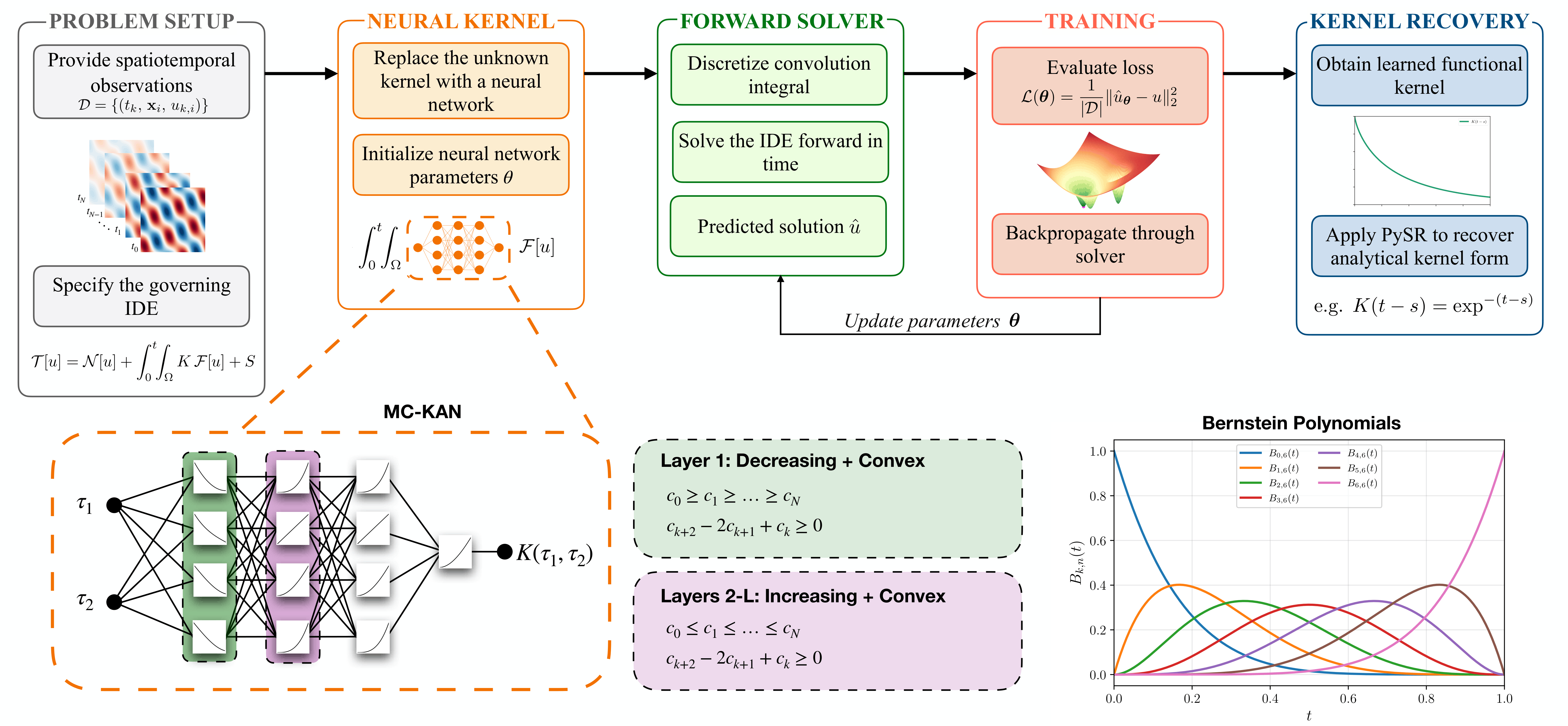}
    \caption{Schematic of the neural-network-embedded kernel discovery framework. \textbf{Top:} The framework consists of five stages. Given the spatiotemporal dataset $\mathcal{D}$ and the governing IDE (\emph{Problem Setup}), the unknown kernel is replaced by a neural network embedded inside the memory integral (\emph{Neural Kernel}); the IDE is discretized and solved forward in time to produce the predicted field $\hat{u}$ (\emph{Forward Solver}); the parameters of the network $\boldsymbol{\theta}$ are updated by minimizing the data mismatch $\mathcal{L}(\boldsymbol{\theta})$ with gradients obtained by backpropagation through the solver (\emph{Training}); and the converged kernel is distilled into a closed-form expression via symbolic regression (PySR) (\emph{Kernel Recovery}). \textbf{Bottom:} the MC-KAN parameterization of the kernel, a KAN whose edge functions are Bernstein polynomials (right). The physical constraints are enforced through conditions on the Bernstein coefficients. The first layer uses a decreasing, convex ordering and the remaining layers an increasing, convex ordering, so that the composed kernel $K(\boldsymbol{\tau})$ is positive, monotonically decreasing, and convex along each lag dimension.}
    \label{fig:fig1}
\end{figure}

When $\mathcal{D}$ is sparse or noisy, this identification problem is ill-posed in the sense of Hadamard~\cite{engl1996regularization}, and prior information is required to obtain a stable, physically meaningful solution. We therefore restrict attention to a broad class of dissipative memory processes whose kernels are positive, monotonically decreasing, and convex along each lag dimension, rather than jointly convex over the full lag domain. Writing $\boldsymbol{\tau} = (\tau_t,\boldsymbol{\tau}_X)$, with $\tau_t=t-s, \boldsymbol{\tau}_X=|\mathbf{x}-\mathbf{y}|$
(componentwise absolute value), and $\tau_j$ for its
$j$-th component, we require, 
\begin{equation}
    K(\boldsymbol{\tau}) > 0,
    \qquad
    \frac{\partial K}{\partial \tau_j} \le 0,
    \qquad
    \frac{\partial^2 K}{\partial \tau_j^2} \ge 0,
    \qquad j = 1,\ldots,n+1 ,
    \label{eq:kernel-properties}
\end{equation}
on the lag domain, taken with $\tau_t^{\min} > 0$ so that the weakly singular power-law and Mittag--Leffler kernels remain permissible while the derivatives in Eq.~\eqref{eq:kernel-properties} stay well defined.

\subsection{Neural Kernel Parameterization}
To identify a kernel satisfying Eq.~\eqref{eq:kernel-properties}, we replace the unknown kernel with a neural network $K_{\boldsymbol{\theta}}$ (see neural kernel in Fig.~\ref{fig:fig1}), where $\boldsymbol{\theta}$ denotes the trainable parameters,
\begin{equation}
    K_{\boldsymbol{\theta}}
    :
    \mathbb{R}_{\ge 0}\times\mathbb{R}^n
    \;\to\;
    \mathbb{R},
    \qquad
    \boldsymbol{\tau} \mapsto K_{\boldsymbol{\theta}}(\boldsymbol{\tau}),
    \label{eq:kernel-nn}
\end{equation}
that maps a space--time lag $\boldsymbol{\tau}$ to a scalar kernel value. In this work, we consider two KAN-based parameterizations. The first is a Chebyshev-based KAN (Cheb-KAN)~\cite{ss2024chebyshev, shukla2024comprehensive}, in which positivity, monotonic decrease, and convexity are encouraged through soft penalty terms. The second is the proposed Monotone--Convex Kolmogorov--Arnold Network (MC-KAN), in which the same physical properties are enforced by construction. Our choice of KANs is motivated by their decomposition into learnable univariate edge functions \cite{faroughi2026kolmogorov,faroughi2026neural,mostajeran2025scaled, yu2024kan}, which makes shape constraints easier to impose. Standard multilayer perceptrons (MLPs)~\cite{faroughi2024physics}, on the other hand, often require separate specialized architectures such as input-convex~\cite{amos2017input} or monotone neural networks~\cite{sill1997monotonic}.  Recent work has shown that shape constraints can be imposed in KANs through conditions on spline coefficients, including input-convex KANs for polyconvex hyperelastic constitutive modeling~\cite{thakolkaran2025ickan} and convex approximation and optimal transport~\cite{deschatre2026ickan}. Building on this idea, MC-KAN adapts constrained KANs to the memory-kernel setting by using a Bernstein basis~\cite{farouki2012bernstein} whose coefficient constraints guarantee positivity, monotone decrease, and convexity along each lag dimension.

\subsubsection{Soft-constrained Chebyshev KAN}
For a Cheb-KAN architecture $[d,h_1,\ldots,h_{L-1},h_L]$, where $d=n+1$ is the number of kernel inputs (one temporal lag and $n$ spatial lags) and $h_l$ is the number of nodes in layer $l$, with $h_0=d$ and $h_L=1$, the neural kernel is constructed as a composition of univariate transformations~\cite{liu2025kan, shukla2024comprehensive},
\begin{equation} 
    K_{\text{Cheb-KAN}}(\boldsymbol{\tau}) = \left( \boldsymbol{\Phi}_L \circ \tanh \circ \boldsymbol{\Phi}_{L-1} \circ \cdots \circ \tanh \circ \boldsymbol{\Phi}_1 \right)\!\bigl(\boldsymbol{\chi}(\boldsymbol{\tau})\bigr), 
    \label{eq:cheb-kernel} 
\end{equation}
where $\boldsymbol{\chi}(\boldsymbol{\tau})$ denotes the normalized lag input and $\tanh$ acts componentwise. Since Chebyshev polynomials are defined on $[-1,1]$, each lag component $\tau_i$ is linearly normalized as
\begin{equation} 
    \chi_i(\boldsymbol{\tau}) = 2\,\frac{\tau_i-\tau_i^{\min}}{\tau_i^{\max}-\tau_i^{\min}} - 1, 
    \qquad i=1,\ldots,d, 
\label{eq:cheb-normalization} 
\end{equation}
so that $\chi_i\in[-1,1]$. To ensure numerical stability during training and to respect the domain of the Chebyshev polynomials, a $\tanh$ activation is applied after each hidden layer, constraining all intermediate values to $[-1,1]$. Each layer map $\boldsymbol{\Phi}_l:\mathbb{R}^{h_{l-1}}\rightarrow\mathbb{R}^{h_l}$ is composed of learnable univariate edge functions. For $\mathbf{z}\in\mathbb{R}^{h_{l-1}}$,
\begin{equation}
    \left(\boldsymbol{\Phi}_l(\mathbf{z})\right)_p = \sum_{q=1}^{h_{l-1}} \phi^{(l)}_{p,q}(z_q), 
    \qquad p=1,\ldots,h_l, 
    \label{eq:cheb-layer} 
\end{equation}
and each edge function is parameterized in a Chebyshev polynomial basis~\cite{ss2024chebyshev},
\begin{equation} 
    \phi^{(l)}_{p,q}(z) = \sum_{k=0}^{M} a^{(l)}_{p,q,k}\,T_k(z), 
    \qquad z\in[-1,1], 
\label{eq:cheb-edge} 
\end{equation}
where $z$ denotes the scalar input to the edge, $M$ is the maximum polynomial degree, $T_k$ is the Chebyshev polynomial of the first kind of degree $k$, and $a^{(l)}_{p,q,k}\in\mathbb{R}$ are trainable coefficients, so that $\boldsymbol{\theta}_{\text{Cheb-KAN}}=\{a^{(l)}_{p,q,k}\}$. The Chebyshev polynomials are defined recursively as \cite{mostajeran2024epi, guo2025physics},
\begin{equation} 
    T_0(z)=1, \qquad T_1(z)=z, \qquad T_{k+1}(z)=2z\,T_k(z)-T_{k-1}(z), \qquad k\ge1. 
    \label{eq:cheb-recursion} 
\end{equation}

\subsubsection{Hard-constrained Monotone--Convex KAN}\label{sec:mckan}
For the MC-KAN architecture $[d,h_1,\ldots,h_{L-1},h_{L}]$, where $d=n+1$ is the number of kernel inputs (one temporal lag and $n$ spatial lags) and $h_l$ is the number of nodes in layer $l$, the neural kernel is written as~\cite{liu2025kan},
\begin{equation} 
    K_{\text{MC-KAN}}(\boldsymbol{\tau}) = \left( \boldsymbol{\Phi}_L \circ \boldsymbol{\Phi}_{L-1} \circ \cdots \circ \boldsymbol{\Phi}_1 \right)\!\bigl(\boldsymbol{\zeta}(\boldsymbol{\tau})\bigr), 
    \label{eq:mc-kernel} 
\end{equation}
where $\boldsymbol{\zeta}(\boldsymbol{\tau})$ denotes the normalized lag input. Since Bernstein polynomials are defined on $[0,1]$, each lag component $\tau_i$ is first normalized as,
\begin{equation} 
    \zeta_i(\boldsymbol{\tau}) = \left( \frac{\tau_i-\tau_i^{\min}}{\tau_i^{\max}-\tau_i^{\min}} \right)^{\alpha_i}, 
    \qquad i=1,\ldots,d. 
    \label{eq:bernstein-normalization} 
\end{equation}
Here, $\alpha_i\in(0,1]$ is a scaling exponent that adjusts the resolution of the normalized coordinate along the $i$-th lag dimension. Depending on the application, $\alpha_i$ may be specified a priori or treated as a trainable parameter learned jointly with the network weights. Linear normalization corresponds to $\alpha_i=1$, while $\alpha_i<1$ gives a power-law normalization that allocates more resolution near small lags. Restricting $\alpha_i\le1$ makes each normalized coordinate $\zeta_i$ monotone increasing and concave in $\tau_i$, so the normalization preserves positivity, monotone decrease, and convexity under composition. When $\alpha_i$ is learnable it is constrained to be $\alpha_i\in(0,1)$. A formal argument is provided in \ref{sec-appendix:theorem}.

Let $\mathbf{z}^{(l)}\in\mathbb{R}^{h_l}$ denote the output of layer $l$, with $\mathbf{z}^{(0)}=\boldsymbol{\zeta}(\boldsymbol{\tau})\in[0,1]^d$ and $h_0=d$, $h_L=1$. Each layer map $\boldsymbol{\Phi}_l:[0,1]^{h_{l-1}}\rightarrow(0,1)^{h_l}$ is composed of learnable univariate edge functions, aggregated as an average~\cite{liu2025kan},
\begin{equation} 
    z_p^{(l)} = \left(\boldsymbol{\Phi}_l\!\left(\mathbf{z}^{(l-1)}\right)\right)_p = \frac{1}{h_{l-1}} \sum_{q=1}^{h_{l-1}} \phi_{p,q}^{(l)}\!\left(z_q^{(l-1)}\right), \qquad p=1,\ldots,h_l, 
    \label{eq:mc-kan-layer} 
\end{equation}
for $l=1,\ldots,L$. Here, $q$ indexes the input node from layer $l-1$, and $\phi_{p,q}^{(l)}:[0,1]\rightarrow(0,1)$ is a learnable univariate function on the edge from node $q$ in layer $l-1$ to node $p$ in layer $l$. The coefficient parameterization introduced below constrains each edge function to map into $(0,1)$. Since each layer computes the average of its incoming edge-function values, every layer output, including the final kernel value, remains in $(0,1)$. The factor $1/h_{l-1}$ is therefore a deliberate normalization for bound control, in contrast to the unnormalized sum used in standard KANs. This boundedness is an architectural consequence rather than a physical requirement. If kernels with arbitrary amplitude are required, the final network output may be multiplied by a positive learnable scaling parameter without affecting the positivity, monotonicity, or convexity guarantees.

The key design choice in MC-KAN is to parameterize each edge function as a Bernstein polynomial of degree $N$,
\begin{equation} 
    \phi_{p,q}^{(l)}(z) = \sum_{k=0}^{N} c_k^{(l,p,q)}\,B_{k,N}(z), 
    \qquad z\in[0,1], 
    \label{eq:edge-bernstein} 
\end{equation}
where $z=z_q^{(l-1)}$ is the scalar input to the edge, and the Bernstein basis functions are
\begin{equation} 
    B_{k,N}(z) = \binom{N}{k} z^{k}(1-z)^{N-k}, \qquad k=0,\ldots,N, \quad z\in[0,1]. 
    \label{eq:bernstein-basis} 
\end{equation}
Here, $\binom{N}{k}=N!/[k!(N-k)!]$ is the binomial coefficient. Because the Bernstein basis satisfies $B_{k,N}(z)\ge0$ for all $z\in[0,1]$~\cite{farouki2012bernstein}, shape constraints can be imposed through conditions on the Bernstein coefficients. For a Bernstein polynomial,
\[ \phi(z) = \sum_{k=0}^{N} c_k\,B_{k,N}(z), \]
the first and second derivatives are,
\begin{equation} 
    \frac{d\phi}{dz} = N \sum_{k=0}^{N-1} (c_{k+1}-c_k)\,B_{k,N-1}(z), 
    \label{eq:derivative} 
\end{equation}
\begin{equation} 
    \frac{d^2\phi}{dz^2} = N(N-1) \sum_{k=0}^{N-2} (c_{k+2}-2c_{k+1}+c_k)\,B_{k,N-2}(z). 
    \label{eq:second-derivative} 
\end{equation}

Since $B_{k,N-1}(z)\ge0$ and $B_{k,N-2}(z)\ge0$ on $[0,1]$, sufficient conditions for monotonicity and convexity are obtained by constraining the first and second finite differences of the coefficients. Thus, a Bernstein edge function satisfies the desired boundedness, positivity, monotone decrease, and convexity properties if its coefficients satisfy the conditions listed in Table~\ref{tab:constraints}. The coefficient conditions in Table~\ref{tab:constraints} are imposed using a differentiable constrained reparameterization of the Bernstein coefficients. In the first layer, the coefficients are ordered as $c_0\ge c_1\ge\cdots\ge c_N$ to obtain decreasing convex edge functions. In subsequent layers, the coefficient sequence is reversed, giving $c_0\le c_1\le\cdots\le c_N$ and hence increasing convex edge functions. In all layers, the reparameterization keeps the coefficients in $(0,1)$, so each edge function maps $[0,1]$ into $(0,1)$. This construction allows the prescribed kernel constraints to propagate through the full network depth. The formal proof and coefficient reparameterization are provided in \ref{sec-appendix:theorem} and \ref{sec-appendix:implementation}.

\begin{table}[h]
\centering
\caption{Sufficient Bernstein coefficient conditions for a Bernstein edge function to satisfy the boundedness and shape properties used in MC-KAN.}
\label{tab:constraints}
\begin{tabular}{llc}
\toprule
Property & Edge condition & Sufficient coefficient condition \\
\midrule
Boundedness and positivity &
$0<\phi(z)<1$ &
$0<c_k<1 \quad \forall\,k$ \\[4pt]
Monotone decrease &
$\dfrac{d\phi}{dz}\le0$ &
$c_0\ge c_1\ge\cdots\ge c_N$ \\[4pt]
Convexity &
$\dfrac{d^2\phi}{dz^2}\ge0$ &
$c_{k+2}-2c_{k+1}+c_k\ge0
\quad
\forall\,k=0,\ldots,N-2$ \\
\bottomrule
\end{tabular}
\end{table}

\subsection{Training Procedures}
Given a neural kernel $K_{\boldsymbol{\theta}}$, we embed it in a differentiable numerical solver for Eq.~\eqref{eq:general-IDE-method}; see the forward solver in Fig.~\ref{fig:fig1}. The integral term is discretized using a quadrature or convolution scheme appropriate to the governing equation. Depending on the structure of the kernel and the problem geometry, the resulting discrete integral operator is evaluated either directly or using Fast Fourier Transforms (FFTs). The temporal evolution operator is then advanced with a suitable time-integration scheme. The specific spatial discretizations, quadrature rules, convolution evaluations, and time integrators are chosen according to the structure of each benchmark problem and reported where necessary. During the forward solve, the neural kernel $K_{\boldsymbol{\theta}}$ is evaluated at the required lag points to assemble the discrete integral operator and propagate the solution in time. Let $\hat{u}(t_k,\mathbf{x}_i;K_{\boldsymbol{\theta}})$ denote the numerical solution obtained with the current neural kernel, evaluated at the observation points in $\mathcal{D}$. The data loss is defined as,
\begin{equation}
    \mathcal{L}_{\mathrm{data}}(\boldsymbol{\theta})
    =
    \frac{1}{|\mathcal{D}|}
    \sum_{(t_k,\mathbf{x}_i)\in\mathcal{D}}
    \left(
        \hat{u}(t_k,\mathbf{x}_i;K_{\boldsymbol{\theta}})
        -
        u(t_k,\mathbf{x}_i)
    \right)^2 .
    \label{eq:loss-method}
\end{equation}

For MC-KAN, the kernel constraints are enforced by construction through the Bernstein coefficient parameterization, so the training objective is simply $\mathcal{L}_{\mathrm{data}}$. For the Cheb-KAN baseline, the same data loss is combined with soft constraint penalties evaluated at a finite set of sampled lag points. Let $\{\boldsymbol{\tau}_j\}_{j=1}^{N_g}$ denote points in the lag domain used to evaluate constraint violations. Depending on the experiment, these points are selected either from a uniform grid or sampled randomly from the domain. We define,
\begin{equation}
    \mathcal{L}_{\mathrm{pos}}
    =
    \frac{1}{N_g}
    \sum_{j=1}^{N_g}
    \left[
        \max
        \bigl(
            0,
            -K_{\boldsymbol{\theta}}(\boldsymbol{\tau}_j)
        \bigr)
    \right]^2,
    \label{eq:cheb-pos-loss}
\end{equation}
\begin{equation}
    \mathcal{L}_{\mathrm{mono}}
    =
    \frac{1}{N_g d}
    \sum_{j=1}^{N_g}
    \sum_{i=1}^{d}
    \left[
        \max
        \left(
            0,
            \frac{\partial K_{\boldsymbol{\theta}}}
                 {\partial \tau_i}
            (\boldsymbol{\tau}_j)
        \right)
    \right]^2,
    \label{eq:cheb-mono-loss}
\end{equation}
\begin{equation}
    \mathcal{L}_{\mathrm{conv}}
    =
    \frac{1}{N_g d}
    \sum_{j=1}^{N_g}
    \sum_{i=1}^{d}
    \left[
        \max
        \left(
            0,
            -
            \frac{\partial^2 K_{\boldsymbol{\theta}}}
                 {\partial \tau_i^2}
            (\boldsymbol{\tau}_j)
        \right)
    \right]^2.
    \label{eq:cheb-conv-loss}
\end{equation}
Here, $\mathcal{L}_{\mathrm{pos}}$ penalizes negative kernel values, $\mathcal{L}_{\mathrm{mono}}$ penalizes positive first derivatives, and $\mathcal{L}_{\mathrm{conv}}$ penalizes negative second derivatives along each lag dimension. The full Cheb-KAN training loss is then,
\begin{equation}
    \mathcal{L}_{\mathrm{Cheb-KAN}}
    =
    \mathcal{L}_{\mathrm{data}}
    +
    \lambda
    \left(
    w_{\mathrm{pos}}
    \mathcal{L}_{\mathrm{pos}}
    +
    w_{\mathrm{mono}}
    \mathcal{L}_{\mathrm{mono}}
    +
    w_{\mathrm{conv}}
    \mathcal{L}_{\mathrm{conv}}
    \right),
    \label{eq:cheb-total-loss}
\end{equation}
where $\lambda$ controls the overall strength of the constraint regularization, and $w_{\mathrm{pos}}$, $w_{\mathrm{mono}}$, and $w_{\mathrm{conv}}$ determine the relative weights of the positivity, monotonicity, and convexity penalties, respectively. In all experiments, these three weights are set equal. The value of $\lambda$ is selected separately for each benchmark by a short calibration procedure: it is chosen so that, at the start of training, the total constraint penalty is of the same order of magnitude as the data loss. During training, the constraint loss is monitored to verify that it remains active and does not become negligible relative to the data loss. Gradients of the loss with respect to the trainable parameters $\boldsymbol{\theta}$ are computed by automatic differentiation~\cite{baydin2018automatic, paszke2019pytorch} through the discrete solver. Thus, the method follows a discretize-then-optimize strategy. Unless otherwise specified, we train the parameters using Adam~\cite{kingma2014adam} followed by L-BFGS with a strong Wolfe line search~\cite{liu1989limited,more1994line}.

Once training is complete, the neural kernel $K_{\boldsymbol{\theta}}$ provides a pointwise approximation of the underlying kernel over its domain. To extract a closed-form expression, we sample the learned kernel and apply symbolic regression via PySR~\cite{cranmer2023pysr} (see kernel recovery in Fig.~\ref{fig:fig1}), which searches over algebraic expressions to fit the sampled values. Alternative sparse-regression approaches over a predefined function
library, such as SINDy~\cite{brunton2016sindy}, could also be applied at this stage. The recovered expression yields a physically interpretable kernel that can be
compared directly with known analytical forms when available.

\section{Results and Discussion}\label{sec:results}

In this section, we evaluate the proposed framework on three kernel identification problems involving representative memory and nonlocal kernels commonly encountered in applications of IDEs. For each problem, the objective is to recover the unknown kernel from observations of the system response using only the governing equation and the available data. To compare hard and soft constraint enforcement, both MC-KAN and Cheb-KAN use the same number of hidden layers and approximately the same number of trainable parameters. All experiments are repeated over five independent random seeds to account for sensitivity to initialization. Unless otherwise stated, the reported results correspond to the mean performance across all runs, with standard deviations reported where relevant. To quantify predictive accuracy, we use the relative $L_2$ error,
\begin{equation}
    \mathcal{E}(\nu)
    =
    \frac{\|\hat{\nu}-\nu\|_2}
         {\|\nu\|_2},
    \label{eq:rel_error}
\end{equation}
where $\nu$ denotes the true quantity and $\hat{\nu}$ denotes its learned approximation. This metric is applied to both the recovered kernel $K$ and the predicted solution $u$ throughout all experiments. When the training observations are noisy, the ground truth for $\mathcal{E}(u)$ is the reference solution before the synthetic noise is added, evaluated at the observation points in $\mathcal{D}$.

\subsection{Experiment I: 1D Volterra IDE}
As a first experiment, we consider a scalar Volterra IDE describing the temporal evolution of a state variable $u(t)$ in a system with memory. Such equations arise in models of viscoelasticity, population dynamics with hereditary effects, and anomalous relaxation processes~\cite{brunner2017volterra, cushing2013integrodifferential}. The governing equation is,

\begin{equation}
    \frac{du(t)}{dt}
    +
    u(t)
    =
    \kappa \int_0^t
    K(t-s)\,
    u(s)\,ds,
    \qquad
    t\in(0,T],
    \label{eq:volterra}
\end{equation}
subject to the initial condition $u(0)=1$, and $\kappa>0$ controls the memory strength. Throughout this experiment, we set $\kappa=1.5$ and $T=5\,\mathrm{s}$. To generate reference data, we prescribe the memory kernel,
\begin{equation}
    K(t-s)
    =
    e^{-(t-s)}.
    \label{eq:exp_kernel}
\end{equation}

The reference data are generated by solving Eq.~\eqref{eq:volterra} using a forward Euler time-discretization scheme together with a trapezoidal approximation of the memory integral. A sufficiently small time step, $\Delta t=5\times10^{-3} \mathrm{s}$, is used for time integration, for which the numerical solution was observed to be stable and converged; see Fig.~\ref{fig:appendix-p1-convergence} in Appendix. The resulting solution is sampled at $N_t=1001$ uniformly spaced time points. 

In this experiment, we investigate the effect of measurement noise on kernel discovery. We take $501$ uniformly spaced samples from the reference solution and add independent Gaussian noise,
\begin{equation}
u^{\mathrm{obs}}(t_i)
=
u(t_i)
+
\sigma \,\varepsilon_i,
\qquad
\varepsilon_i \sim \mathcal{N}(0,1),
\end{equation}
where $\sigma$ is the noise standard deviation. The resulting training
dataset is
$
\mathcal{D}
=
\bigl\{\,\bigl(t_i,\,u^{\mathrm{obs}}(t_i)\bigr)\,\bigr\}_{i=1}^{501}.
$

The unknown kernel is represented by either MC-KAN or Cheb-KAN. In Cheb-KAN, positivity, monotonic decrease, and convexity are encouraged through equally weighted soft penalty terms with $\lambda=10^{-1}$. For both models, the inputs are linearly normalized. Following the original Cheb-KAN formulation~\cite{ss2024chebyshev}, we use cubic Chebyshev edge functions ($M=3$). For MC-KAN, the Bernstein degree is fixed at $N=6$, which was selected empirically to provide sufficient approximation accuracy while keeping the number of trainable parameters moderate. During training, the differentiable solver uses the same time step and numerical discretization as the reference solver, advancing on the full temporal grid ($N_t = 1001$). At each optimization step, we compute the solution on this grid and evaluate the loss only at the $501$ observation times in $\mathcal{D}$. For evaluation, the learned kernel is sampled on the same 501-point temporal grid used for the training observations, and the kernel reconstruction error is computed on this common grid. Complete numerical, architectural, and optimization settings are summarized in Table~\ref{tab:exp1-config}. For each method, we evaluate three architectures of increasing depth (shallow, medium, and deep), with widths chosen to match parameter counts across the two methods, so that the comparison is not tied to a single network choice.

Table~\ref{tab:p1_tab1} reports the solution and kernel reconstruction errors for all six network configurations at each noise level. Both MC-KAN and Cheb-KAN successfully identify the unknown kernel across all architectures and noise levels. Even at $\sigma=0.15$, both methods recover the kernel with relative errors below $13\%$. The solution reconstruction errors are nearly identical for the two methods, increasing from almost zero in the noise-free case to below $2\%$ at the highest noise level. Among the six network configurations, however, the lowest kernel reconstruction error at every noise level is achieved by MC-KAN. For clarity, Fig.~\ref{fig:p1-fig1} compares the two methods using the common architecture depth that produces the lowest mean kernel reconstruction error among all six configurations at each noise level. In other words, the shallow, medium, or deep architecture associated with the lowest kernel error is identified from Table~\ref{tab:p1_tab1}, and both MC-KAN and Cheb-KAN are then evaluated using that same depth. Relative to Cheb-KAN, MC-KAN reduces $\mathcal{E}(K)$ by approximately $54\%$ in the noise-free case ($0.85\%$ versus $1.83\%$), $32\%$ at $\sigma=0.05$ ($4.47\%$ versus $6.62\%$), $32\%$ at $\sigma=0.10$ ($6.83\%$ versus $10.05\%$), and $13\%$ at $\sigma=0.15$ ($8.78\%$ versus $10.09\%$). Figures~\ref{fig:p1-fig1}(b) and (c) show that both methods accurately reconstruct the solution at low and high noise levels. 

\begin{table}[h!]
\centering
\caption{Results for Experiment I, showing the solution and kernel reconstruction errors under additive Gaussian noise across three network depths ($L=1,2,3$ hidden layers). MC-KAN uses polynomial degree $N=6$ with architectures $[1,4,1]$, $[1,4,4,1]$, and $[1,4,4,4,1]$ (64, 192, and 320 parameters, respectively), while Cheb-KAN uses $M=3$ with $[1,9,1]$, $[1,6,6,1]$, and $[1,6,6,6,1]$ (72, 192, and 336 parameters). Architectures are chosen to match parameter counts across the two models. Cheb-KAN uses soft-penalty weight $\lambda=10^{-1}$. Each entry reports the relative $L_2$ errors (\%) averaged over five random seeds ($\pm$ one standard deviation). Bold indicates the lowest kernel reconstruction error $\mathcal{E}(K)$ among all six configurations at each noise level.}
\label{tab:p1_tab1}

\small
\setlength{\tabcolsep}{5pt}
\begin{tabular}{llcccccc}
\toprule
 & & \multicolumn{2}{c}{Shallow ($L=1$)} & \multicolumn{2}{c}{Medium ($L=2$)} & \multicolumn{2}{c}{Deep ($L=3$)} \\
\cmidrule(lr){3-4}\cmidrule(lr){5-6}\cmidrule(lr){7-8}
Noise & Error
 & \makecell{MC-KAN} & \makecell{Cheb-KAN}
 & \makecell{MC-KAN} & \makecell{Cheb-KAN}
 & \makecell{MC-KAN} & \makecell{Cheb-KAN} \\
\midrule

\multirow{2}{*}{$0.00$}
 & $\mathcal{E}(u)$
 & $0.13 \pm 0.16$
 & $0.08 \pm 0.06$
 & $0.02 \pm 0.01$
 & $0.05 \pm 0.03$
 & $0.07 \pm 0.10$
 & $0.04 \pm 0.03$ \\
 & $\mathcal{E}(K)$
 & $2.77 \pm 3.11$
 & $2.50 \pm 1.35$
 & $\mathbf{0.85 \pm 0.28}$
 & $1.83 \pm 1.09$
 & $2.27 \pm 1.79$
 & $1.95 \pm 0.72$ \\
\addlinespace

\multirow{2}{*}{$0.05$}
 & $\mathcal{E}(u)$
 & $0.66 \pm 0.20$
 & $0.65 \pm 0.16$
 & $0.64 \pm 0.17$
 & $0.67 \pm 0.14$
 & $0.64 \pm 0.16$
 & $0.68 \pm 0.15$ \\
 & $\mathcal{E}(K)$
 & $5.64 \pm 2.57$
 & $4.80 \pm 1.44$
 & $4.71 \pm 2.40$
 & $6.72 \pm 1.81$
 & $\mathbf{4.47 \pm 2.19}$
 & $6.62 \pm 1.99$ \\
\addlinespace

\multirow{2}{*}{$0.10$}
 & $\mathcal{E}(u)$
 & $1.25 \pm 0.35$
 & $1.27 \pm 0.33$
 & $1.25 \pm 0.32$
 & $1.31 \pm 0.28$
 & $1.24 \pm 0.33$
 & $1.31 \pm 0.27$ \\
 & $\mathcal{E}(K)$
 & $7.19 \pm 3.01$
 & $7.26 \pm 1.78$
 & $\mathbf{6.83 \pm 2.96}$
 & $10.05 \pm 3.03$
 & $6.97 \pm 3.02$
 & $11.17 \pm 4.54$ \\
\addlinespace

\multirow{2}{*}{$0.15$}
 & $\mathcal{E}(u)$
 & $1.87 \pm 0.50$
 & $1.89 \pm 0.49$
 & $1.86 \pm 0.49$
 & $1.92 \pm 0.41$
 & $1.85 \pm 0.49$
 & $1.90 \pm 0.48$ \\
 & $\mathcal{E}(K)$
 & $\mathbf{8.78 \pm 3.44}$
 & $10.09 \pm 2.19$
 & $9.04 \pm 3.35$
 & $12.66 \pm 4.03$
 & $9.97 \pm 3.11$
 & $11.64 \pm 2.92$ \\
\bottomrule
\end{tabular}
\end{table}

\begin{figure}[h!]
    \centering
    \includegraphics[width=1\linewidth]{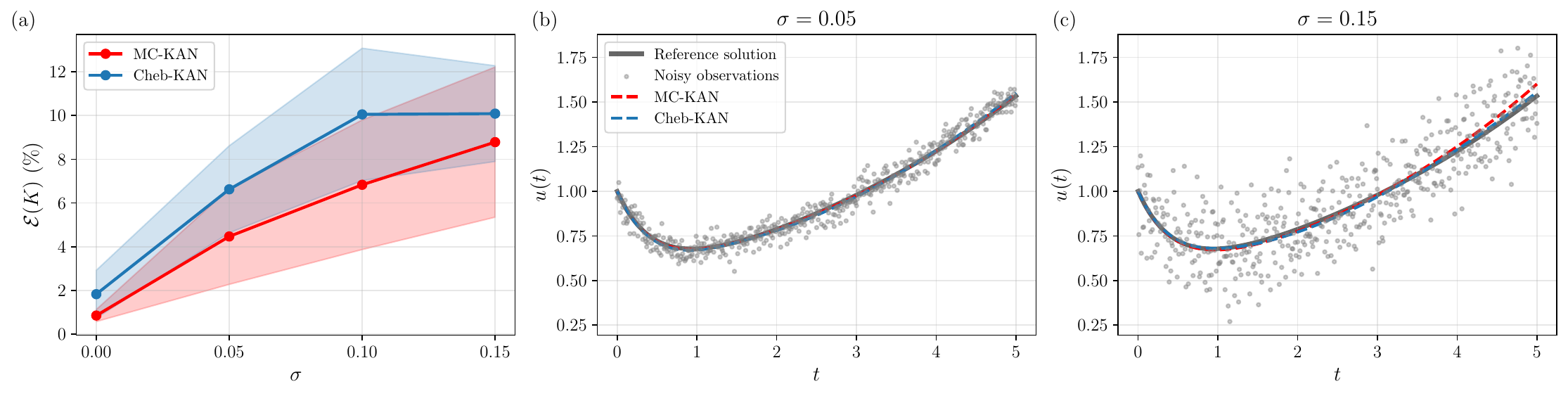}
    \caption{(a) Relative kernel reconstruction error $\mathcal{E}(K)$ (\%) versus noise level for MC-KAN and Cheb-KAN in Experiment~I. At each noise level, the architecture with the lowest mean kernel reconstruction error among all six configurations is selected, and both methods are evaluated using that architecture. Markers denote the mean over five random seeds, while the shaded regions represent $\pm$ one standard deviation. (b)--(c) Solution $u(t)$ for $\sigma=0.05$ and $\sigma=0.15$, respectively. The black solid line denotes the reference solution, gray markers indicate the noisy observations, and the dashed curves show the predictions of MC-KAN(red) and Cheb-KAN(blue).}
    \label{fig:p1-fig1}
\end{figure}

Figure~\ref{fig:p1-fig2} presents the recovered kernels as the pointwise mean over the five random seeds, with the shaded band indicating $\pm$ one standard deviation. The figure corresponds to the same architecture selection used in Fig.~\ref{fig:p1-fig1}. At all noise levels, both methods closely follow the reference exponential kernel. The hard-constrained MC-KAN remains positive, monotonically decreasing, and convex by construction. In the pointwise kernel bands, Cheb-KAN shows a wider spread near the origin and through the middle of the domain, particularly at higher noise levels. We note that the seed-to-seed standard deviation of the error $\mathcal{E}(K)$ in Table~\ref{tab:p1_tab1} is comparable for the two methods in this experiment, so the reduced variability of the hard-constrained parameterization is visible in the pointwise reconstructions rather than in the $\mathcal{E}(K)$ error. The corresponding predicted solution trajectories and optimization loss histories are provided in Appendix (Figures~\ref{fig:appendix-p1-fig1}--\ref{fig:appendix-p1-fig2}).

\begin{figure}[h!]
    \centering
    \includegraphics[width=1\linewidth]{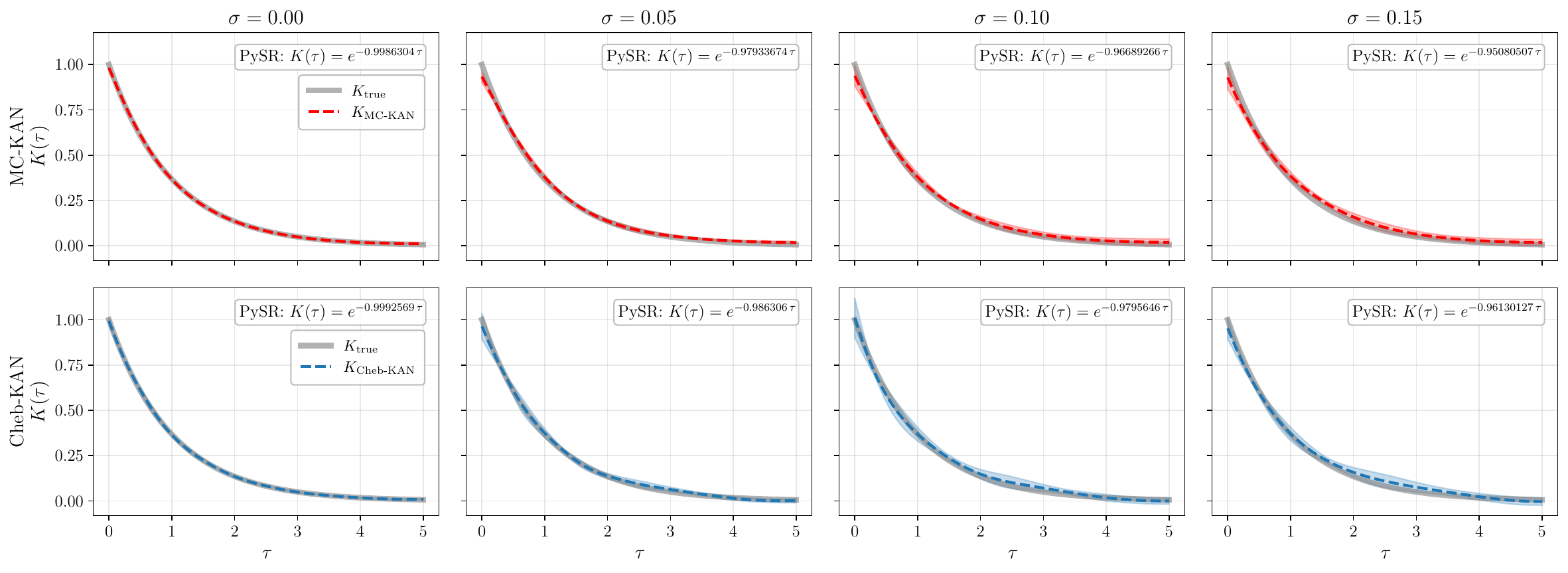}
    \caption{Recovered kernels in Experiment I under increasing noise, showing the best-performing architecture (lowest $\mathcal{E}(K)$) at each noise level. Rows correspond to MC-KAN (top) and Cheb-KAN (bottom), and columns correspond to noise levels $0.00$, $0.05$, $0.10$, $0.15$ (left to right). Each panel shows the true kernel (gray), the pointwise mean reconstruction over five seeds (dashed; MC-KAN in red, Cheb-KAN in blue), and a shaded $\pm$ one standard deviation band. Annotations give the closed-form expression recovered by PySR applied to the mean kernel.}
    \label{fig:p1-fig2}
\end{figure}

To obtain an interpretable closed-form representation, we apply PySR to the mean recovered kernel of each configuration, restricting the operator set to primitives expected for relaxation kernels. The resulting expressions are annotated in Fig.~\ref{fig:p1-fig2}. In all cases, both MC-KAN and Cheb-KAN recover the exponential functional form $ K(\tau)=e^{-r\tau}$, with the learned decay rate remaining close to the true value. At low noise, the recovered rate is approximately $0.99$ for both methods, decreasing to roughly $0.95$--$0.96$ at $\sigma=0.15$. Thus, although MC-KAN provides more accurate numerical kernel reconstructions, both KAN parameterizations successfully identify the correct analytical constitutive law after symbolic regression. 

\subsection{Experiment II: 1D Viscoelastic Wave PIDE}
As a second experiment, we consider the free vibration of a viscoelastic rod, a setting representative of polymer rods and viscoelastic structural elements undergoing free oscillation~\cite{mainardi2022fractional}. We take a homogeneous rod of length $L$, constant mass density $\rho$, and uniform cross-section, fixed at both ends. Neglecting body forces, the balance of linear momentum along the rod axis gives,
\begin{equation}
    \frac{\partial}{\partial x}\sigma(x,t)
    =
    \rho
    \frac{\partial^2}{\partial t^2}u(x,t),
    \label{eq:motion}
\end{equation}
where $u(x,t)$ is the displacement field and $\sigma(x,t)$ the axial
stress. Deformation is measured by the axial strain,
\begin{equation}
    \epsilon(x,t)
    =
    \frac{\partial}{\partial x}u(x,t),
    \label{eq:kinematic}
\end{equation}
and the stress is related to the strain history through the Boltzmann superposition principle, which encodes the fading memory of the viscoelastic material,
\begin{equation}
    \sigma(x,t)
    =
    \epsilon(x,0^+)\,K(t)
    +
    \int_0^t
    K(t-s)\,
    \frac{\partial}{\partial s}
    \epsilon(x,s)\,ds ,
    \label{eq:boltzmann}
\end{equation}
where $K$ is the relaxation kernel and the first term
accounts for the instantaneous elastic response at $t=0^+$. Substituting the constitutive law Eq.~\eqref{eq:boltzmann} and the kinematic relation Eq.~\eqref{eq:kinematic} into the momentum balance Eq.~\eqref{eq:motion}, with the term $\epsilon(x,0^+)K(t)$ omitted because of the zero-displacement initial condition, yields the following PIDE for the displacement,
\begin{equation}
    \rho
    \frac{\partial^2 u}{\partial t^2}(x,t)
    =
    \int_0^t
    K(t-s)\,
    \frac{\partial^3 u}
         {\partial x^2 \partial s}(x,s)\,ds ,
    \qquad (x,t)\in(0,L)\times(0,T],
    \label{eq:pide}
\end{equation}
in which the memory kernel $K(t-s)$ couples the current acceleration to the entire strain-rate history. In this experiment, we set the rod length to $L=1~\mathrm{m}$ and the final time to $T=5~\mathrm{s}$. The rod is fixed at both ends,
\begin{equation}
    u(0,t)=0,
    \qquad
    u(L,t)=0,
    \qquad
    t>0,
    \label{eq:rod-bc}
\end{equation}
and starts from zero displacement with an initial velocity prescribed in the fundamental mode,
\begin{equation}
    u(x,0)=0,
    \qquad
    \frac{\partial u}{\partial t}(x,0)
    =
    \sin\!\left(\frac{\pi x}{L}\right).
    \label{eq:rod-ic}
\end{equation}

To generate reference data, we prescribe the relaxation modulus using the Kohlrausch--Williams--Watts (KWW) stretched exponential kernel,

\begin{equation}
    K(t-s)
    =
    \exp\!\left[
    -\left(
    \frac{t-s}{\tau_0}
    \right)^{\beta}
    \right],
    \label{eq:kww}
\end{equation}
where $\tau_0>0$ is the characteristic relaxation time and $0<\beta\le1$ is the stretching exponent. In this experiment we set $\tau_0=1.5\,\mathrm{s}$ and $\beta=0.5$. The KWW kernel is widely used to model relaxation in glassy polymers, amorphous solids, and other disordered materials, where memory decay cannot be captured by a single exponential time scale~\cite{williams1970non,lindsey1980detailed,metzler2000random}. The exponent $\beta$ sets the degree of non-exponential decay: $\beta=1$ recovers the classical Maxwell exponential kernel, while $\beta<1$ corresponds to a broad distribution of relaxation times characteristic of heterogeneous materials. The choice $\beta=0.5$ therefore represents a more challenging identification problem than the exponential kernel of Experiment~I. 

The reference data are generated by solving Eq.~\eqref{eq:pide} using an implicit Newmark--$\beta$ time-integration scheme together with central finite differences in space and a trapezoidal approximation of the memory integral. The average-acceleration Newmark--$\beta$ scheme ($\gamma=1/2$, $\beta_N=1/4$) is used for time integration. This scheme is classically unconditionally stable for linear elastodynamic
systems, and the chosen spatial and temporal resolutions provide adequate accuracy for the present viscoelastic problem (see Fig.~\ref{fig:appendix-p2-convergence} in Appendix). The resulting displacement field is computed on a grid of $1001$ time steps and $200$ spatial points.

In this experiment, we aim to investigate the effect of data sparsity on kernel discovery. To do so, we construct training datasets by uniformly subsampling both the spatial and temporal dimensions. In all cases the spatial observations are restricted to $50$ uniformly spaced locations ($25\%$ of the spatial grid), while the number of temporal snapshots is varied according to $N_s \in \{101,\,51,\,21,\,11\}$. The resulting dataset is
$ \mathcal{D}
    =
    \left\{
    \bigl(
    t_k,\,
    \mathbf{x}_i,\,
    u(t_k,\mathbf{x}_i)
    \bigr)
    \right\},
$
where $\{\mathbf{x}_i\}_{i=1}^{50}$ denotes the retained spatial locations and $\{t_k\}_{k=1}^{N_s}$ denotes the retained temporal snapshots. The final training datasets contain $10.1\%$, $5.1\%$, $2.1\%$, and $1.1\%$ of the $1001$ reference time steps, corresponding to roughly $2.5\%$, $1.3\%$, $0.5\%$, and $0.3\%$ of the full spatiotemporal grid. No measurement noise is added in this experiment to isolate the effect of data sparsity from observational uncertainty.

As in Experiment~I, the unknown kernel is represented by either MC-KAN or Cheb-KAN. In Cheb-KAN, positivity, monotonic decrease, and convexity are encouraged through equally weighted soft penalty terms with $\lambda=3\times10^{-1}$. The two models use depth-matched architectures with approximately the same number of trainable parameters, and all inputs are linearly normalized. During training, the differentiable solver uses the same spatial grid, time step, and numerical discretization as the reference solver. At each optimization step, the predicted displacement field is computed on the full spatiotemporal grid, and the loss is evaluated only at the observation points in $\mathcal{D}$. Inside the solver, the relaxation kernel is normalized so that $K(0^+)=1$, enforced by dividing the network output by its value at
$\tau_{\min}=10^{-3}$. Fixing the instantaneous modulus keeps the implicit term of the memory operator independent of the kernel, so the effective system matrix is assembled and factored once and reused throughout the forward solve. This normalization is also physically motivated: $K(0^+)$ is the instantaneous glassy modulus, and since only the shape of the relaxation kernel is identifiable from displacement data, fixing it to unity removes the amplitude degeneracy without loss of generality. For evaluation, the learned kernel is sampled at 500 uniformly spaced temporal points, and the kernel reconstruction error is computed with respect to the reference KWW kernel on the same grid. Complete numerical, architectural, and optimization settings are summarized in Table~\ref{tab:exp2-config}.

\begin{table}[h!]
\centering
\small
\setlength{\tabcolsep}{5pt}
\caption{Results for Experiment II, showing the solution and kernel reconstruction errors under temporal sparsity (noise-free) for the 1D viscoelastic wave PIDE. Sparsity is set by the number of time snapshots $N_s$ in the loss, with spatial observations fixed at 50 points. MC-KAN uses polynomial degree $N=6$ with architecture $[1,4,4,1]$ (192 parameters), and Cheb-KAN uses $M=3$ with $[1,6,6,1]$ (192 parameters). Cheb-KAN uses soft-penalty weight $\lambda=3\times10^{-1}$. Each entry reports the relative $L_2$ errors (\%) averaged over five random seeds ($\pm$ one standard deviation). Bold indicates the lowest kernel reconstruction error $\mathcal{E}(K)$.}
\label{tab:p2_sparsity}
\begin{tabular}{llcc}
\toprule
Sparsity ($N_s$) & Error & MC-KAN & Cheb-KAN \\
\midrule
\multirow{2}{*}{$101$}
 & $\mathcal{E}(u)$ & $0.403 \pm 0.133$ & $0.384 \pm 0.077$ \\
 & $\mathcal{E}(K)$ & $\mathbf{1.233 \pm 0.089}$ & $2.141 \pm 0.166$ \\
\addlinespace

\multirow{2}{*}{$51$}
 & $\mathcal{E}(u)$ & $0.402 \pm 0.136$ & $0.444 \pm 0.080$ \\
 & $\mathcal{E}(K)$ & $\mathbf{1.233 \pm 0.089}$ & $2.056 \pm 0.209$ \\
\addlinespace
\multirow{2}{*}{$21$}
 & $\mathcal{E}(u)$ & $0.405 \pm 0.143$ & $0.589 \pm 0.373$ \\
 & $\mathcal{E}(K)$ & $\mathbf{1.234 \pm 0.093}$ & $2.254 \pm 0.338$ \\
\addlinespace
\multirow{2}{*}{$11$}
 & $\mathcal{E}(u)$ & $0.383 \pm 0.134$ & $0.378 \pm 0.156$ \\
 & $\mathcal{E}(K)$ & $\mathbf{1.204 \pm 0.086}$ & $1.938 \pm 0.413$ \\
\bottomrule
\end{tabular}
\end{table}

\begin{figure}[h!]
    \centering
    \includegraphics[width=1\linewidth]{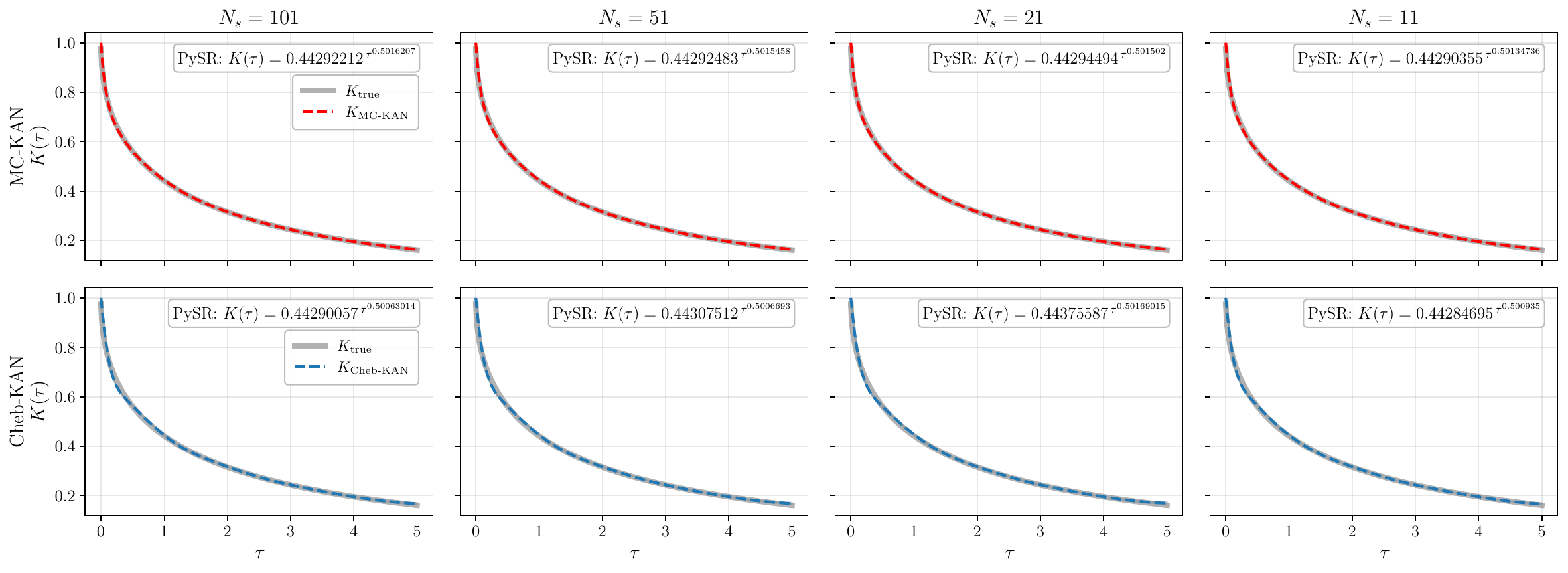}
    \caption{Recovered kernels in Experiment II under temporal sparsity (noise-free). Rows correspond to MC-KAN (top) and Cheb-KAN (bottom), and columns correspond to decreasing snapshot counts $N_s = 101, 51, 21, 11$ (left to right). Each panel shows the true kernel (gray), the pointwise mean reconstruction over five seeds (dashed; MC-KAN in red, Cheb-KAN in blue), and a shaded $\pm$ one standard deviation band. Annotations give the closed-form expression recovered by PySR applied to the mean kernel. }
    \label{fig:p2-fig1}
\end{figure}

\begin{table}[h!]
\centering
\caption{Results for Experiment II, showing the stretched-exponential parameters recovered from the MC-KAN and Cheb-KAN kernel reconstructions across sparsity levels. PySR is applied independently to each kernel reconstruction for all sparsity levels and random seeds. The direct PySR output has the form ($K(\tau)=a^{\tau^\beta}$), which is rewritten as a stretched exponential using ($\tau_0=\left(-\ln a\right)^{-1/\beta}$). Each entry reports the mean ($\pm$) one standard deviation over the five PySR fits corresponding to the five independent runs.}
\label{tab:p2_kww_params}
\small
\setlength{\tabcolsep}{5pt}
\begin{tabular}{llcccc}
\toprule
 & & \multicolumn{2}{c}{$\tau_0$ (true $=1.5$)} & \multicolumn{2}{c}{$\beta$ (true $=0.5$)} \\
\cmidrule(lr){3-4}\cmidrule(lr){5-6}
$N_s$ & Method & Recovered & Err.\ (\%) & Recovered & Err.\ (\%) \\
\midrule
\multirow{2}{*}{$101$}
 & MC-KAN   & $1.5059 \pm 0.0010$ & $0.39$ & $0.5016 \pm 0.0019$ & $0.32$ \\
 & Cheb-KAN & $1.5069 \pm 0.0025$ & $0.46$ & $0.5006 \pm 0.0012$ & $0.12$ \\
\addlinespace
\multirow{2}{*}{$51$}
 & MC-KAN   & $1.5060 \pm 0.0011$ & $0.40$ & $0.5016 \pm 0.0019$ & $0.32$ \\
 & Cheb-KAN & $1.5083 \pm 0.0020$ & $0.55$ & $0.5007 \pm 0.0012$ & $0.14$ \\
\addlinespace
\multirow{2}{*}{$21$}
 & MC-KAN   & $1.5062 \pm 0.0015$ & $0.41$ & $0.5015 \pm 0.0018$ & $0.30$ \\
 & Cheb-KAN & $1.5127 \pm 0.0106$ & $0.85$ & $0.5017 \pm 0.0016$ & $0.34$ \\
\addlinespace
\multirow{2}{*}{$11$}
 & MC-KAN   & $1.5061 \pm 0.0019$ & $0.41$ & $0.5014 \pm 0.0016$ & $0.28$ \\
 & Cheb-KAN & $1.5061 \pm 0.0034$ & $0.41$ & $0.5010 \pm 0.0011$ & $0.20$ \\
\bottomrule
\end{tabular}
\end{table}

Our results show that sparsity has little effect on kernel identification in this experiment. For each method the kernel error is nearly constant as the number of snapshots is reduced from $101$ to $11$ (Table~\ref{tab:p2_sparsity}), and the solution errors remain near $0.4\%$ throughout. The recovered KWW parameters are likewise stable, staying within $1\%$ of the true values at every sparsity level, as demonstrated in Table~\ref{tab:p2_kww_params}. The stretched-exponential kernel thus remains identifiable even when the temporal data barely resolve the dynamics. The corresponding sparse solution reconstruction errors and optimization loss histories are provided in Appendix (Figs.~\ref{fig:p2-appendix-fig1} and\ref{fig:p2-appendix-fig2}).

Although both methods are insensitive to temporal sparsity, MC-KAN produces lower kernel reconstruction errors than Cheb-KAN at every sparsity level, approximately $1.2\%$ compared with $2.0\%$, corresponding to a reduction of roughly $40\%$. In addition, the variability across random initializations is consistently smaller for MC-KAN. These trends are reflected in Fig.~\ref{fig:p2-fig1}, shown as the pointwise mean over seeds with a $\pm$ one standard deviation band. The MC-KAN reconstructions remain positive, monotone, and convex by construction, and they track the true kernel with a narrow band across all sparsity levels. The Cheb-KAN reconstructions also visually fit the true kernel well, but slight traces of oscillatory behavior are visible between $\tau=0$ and $\tau=1$. 

Despite the differences in reconstruction accuracy, symbolic regression discovers the same constitutive law for both KAN parameterizations. As shown in Fig.~\ref{fig:p2-fig1}, the recovered expression of the form $0.443^{\,\tau^{\beta}}$ can be rewritten as $e^{-(\tau/\tau_0)^{\beta}}$, and recalculating its parameters gives values of $\tau_0$ and $\beta$ reported in Table~\ref{tab:p2_kww_params}, in close agreement with the true $\tau_0=1.5$ and $\beta=0.5$. Across all sparsity levels, the recovered $\tau_0$ and $\beta$ deviate from their true values by less than $1\%$ for both methods.

\subsection{Experiment III: 2D Nonlocal Reaction-Diffusion Equation}

As a third experiment, we consider a two-dimensional nonlocal reaction-diffusion equation. Unlike Experiments~I and~II, which focus on temporal memory kernels, this problem involves the discovery of a
spatial interaction kernel governing nonlocal interactions between points in the domain. This model arises in phase separation, pattern formation, and collective dynamics, where the evolution at a given location depends on the surrounding state through a spatial convolution operator~\cite{bates1997traveling, bates2009numerical}. We study an attractive, distance-decaying interaction~\cite{topaz2006nonlocal, fetecau2011swarm}, for which the shape constraints are exact properties of the kernel. The experiment tests whether enforcing them by construction is advantageous when the true kernel belongs to that class.
 
The governing equation is,

\begin{equation}
    \frac{\partial u}{\partial t}(\mathbf{x},t)
    =
    D\nabla^2 u(\mathbf{x},t)
    +
    \alpha
    \int_{\Omega}
    K(x-\xi,y-\eta)
    \bigl(
    u(\xi,\eta,t)-u(x,y,t)
    \bigr)
    \,d\xi\,d\eta
    +
    u(\mathbf{x},t)
    -
    u^3(\mathbf{x},t),
    \label{eq:nonlocal-ac}
\end{equation}
for $(\mathbf{x},t)\in\Omega\times(0,T]$, where $\Omega\subset\mathbb{R}^2$, $\mathbf{x}=(x,y)$, $u(\mathbf{x},t)$ is the state variable, $D>0$ is the diffusion coefficient, $\alpha>0$ controls the strength of the nonlocal interactions, and $K(x-\xi,y-\eta)$ is the unknown spatial interaction kernel. Throughout this experiment, we set $D=0.001$, $\alpha=0.12$, $L_x=L_y=10$, and the final time to $T=6~\mathrm{s}$. Periodic boundary conditions are imposed in both spatial directions, and the initial condition consists of a smooth random perturbation of the homogeneous state (\ref{appendix:p3-ic}). To generate reference data, we prescribe the interaction kernel,
\begin{equation}
    K(x-\xi,y-\eta)
    =
    \exp\!\Bigg(
    -
    \left|
    \frac{x-\xi}{\tau_x}
    \right|^{\beta_x}
    -
    \left|
    \frac{y-\eta}{\tau_y}
    \right|^{\beta_y}
    -
    \delta
    \left|
    \frac{x-\xi}{\tau_x}
    \right|^{\gamma_x}
    \left|
    \frac{y-\eta}{\tau_y}
    \right|^{\gamma_y}
    \Bigg),
    \label{eq:spatial_kernel}
\end{equation}
where $(x,y)$ denotes the target location and
$(\xi,\eta)$ denotes the source location contributing to the nonlocal interaction. Since the problem is on a periodic domain, the kernel is represented numerically as a function of the periodic displacement ($x-\xi,y-\eta$) sampled on the periodic displacement grid. The resulting nonlocal convolution operator is evaluated as a circular convolution using the FFT. The parameters $\tau_x$ and $\tau_y$ determine the characteristic interaction lengths in the horizontal and vertical directions, respectively, while $\beta_x$ and $\beta_y$ control the decay rate of the kernel. The mixed term weighted by $\delta$ introduces anisotropic coupling between the two spatial directions. Throughout this experiment, we use,
\[
\tau_x=3.0,
\qquad
\tau_y=1.0,
\qquad
\beta_x=0.4,
\qquad
\beta_y=0.8,
\]
\[
\gamma_x=0.5,
\qquad
\gamma_y=0.5,
\qquad
\delta=0.5.
\]

We selected these parameters deliberately to introduce anisotropy through differing interaction lengths ($\tau_x \neq \tau_y$) and decay rates ($\beta_x \neq \beta_y$). The resulting kernel is therefore substantially more challenging to identify than an isotropic one, providing a more demanding benchmark for multidimensional kernel recovery.

The reference data are generated by solving Eq.~\eqref{eq:nonlocal-ac} on a periodic domain using a Fourier spectral discretization in space together with a semi-implicit time-stepping scheme (IMEX). The nonlocal interaction integral is evaluated as a circular convolution on the periodic domain using the FFT, reducing the computational complexity of the interaction term (see Fig.~\ref{fig:appendix-p3-convergence}). The solution is computed on a $256\times256$ spatial grid at $121$ time points. In this experiment, we aim to mimic realistic measurements by introducing both sparsity and additive noise. The full-resolution solution is subsampled in space and time. Observations are restricted to a $32\times32$ spatial grid and taken at the $7$ time points, $ t =\{0,\,1,\,2,\,3,\,4,\,5,\,6\},$ giving the dataset 
$
\mathcal{D}
=
\left\{
\bigl(
t_k,\,
x_i,\,
y_j,\,
u(t_k,x_i,y_j)
\bigr)
\right\},
k=1,\ldots,7, i,j=1,\ldots,32 .
$
Additive Gaussian noise is then applied,
\begin{equation}
u^{\mathrm{obs}}
=
u
+
\sigma\varepsilon,
\qquad
\varepsilon
\sim
\mathcal{N}(0,1),
\end{equation}
where $\sigma\in\{0.00,0.02,0.04,0.06,0.08,0.10,0.15\}$. As in the previous experiments, we consider two KAN parameterizations for the unknown kernel: the hard-constrained MC-KAN and the soft-constrained Cheb-KAN. In Cheb-KAN, positivity, monotonic decrease, and convexity are encouraged through equally weighted soft penalty terms with $\lambda=10^{-2}$. The two models use depth-matched architectures with approximately the same number of trainable parameters. Here, both KANs take two inputs corresponding to the spatial displacements in the horizontal and vertical directions. Because the interaction kernel is even in each coordinate, the networks are parameterized in terms of the absolute displacements $\tau_1=|x-\xi|$ and $\tau_2=|y-\eta|$. For MC-KAN, the shape constraints are enforced independently along each lag direction. For both models, the inputs are linearly normalized. During training, the differentiable solver uses the same spatial grid, time step, and numerical discretization as the reference solver. At each optimization step, the full solution field is computed on the $256\times256$ grid and evaluated against the observations in $\mathcal{D}$. The loss is the MSE over the spatial observation points at each time snapshot, averaged over the snapshots. For evaluation, the learned spatial kernel is sampled on the same $256\times256$ lag grid as the reference kernel, and the kernel reconstruction error is computed pointwise on this common grid. Complete numerical, architectural, and optimization settings are summarized in Table~\ref{tab:exp3-config}.

Table~\ref{tab:p3-tab1} reports the solution and kernel reconstruction errors across noise levels for the linear normalization. In the noise-free case, Cheb-KAN is marginally lower ($2.64\%$ versus $3.64\%$). However, at every noisy level ($\sigma \ge 0.02$), MC-KAN achieves a lower kernel error $\mathcal{E}(K)$ than Cheb-KAN, and the gap widens with noise, reaching nearly a factor of two at $\sigma = 0.15$ ($12.36\%$ versus $21.95\%$). The solution error $\mathcal{E}(u)$ remains small for both
methods and increases smoothly with noise. The two methods are comparable at low noise, with MC-KAN increasingly favored as noise grows. The solution error is an order of magnitude smaller than the kernel error,  because the forward solution is less sensitive to small variations in the kernel than the inverse problem itself. The corresponding final-time solution reconstruction errors are shown in Appendix (Fig.~\ref{fig:p3-appendix-fig1}).

\begin{table}[h!]
\centering
\small
\caption{Results for Experiment III, showing the solution and kernel reconstruction errors for the 2D nonlocal reaction-diffusion problem under additive Gaussian noise, with $7$ snapshots on a $32\times32$ spatial grid. MC-KAN uses polynomial degree $N=6$ with architecture $[2,4,4,1]$ (224 parameters), and Cheb-KAN uses $M=3$ with $[2,6,6,1]$ (216 parameters). Cheb-KAN uses soft-penalty weight $\lambda=10^{-2}$. Both models use linear input normalization. Each entry reports the relative $L_2$ errors (\%) averaged over five noise realizations (fixed initialization; $\sigma=0.00$ is a single deterministic run), and $\pm$ is one standard deviation. Bold marks the lower $\mathcal{E}(K)$.}
\label{tab:p3-tab1}
\resizebox{\textwidth}{!}{%
\begin{tabular}{llccccccc}
\toprule
Error & Method & $\sigma=0.00$ & $\sigma=0.02$ & $\sigma=0.04$ & $\sigma=0.06$ & $\sigma=0.08$ & $\sigma=0.10$ & $\sigma=0.15$ \\
\midrule
\multirow{2}{*}{$\mathcal{E}(u)$}
& MC-KAN (Linear)
& $0.143$
& $0.308 \pm 0.060$
& $0.455 \pm 0.095$
& $0.630 \pm 0.122$
& $0.776 \pm 0.160$
& $0.925 \pm 0.205$
& $1.31 \pm 0.226$ \\
& Cheb-KAN
& $0.057$
& $0.403 \pm 0.036$
& $0.693 \pm 0.067$
& $0.978 \pm 0.093$
& $1.27 \pm 0.152$
& $1.54 \pm 0.161$
& $2.16 \pm 0.248$ \\
\addlinespace
\multirow{2}{*}{$\mathcal{E}(K)$}
& MC-KAN (Linear)
& $3.64$
& $\mathbf{5.29 \pm 0.78}$
& $\mathbf{6.40 \pm 1.43}$
& $\mathbf{7.70 \pm 2.41}$
& $\mathbf{8.81 \pm 3.14}$
& $\mathbf{9.76 \pm 3.72}$
& $\mathbf{12.36 \pm 4.11}$ \\
& Cheb-KAN
& $\mathbf{2.64}$
& $6.48 \pm 1.30$
& $8.87 \pm 1.98$
& $11.28 \pm 2.51$
& $14.36 \pm 2.82$
& $16.38 \pm 3.34$
& $21.95 \pm 4.08$ \\
\bottomrule
\end{tabular}%
}
\end{table}

Figure~\ref{fig:p3-fig1} shows the true kernel together with the pointwise absolute error of each method at $\sigma=0.00$ and $\sigma=0.08$. At zero noise both
methods reconstruct the kernel accurately, with errors confined to a narrow region near the origin. Under noise the difference between the two becomes pronounced:
the MC-KAN error remains concentrated near the axes, whereas the Cheb-KAN error spreads into broad off-axis regions where the soft penalty fails to enforce the
correct decay. In both methods, however, and even at zero noise, the error is concentrated along the lines $\tau_1=0$ and $\tau_2=0$, where the kernel exhibits
a sharp gradient near the origin. During training, the origin is excluded to represent general kernels that may exhibit singular behavior at $\tau=0$.

\begin{figure}[h!]
    \centering
    \includegraphics[width=0.8\linewidth]{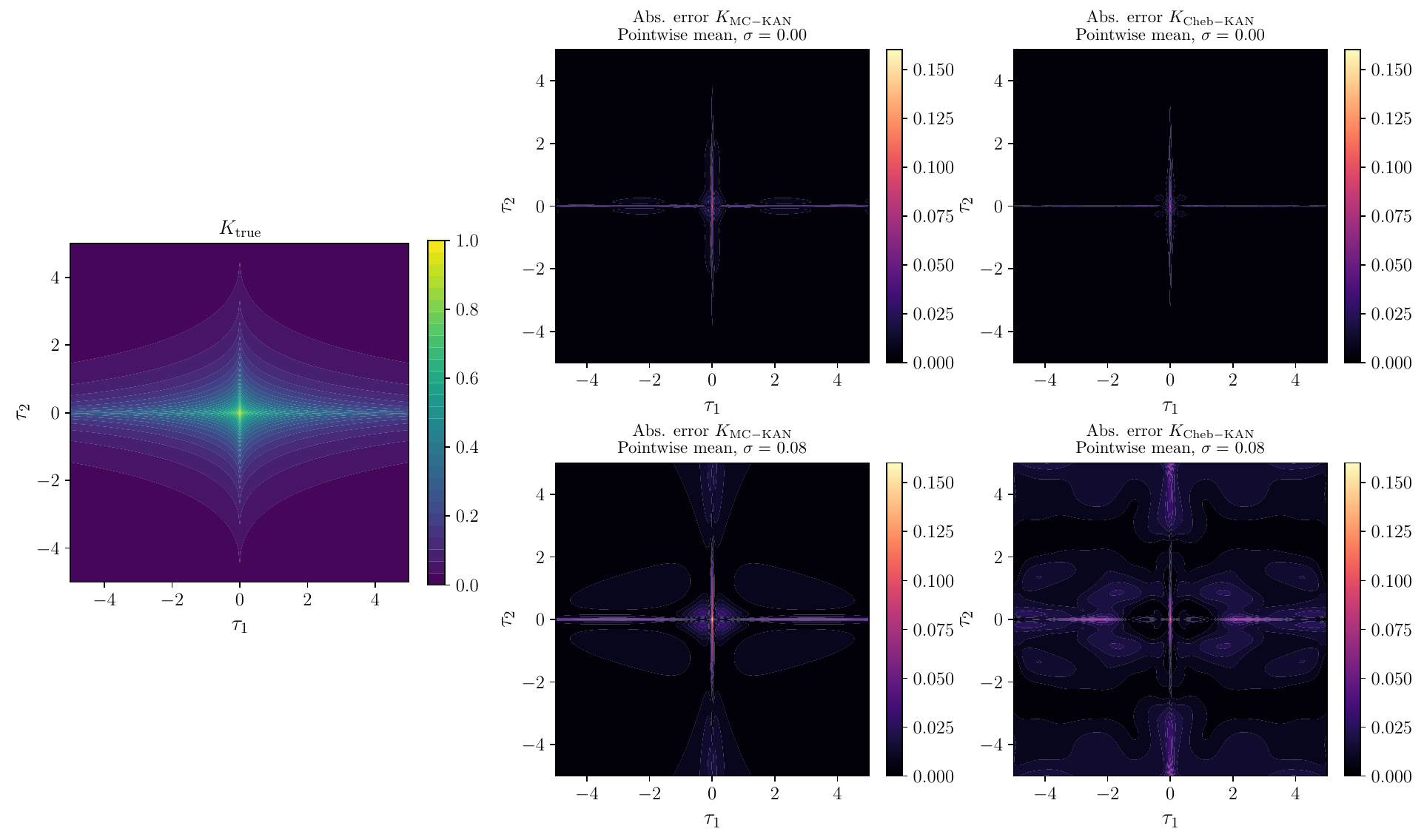}
    \caption{Results for Experiment III, showing the true kernel $K_{\mathrm{true}}$ (left) and the pointwise absolute error of the reconstructed kernel for MC-KAN and Cheb-KAN at $\sigma=0.00$ (top) and $\sigma=0.08$ (bottom). Error maps are averaged over five noise realizations with fixed initialization ($\sigma=0.00$ is a single deterministic run).}
    \label{fig:p3-fig1}
\end{figure}

To examine this behavior, in Fig.~\ref{fig:p3-fig2} we plot the recovered kernel along the two axis slices, $K(\tau_1,\tau_2{\approx}0)$ and $K(\tau_1{\approx}0,\tau_2)$. For consistency with the training setup, the one-dimensional slices are evaluated at $\tau_2=10^{-3}$ and $\tau_1=10^{-3}$, respectively. The true kernel rises steeply toward the origin, a consequence of the stretched exponents $\beta_i<1$, and neither method fully resolves this sharp gradient with
the linear input normalization. MC-KAN follows the true profile more closely than Cheb-KAN, but it too underestimates the peak near $\tau_i\approx0$. 

\begin{figure}[h!]
    \centering
    \includegraphics[width=0.7\linewidth]{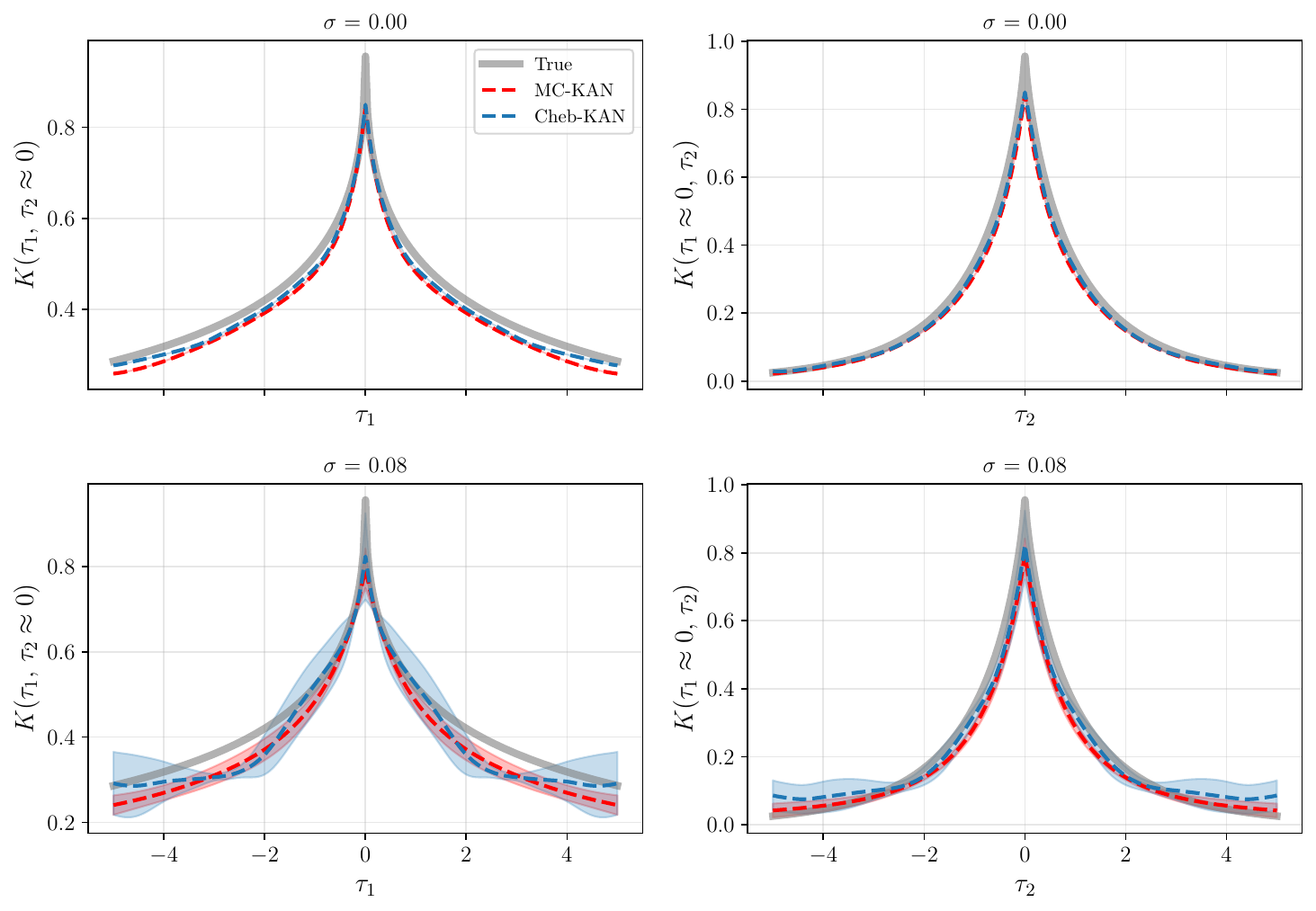}
    \caption{Results for Experiment III, showing the recovered kernel along the axis slices $K(\tau_1,\tau_2{\approx}0)$ (left) and $K(\tau_1{\approx}0,\tau_2)$ (right) at $\sigma=0.00$ (top) and $\sigma=0.08$ (bottom).  Each panel shows the true kernel (gray), the pointwise mean reconstruction over five noise realizations with fixed initialization (dashed; MC-KAN in red, Cheb-KAN in blue), and a shaded $\pm$ one standard deviation band. Both models use linear input normalization, and the training dataset consist of 7 temporal snapshots on a $32\times32$ spatial grid.}
    \label{fig:p3-fig2}
\end{figure}

This difficulty motivates a change of input representation. The linear normalization maps the lag uniformly, allocating limited resolution to the
near-origin region where the kernel varies most rapidly. We therefore replace it with a power-law normalization that concentrates resolution near the origin. Figure~\ref{fig:p3-fig3}(a) compares the kernel error across noise levels for the linear normalization, a learnable power-law exponent, and a fixed exponent. The learnable variant improves the reconstruction at low noise, where the steep gradient can be resolved, but its advantage diminishes as noise increases and the exponent becomes harder to identify reliably. We therefore first optimize the exponent in the noise-free setting, obtaining $\alpha_1=\alpha_2=0.75$, and then fix this value for all subsequent noisy experiments. This fixed normalization retains most of the benefit of the learnable normalization while providing a single normalization that remains stable across all noise levels.

\begin{figure}[h!]
    \centering
    \includegraphics[width=1\linewidth]{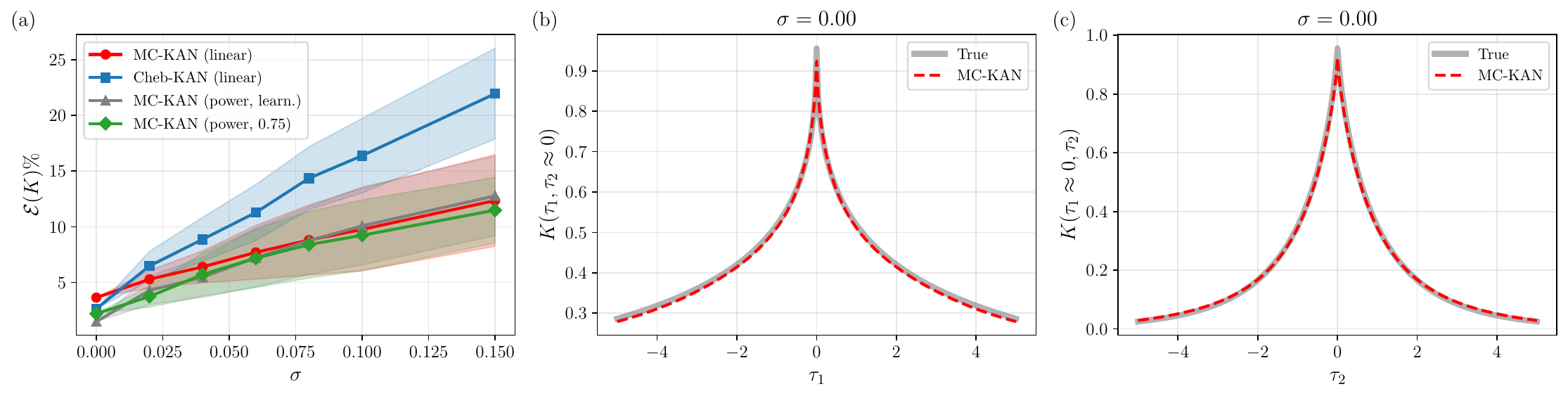}
    \caption{(a) Relative kernel error $\mathcal{E}(K)$ versus noise for three MC-KAN input normalizations (linear, learnable power-law, and fixed $\alpha_1=\alpha_2=0.75$) and the Cheb-KAN baseline. The training data consist of seven temporal snapshots on a $32\times32$ spatial grid. Markers denote the mean over five noise realizations with fixed initialization, and shaded bands indicate $\pm$ one standard deviation. (b)--(c) Recovered kernel along the axis slices $K(\tau_1,\tau_2{\approx}0)$ and $K(\tau_1{\approx}0,\tau_2)$, respectively, for the model trained using $25$ temporal snapshots on a $256\times256$ spatial grid with learnable power-law input normalization at $\sigma=0.00$.}
    \label{fig:p3-fig3}
\end{figure}

Finally, we increase the temporal sampling from $7$ to $25$ snapshots, holding the spatial sparsity fixed, to assess how much of the remaining error is data-limited. Table~\ref{tab:p3-tab2} reports the results for the fixed power-law MC-KAN against the Cheb-KAN baseline. The corresponding optimization loss histories are provided in Appendix (Fig.~\ref{fig:p3-appendix-fig2}). The additional data reduce the kernel error substantially for both methods, and MC-KAN again achieves the lower error at every noise level. For example, at $\sigma=0.15$ its kernel error falls to $8.03\%$, roughly half that of Cheb-KAN ($14.55\%$) and well below the $12.36\%$ obtained with seven snapshots. Despite the increased temporal resolution, a residual error remains near the sharp central peak (Fig.~\ref{fig:p3-fig4}), likely because temporal information alone is insufficient to fully resolve this localized feature when the observations remain spatially sparse. To determine whether this residual error is intrinsic to the inverse problem or a consequence of the data sparsity, we repeat the experiment using dense $256\times256$ spatial observations with $25$ temporal snapshots. As shown in Fig.~\ref{fig:p3-fig3}(b)--(c), the reconstructed kernel now closely matches the true peak, meaning that the previous mismatch was primarily caused by the limited spatial sampling rather than by the representational capacity of the network or non-identifiability of the inverse problem. The dense observations provide sufficient information to resolve the steep spatial gradients near the origin.

\begin{table}[h!]
\centering
\small
\caption{Results for Experiment III, showing the solution and kernel reconstruction errors for the 2D nonlocal reaction-diffusion problem under additive Gaussian noise, with $25$ snapshots on a $32\times32$ spatial grid. MC-KAN uses polynomial degree $N=6$ with architecture $[2,4,4,1]$ (224 parameters), and Cheb-KAN uses $M=3$ with $[2,6,6,1]$ (216 parameters). Cheb-KAN uses soft-penalty weight $\lambda=10^{-2}$. MC-KAN uses power-fixed normalization with exponent $0.75$, and Cheb-KAN uses linear input normalization. Each entry reports the relative $L_2$ errors (\%) averaged over five noise realizations (fixed initialization; $\sigma=0.00$ is a single deterministic run), and $\pm$ is one standard deviation. Bold marks the lower $\mathcal{E}(K)$.}
\label{tab:p3-tab2}
\resizebox{\textwidth}{!}{%
\begin{tabular}{llccccccc}
\toprule
Error & Method & $\sigma=0.00$ & $\sigma=0.02$ & $\sigma=0.04$ & $\sigma=0.06$ & $\sigma=0.08$ & $\sigma=0.10$ & $\sigma=0.15$ \\
\midrule
\multirow{2}{*}{$\mathcal{E}(u)$}
& MC-KAN (Power, $0.75$)
& $0.0941$
& $0.176 \pm 0.0332$
& $0.296 \pm 0.0649$
& $0.413 \pm 0.0891$
& $0.531 \pm 0.112$
& $0.652 \pm 0.139$
& $0.937 \pm 0.184$ \\
& Cheb-KAN
& $0.0596$
& $0.279 \pm 0.0346$
& $0.468 \pm 0.0589$
& $0.651 \pm 0.101$
& $0.823 \pm 0.132$
& $1.01 \pm 0.175$
& $1.40 \pm 0.270$ \\
\addlinespace
\multirow{2}{*}{$\mathcal{E}(K)$}
& MC-KAN (Power, $0.75$)
& $\mathbf{2.00}$
& $\mathbf{2.44 \pm 0.31}$
& $\mathbf{3.31 \pm 0.89}$
& $\mathbf{4.53 \pm 1.85}$
& $\mathbf{5.41 \pm 2.14}$
& $\mathbf{6.24 \pm 2.19}$
& $\mathbf{8.03 \pm 2.03}$ \\
& Cheb-KAN
& $2.59$
& $5.39 \pm 0.60$
& $6.46 \pm 1.11$
& $8.14 \pm 1.90$
& $9.73 \pm 2.59$
& $11.29 \pm 3.03$
& $14.55 \pm 4.47$ \\
\bottomrule
\end{tabular}%
}
\end{table}

\begin{figure}[h!]
    \centering
    \includegraphics[width=0.7\linewidth]{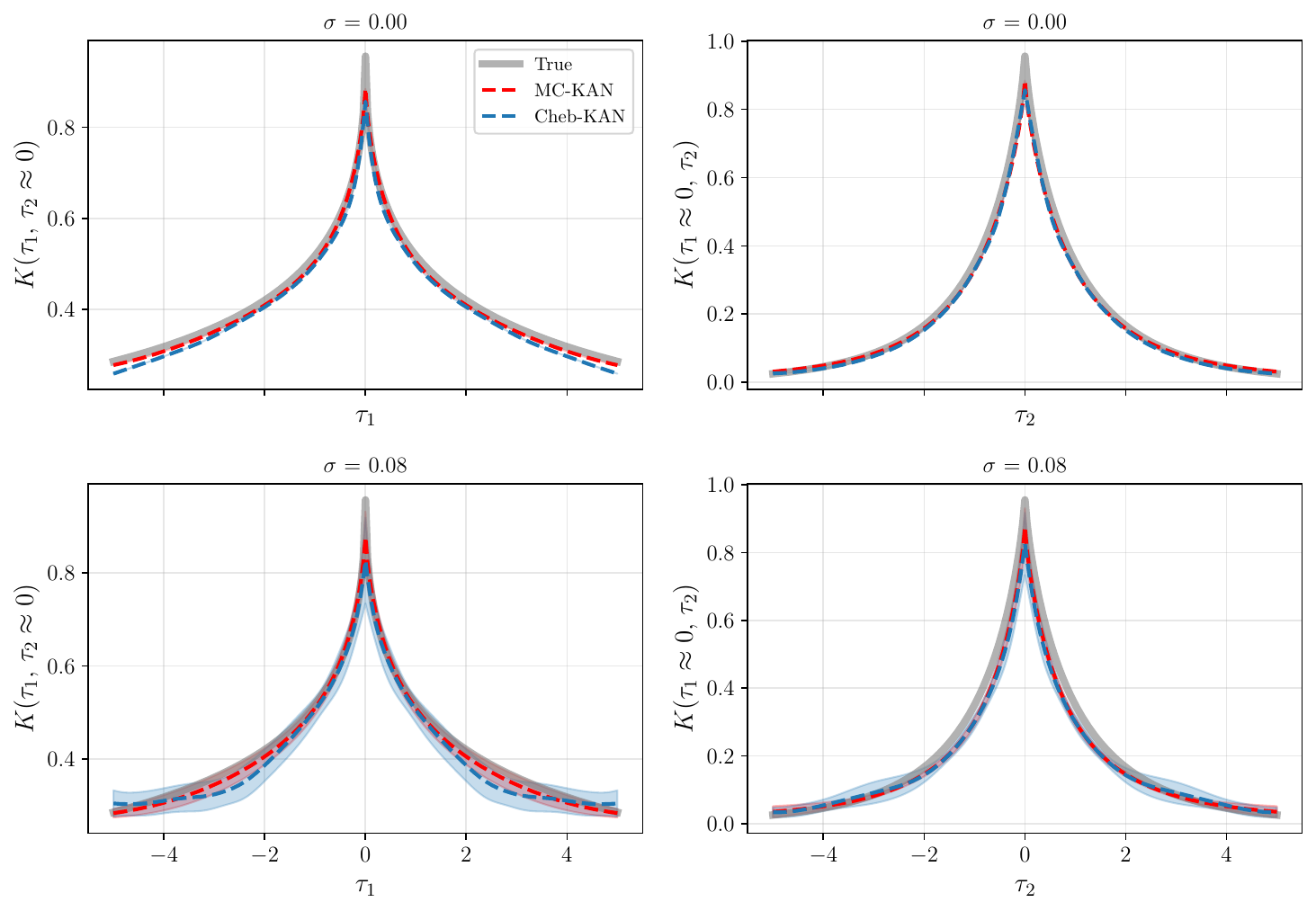} 
    \caption{Results for Experiment III, showing the recovered kernel along the axis slices $K(\tau_1,\tau_2{\approx}0)$ (left) and $K(\tau_1{\approx}0,\tau_2)$ (right) at $\sigma=0.00$ (top) and $\sigma=0.08$ (bottom).  Each panel shows the true kernel (gray), the pointwise mean reconstruction over five noise realizations with fixed initialization (dashed; MC-KAN in red, Cheb-KAN in blue), and a shaded $\pm$ one standard deviation band. MC-KAN uses power-fixed normalization with exponent $0.75$, and Cheb-KAN uses linear input normalization. The training dataset consist of 25 temporal snapshots on a $32\times32$ spatial grid.}
    \label{fig:p3-fig4}
\end{figure}

The next step is to apply symbolic regression to recover a closed-form expression for the kernel from the optimized network. Unlike Experiments I and II, where the true kernel is one-dimensional, here it is a two-dimensional anisotropic function with a coupling term between the two spatial lags. Directly recovering such a multivariate coupled form is beyond the reach of symbolic regression in this setting, as the search space grows rapidly with input dimension. We therefore apply PySR to one-dimensional slices of the learned kernel, obtained by fixing one lag coordinate, and report the recovered expressions for Figs.~\ref{fig:p3-fig1},~\ref{fig:p3-fig2}, and \ref{fig:p3-fig4} in Table~\ref{tab:p3-tab3}. The table lists the expressions returned directly by PySR and, where possible, their rewriting in the $\exp(-c\,\tau^{p})$ form in the last column. The $K(\tau_1,\tau_2\approx0)$ slice is recovered more accurately than $K(\tau_1\approx 0,\tau_2)$ in all cases, probably because the fast-decaying $K(\tau_1\approx 0,\tau_2)$ slice is near-zero over most of the lag range, concentrating the fitting information near the origin. In contrast, the slowly-decaying $K(\tau_1,\tau_2\approx0)$ slice varies across the whole domain. At zero noise with $7$ snapshots, both models recover a physically correct stretched-exponential kernel. At $\sigma=0.08$,
MC-KAN still recovers the $K(\tau_1,\tau_2\approx0)$ slice in the correct exponential form, while Cheb-KAN returns a non-admissible power-law form for both slices despite comparable $L_2$ error. Increasing the number of snapshots to $25$ results in exponential recoveries in more cases and lower errors overall. Notably, the MC-KAN $K(\tau_1,\tau_2\approx0)$ error is almost unchanged between $\sigma=0$ and $\sigma=0.08$, indicating that the hard-constrained recovery is robust to noise for the well-resolved slice.

The two slices capture only the separable part of the kernel. To recover
the coupling term, we compute the residual between the full learned kernel and its separable slices, then apply symbolic regression to it. Because the recovered slices are exponential in form, we analyze the kernel in log space, where $\log K$ decomposes into the two marginal contributions and a residual,
\begin{equation}
    \log K(\tau_1,\tau_2)
    = \log K(\tau_1,\tau_2\approx0) + \log K(\tau_1\approx0,\tau_2) + K_{\mathrm{res}}(\tau_1,\tau_2),
\end{equation}
with $K_{\mathrm{res}} = -\delta\,C$. For our best case, MC-KAN with $25$ snapshots at zero noise, PySR recovers $\delta\,C_{\mathrm{rec}} = 0.156\,\tau_1^{0.481}\,\tau_2$, compared with the true coupling $\delta\,C = 0.289\,\tau_1^{0.5}\,\tau_2^{0.5}$, with relative $L_2$ error $0.20$. The recovered term reproduces the product form and the $\tau_1$ exponent ($0.48$ versus $0.5$), but underestimates the coupling coefficient ($0.156$ versus $0.289$) and recovers the $\tau_2$ dependence as approximately linear rather than $\tau_2^{0.5}$.

\begin{table}[h!]
\centering
\caption{Results for Experiment III, showing the closed-form kernel slices recovered by PySR. Several of the direct PySR expressions admit an equivalent stretched-exponential form, reported in the "Exp.\ form" column. Free constants below 0.020 (0.012 and 0.019) are dropped in this conversion. Expressions that do not reduce to this form are marked with a dash. $\mathcal{E}(K_{\text{slice}})$ denotes the relative $L_2$ error of the recovered expression against the true kernel slice.}
\label{tab:p3-tab3}
\small
\setlength{\tabcolsep}{5pt}
\begin{tabular}{lll l c l}
\toprule
Model & $\sigma$ & Slice & Recovered expression (PySR) & $\mathcal{E}(K_{\text{slice}})$ & Exp.\ form \\
\midrule
\multirow{2}{*}{True kernel}
 & \multirow{2}{*}{--}
   & $K(\tau_1,0)$ &  & & $\exp\!\big(-0.644\,\tau_1^{0.4}\big)$ \\
 & & $K(0,\tau_2)$ &  & & $\exp\!\big(-\tau_2^{0.8}\big)$ \\
\midrule
\multirow{4}{*}{MC-KAN linear (7 snaps)}
 & \multirow{2}{*}{$0$}
   & $K(\tau_1,0)$ & $0.487^{\,\tau_1^{0.393}}$ & 0.0835 & $\exp\!\big(-0.721\,\tau_1^{0.393}\big)$ \\
 & & $K(0,\tau_2)$ & $0.333^{\,(\tau_2+0.081)^{0.759}}$ & 0.1428 & $\exp\!\big(-1.099\,(\tau_2+0.081)^{0.759}\big)$ \\
\cmidrule(lr){2-6}
 & \multirow{2}{*}{$0.08$}
   & $K(\tau_1,0)$ & $0.481^{\,\tau_1^{0.421}}$ & 0.1136 & $\exp\!\big(-0.733\,\tau_1^{0.421}\big)$ \\
 & & $K(0,\tau_2)$ & $1.308^{\,\exp(0.512^{\tau_2})}-1.279$ & 0.1920 & $-$ \\
\midrule
\multirow{4}{*}{Cheb-KAN (7 snaps)}
 & \multirow{2}{*}{$0$}
   & $K(\tau_1,0)$ & $0.496^{\,\tau_1^{0.389}}$ & 0.0593 & $\exp\!\big(-0.701\,\tau_1^{0.389}\big)$ \\
 & & $K(0,\tau_2)$ & $0.316^{\,\tau_2^{0.703}}$ & 0.1112 & $\exp\!\big(-1.153\,\tau_2^{0.703}\big)$ \\
\cmidrule(lr){2-6}
 & \multirow{2}{*}{$0.08$}
   & $K(\tau_1,0)$ & $(\tau_1+1.416)^{-0.763}$ & 0.0897 & $-$ \\
 & & $K(0,\tau_2)$ & $(\tau_2+1.113)^{-1.590}$ & 0.1603 & $-$ \\
\midrule
\multirow{4}{*}{\makecell[l]{MC-KAN power 0.75\\(25 snaps)}}
 & \multirow{2}{*}{$0$}
   & $K(\tau_1,0)$ & $0.510^{\,\tau_1^{0.403}}$ & 0.0365 & $\exp\!\big(-0.673\,\tau_1^{0.403}\big)$ \\
 & & $K(0,\tau_2)$ & $0.364^{\,\tau_2^{0.823}}\!\cdot 0.871 + 0.012$ & 0.1055 & $0.871\,\exp\!\big(-1.011\,\tau_2^{0.823}\big)$ \\
\cmidrule(lr){2-6}
 & \multirow{2}{*}{$0.08$}
   & $K(\tau_1,0)$ & $0.511^{\,\tau_1^{0.411}}$ & 0.0394 & $\exp\!\big(-0.671\,\tau_1^{0.411}\big)$ \\
 & & $K(0,\tau_2)$ & $0.304^{\,(\tau_2+0.079)^{0.756}}+0.019$ & 0.1537 & $\exp\!\big(-1.191\,(\tau_2+0.079)^{0.756}\big)$ \\
\midrule
\multirow{4}{*}{Cheb-KAN (25 snaps)}
 & \multirow{2}{*}{$0$}
   & $K(\tau_1,0)$ & $0.497^{\,\tau_1^{0.392}}$ & 0.0591 & $\exp\!\big(-0.699\,\tau_1^{0.392}\big)$ \\
 & & $K(0,\tau_2)$ & $0.318^{\,\tau_2^{0.713}}$ & 0.1052 & $\exp\!\big(-1.145\,\tau_2^{0.713}\big)$ \\
\cmidrule(lr){2-6}
 & \multirow{2}{*}{$0.08$}
   & $K(\tau_1,0)$ & $0.287\,\exp\!\big(0.544^{\tau_1}\big)$ & 0.0640 & $-$ \\
 & & $K(0,\tau_2)$ & $0.293^{\,\tau_2^{0.622}}$ & 0.1715 & $\exp\!\big(-1.228\,\tau_2^{0.622}\big)$ \\
\bottomrule
\end{tabular}
\end{table}

\section{Conclusion}\label{sec:conclusion}
In this study, we developed a differentiable-solver-based framework for  discovering memory and nonlocal kernels in integro-differential equations  (IDEs) directly from spatiotemporal observations.  Within this framework, we investigated two KAN parameterizations for the unknown kernel: a hard-constrained Bernstein-based Monotone--Convex KAN (MC-KAN), in which positivity, monotonic decrease, and convexity are enforced by construction through constraints on the Bernstein coefficients, and a soft-constrained Chebyshev-based KAN (Cheb-KAN), in which the same physical properties are encouraged through penalty terms in the loss function. We systematically evaluated both approaches across benchmark IDEs of  increasing complexity, from a Volterra equation to a viscoelastic wave 
PIDE and a two-dimensional nonlocal reaction--diffusion equation, under sparse and noisy observation settings. Our experiments demonstrate that both KAN parameterizations provide an effective framework for kernel identification. When the kernel family is relatively simple, as in the one-dimensional Volterra equation and the viscoelastic wave PIDE, both approaches recover accurate kernel representations and symbolic closed-form expressions,  with MC-KAN achieving the lowest kernel reconstruction error in most configurations of Experiment~I and at every sparsity level of Experiment~II. The advantage of hard constraint enforcement becomes more pronounced for the sparse, noisy, and multidimensional nonlocal reaction-diffusion problem. In this setting, the hard-constrained MC-KAN produces more accurate kernel reconstructions and suppresses the nonphysical oscillations that remain possible under soft penalty enforcement. For the most challenging configuration considered, consisting of $7$ temporal snapshots on a $32\times32$ spatial grid with $\sigma=0.15$ observation noise, MC-KAN with linear input normalization reduced the kernel reconstruction error by roughly $44\%$ relative to Cheb-KAN. These results indicate that the main benefit of hard architectural constraints is not necessarily improved performance for simpler,  better-observed inverse problems, but a more robust and physically  consistent kernel representation as the observations become increasingly sparse and noisy.

The proposed framework has several limitations. First, the current hard-constrained MC-KAN is designed for kernels that are positive, monotonically decreasing, and convex, and therefore is not directly applicable to oscillatory, sign-changing, or non-monotone kernels.  Second, although symbolic regression recovered accurate expressions for  the one-dimensional benchmarks, the two-dimensional example required a  staged procedure in which the principal-axis contributions and the  coupling residual were recovered separately. The resulting coupled expression reproduced the product structure only approximately and did not fully recover the coefficient and exponent of the true interaction term.
Future work will therefore focus on extending the framework to broader classes of physical kernels, developing symbolic discovery methods capable of recovering fully coupled multidimensional kernel functions, and validating the proposed approach on experimental datasets.

\color{black}
\section{Acknowledgements}
S.A.F. acknowledges support from the U.S. National Science Foundation under the Collaborations in Artificial Intelligence and Geosciences (CAIG, Award No. 2530611). This research was also supported in part by funding from the Wilkes Center for Climate Science \& Policy (Award No. 01-00055-6000-44773) at the University of Utah.

\section{Conflict of Interest}
The authors declare no conflict of interests.

\section{Code Availability}
The source code will be made publicly available upon publication of this work.
\def\mybibdoicolor{\color{black}}
\newcommand*{\doi}[1]{\href{\detokenize{#1}} {\raggedright\mybibdoicolor{DOI: \detokenize{#1}}}}
\bibliographystyle{unsrtnat}
\bibliography{references.bib}

@book{mainardi2022fractional,
  title={Fractional calculus and waves in linear viscoelasticity: an introduction to mathematical models},
  author={Mainardi, Francesco},
  year={2022},
  publisher={World Scientific}
}

@article{williams1970non,
  title={Non-symmetrical dielectric relaxation behaviour arising from a simple empirical decay function},
  author={Williams, Graham and Watts, David C},
  journal={Transactions of the Faraday society},
  volume={66},
  pages={80--85},
  year={1970},
  publisher={Royal Society of Chemistry}
}

@article{lindsey1980detailed,
  title={Detailed comparison of the Williams--Watts and Cole--Davidson functions},
  author={Lindsey, Christopher P and Patterson, Gary D},
  journal={The Journal of chemical physics},
  volume={73},
  number={7},
  pages={3348--3357},
  year={1980},
  publisher={AIP Publishing}
}

@inproceedings{zappala2023neural,
  title={Neural integro-differential equations},
  author={Zappala, Emanuele and Fonseca, Antonio H de O and Moberly, Andrew H and Higley, Michael J and Abdallah, Chadi and Cardin, Jessica A and van Dijk, David},
  booktitle={Proceedings of the aaai conference on artificial intelligence},
  volume={37},
  number={9},
  pages={11104--11112},
  year={2023}
}

@article{zappala2024learning,
  title={Learning integral operators via neural integral equations},
  author={Zappala, Emanuele and Fonseca, Antonio Henrique de Oliveira and Caro, Josue Ortega and Moberly, Andrew Henry and Higley, Michael James and Cardin, Jessica and Dijk, David van},
  journal={Nature Machine Intelligence},
  volume={6},
  number={9},
  pages={1046--1062},
  year={2024},
  publisher={Nature Publishing Group UK London}
}

@article{liu1989limited,
  title={On the limited memory BFGS method for large scale optimization},
  author={Liu, Dong C and Nocedal, Jorge},
  journal={Mathematical programming},
  volume={45},
  number={1},
  pages={503--528},
  year={1989},
  publisher={Springer}
}

@article{caputo2008diffusion,
  title={Diffusion with memory in two cases of biological interest},
  author={Caputo, Michele and Cametti, Cesare},
  journal={Journal of theoretical biology},
  volume={254},
  number={3},
  pages={697--703},
  year={2008},
  publisher={Elsevier}
}

@article{bates1997traveling,
  title={Traveling waves in a convolution model for phase transitions},
  author={Bates, Peter W and Fife, Paul C and Ren, Xiaofeng and Wang, Xuefeng},
  journal={Archive for Rational Mechanics and Analysis},
  volume={138},
  number={2},
  pages={105--136},
  year={1997},
  publisher={Springer}
}

@article{bates2009numerical,
  title={Numerical analysis for a nonlocal Allen-Cahn equation},
  author={Bates, Peter W and Brown, Sarah and Han, Jianlong},
  journal={International Journal of Numerical Analysis and Modeling},
  volume={6},
  number={1},
  pages={33--49},
  year={2009}
}

@article{topaz2006nonlocal,
  title={A nonlocal continuum model for biological aggregation},
  author={Topaz, Chad M and Bertozzi, Andrea L and Lewis, Mark A},
  journal={Bulletin of mathematical biology},
  volume={68},
  number={7},
  pages={1601--1623},
  year={2006},
  publisher={Springer}
}

@article{fetecau2011swarm,
  title={Swarm dynamics and equilibria for a nonlocal aggregation model},
  author={Fetecau, Razvan C and Huang, Yanghong and Kolokolnikov, Theodore},
  journal={Nonlinearity},
  volume={24},
  number={10},
  pages={2681--2716},
  year={2011}
}

@book{cushing2013integrodifferential,
  title={Integrodifferential equations and delay models in population dynamics},
  author={Cushing, Jim M},
  year={2013},
  publisher={Springer Science \& Business Media}
}

@book{brunner2017volterra,
  title={Volterra integral equations: an introduction to theory and applications},
  author={Brunner, Hermann},
  volume={30},
  year={2017},
  publisher={Cambridge University Press}
}

@article{nunziato1971heat,
  title={On heat conduction in materials with memory},
  author={Nunziato, Jace W},
  journal={Quarterly of Applied Mathematics},
  volume={29},
  number={2},
  pages={187--204},
  year={1971}
}

@article{dehghan2006solution,
  title={Solution of a partial integro-differential equation arising from viscoelasticity},
  author={Dehghan, Mehdi},
  journal={International Journal of Computer Mathematics},
  volume={83},
  number={1},
  pages={123--129},
  year={2006},
  publisher={Taylor \& Francis}
}

@article{faroughi2024physics,
  title={Physics-guided, physics-informed, and physics-encoded neural networks and operators in scientific computing: Fluid and solid mechanics},
  author={Faroughi, Salah A and Pawar, Nikhil M and Fernandes, Celio and Raissi, Maziar and Das, Subasish and Kalantari, Nima K and Kourosh Mahjour, Seyed},
  journal={Journal of Computing and Information Science in Engineering},
  volume={24},
  number={4},
  pages={040802},
  year={2024},
  publisher={American Society of Mechanical Engineers}
}

@article{dafermos1970asymptotic,
  title={Asymptotic stability in viscoelasticity},
  author={Dafermos, Constantine M},
  journal={Archive for rational mechanics and analysis},
  volume={37},
  number={4},
  pages={297--308},
  year={1970},
  publisher={Springer}
}

@article{metzler2000random,
  title={The random walk's guide to anomalous diffusion: a fractional dynamics approach},
  author={Metzler, Ralf and Klafter, Joseph},
  journal={Physics reports},
  volume={339},
  number={1},
  pages={1--77},
  year={2000},
  publisher={Elsevier}
}

@article{metzler2014anomalous,
  title={Anomalous diffusion models and their properties: non-stationarity, non-ergodicity, and ageing at the centenary of single particle tracking},
  author={Metzler, Ralf and Jeon, Jae-Hyung and Cherstvy, Andrey G and Barkai, Eli},
  journal={Physical Chemistry Chemical Physics},
  volume={16},
  number={44},
  pages={24128--24164},
  year={2014},
  publisher={Royal Society of Chemistry}
}

@article{gurtin1968general,
  title={A general theory of heat conduction with finite wave speeds},
  author={Gurtin, Morton E and Pipkin, Allen C},
  journal={Archive for Rational Mechanics and Analysis},
  volume={31},
  number={2},
  pages={113--126},
  year={1968},
  publisher={Springer}
}

@article{schadle2006fast,
  title={Fast and oblivious convolution quadrature},
  author={Sch{\"a}dle, Achim and L{\'o}pez-Fern{\'a}ndez, Mar{\'\i}a and Lubich, Christian},
  journal={SIAM Journal on Scientific Computing},
  volume={28},
  number={2},
  pages={421--438},
  year={2006},
  publisher={SIAM}
}

@article{gao2022kernel,
  title={A kernel-independent sum-of-exponentials method},
  author={Gao, Zixuan and Liang, Jiuyang and Xu, Zhenli},
  journal={Journal of Scientific Computing},
  volume={93},
  number={2},
  pages={40},
  year={2022},
  publisher={Springer}
}

@article{chorin2000optimal,
  title={Optimal prediction and the Mori--Zwanzig representation of irreversible processes},
  author={Chorin, Alexandre J and Hald, Ole H and Kupferman, Raz},
  journal={Proceedings of the National Academy of Sciences},
  volume={97},
  number={7},
  pages={2968--2973},
  year={2000},
  publisher={The National Academy of Sciences}
}

@article{gouasmi2017priori,
  title={A priori estimation of memory effects in reduced-order models of nonlinear systems using the Mori--Zwanzig formalism},
  author={Gouasmi, Ayoub and Parish, Eric J and Duraisamy, Karthik},
  journal={Proceedings. Mathematical, Physical, and Engineering Sciences},
  volume={473},
  number={2205},
  pages={20170385},
  year={2017}
}

@article{brunton2016sindy,
  author  = {Brunton, Steven L. and Proctor, Joshua L. and Kutz, J. Nathan},
  title   = {Discovering governing equations from data by sparse identification of nonlinear dynamical systems},
  journal = {Proceedings of the National Academy of Sciences},
  volume  = {113},
  number  = {15},
  pages   = {3932--3937},
  year    = {2016},
  doi     = {10.1073/pnas.1517384113},
}

@article{rudy2017pdefind,
  author  = {Rudy, Samuel H. and Brunton, Steven L. and Proctor, Joshua L. and Kutz, J. Nathan},
  title   = {Data-driven discovery of partial differential equations},
  journal = {Science Advances},
  volume  = {3},
  number  = {4},
  pages   = {e1602614},
  year    = {2017},
  doi     = {10.1126/sciadv.1602614},
}

@article{lu2021deeponet,
  author  = {Lu, Lu and Jin, Pengzhan and Pang, Guofei and Zhang, Zhongqiang and Karniadakis, George Em},
  title   = {Learning nonlinear operators via {DeepONet} based on the universal approximation theorem of operators},
  journal = {Nature Machine Intelligence},
  volume  = {3},
  number  = {3},
  pages   = {218--229},
  year    = {2021},
  doi     = {10.1038/s42256-021-00302-5},
}

@article{faroughi2026symbolic,
  title={Symbolic--KAN: Kolmogorov-Arnold Networks with Discrete Symbolic Structure for Interpretable Learning},
  author={Faroughi, Salah A and Mostajeran, Farinaz and Arzani, Amirhossein and Faroughi, Shirko},
  journal={arXiv preprint arXiv:2603.23854},
  year={2026}
}

@article{chen2018neural,
  title={Neural ordinary differential equations},
  author={Chen, Ricky TQ and Rubanova, Yulia and Bettencourt, Jesse and Duvenaud, David K},
  journal={Advances in neural information processing systems},
  volume={31},
  year={2018}
}

@article{onken2020discretize,
  author  = {Onken, Derek and Ruthotto, Lars},
  title   = {Discretize-Optimize vs. Optimize-Discretize for Time-Series Regression and Continuous Normalizing Flows},
  journal = {arXiv preprint arXiv:2005.13420},
  year    = {2020},
}

@inproceedings{gholami2019anode,
  author    = {Gholami, Amir and Keutzer, Kurt and Biros, George},
  title     = {{ANODE}: Unconditionally Accurate Memory-Efficient Gradients for Neural {ODE}s},
  booktitle = {Proceedings of the Twenty-Eighth International Joint Conference on Artificial Intelligence (IJCAI)},
  year      = {2019},
}

@book{engl1996regularization,
  title={Regularization of inverse problems},
  author={Engl, Heinz Werner and Hanke, Martin and Neubauer, Andreas},
  volume={375},
  year={1996},
  publisher={Springer Science \& Business Media}
}

@article{soussou1970application,
  title={Application of Prony series to linear viscoelasticity},
  author={Soussou, JE and Moavenzadeh, F and Gradowczyk, MH},
  journal={Transactions of the Society of Rheology},
  volume={14},
  number={4},
  pages={573--584},
  year={1970},
  publisher={The Society of Rheology}
}

@article{hanyga2018simple,
  title={A simple proof of a duality theorem with applications in scalar and anisotropic viscoelasticity},
  author={Hanyga, Andrzej},
  journal={arXiv preprint arXiv:1805.07275},
  year={2018}
}

@article{luchko2020complete,
  title={On complete monotonicity of solution to the fractional relaxation equation with the n th level fractional derivative},
  author={Luchko, Yuri},
  journal={Mathematics},
  volume={8},
  number={9},
  pages={1561},
  year={2020},
  publisher={MDPI}
}

@article{hanyga2013wave,
  title={Wave propagation in linear viscoelastic media with completely monotonic relaxation moduli},
  author={Hanyga, Andrzej},
  journal={Wave Motion},
  volume={50},
  number={5},
  pages={909--928},
  year={2013},
  publisher={Elsevier}
}

@article{farouki2012bernstein,
  title={The Bernstein polynomial basis: A centennial retrospective},
  author={Farouki, Rida T},
  journal={Computer Aided Geometric Design},
  volume={29},
  number={6},
  pages={379--419},
  year={2012},
  publisher={Elsevier}
}

@article{koenig2024kan,
  title={KAN-ODEs: Kolmogorov--Arnold network ordinary differential equations for learning dynamical systems and hidden physics},
  author={Koenig, Benjamin C and Kim, Suyong and Deng, Sili},
  journal={Computer Methods in Applied Mechanics and Engineering},
  volume={432},
  pages={117397},
  year={2024},
  publisher={Elsevier}
}

@article{faroughi2026kolmogorov,
  title={Kolmogorov-Arnold networks for data-driven, physics-informed, and deep-operator learning: a review, synthesis, and new analysis},
  author={Faroughi, Salah A and Mostajeran, Farinaz and Mashhadzadeh, Amin Hamed and Faroughi, Shirko},
  journal={Neural networks},
  pages={108791},
  year={2026},
  publisher={Elsevier}
}

@article{cranmer2023pysr,
  author  = {Cranmer, Miles},
  title   = {Interpretable Machine Learning for Science with {PySR} and {SymbolicRegression.jl}},
  journal = {arXiv preprint arXiv:2305.01582},
  year    = {2023},
  doi     = {10.48550/arXiv.2305.01582},
}

@article{janno2000inverse,
  title={Inverse problems for memory kernels by Laplace transform methods},
  author={Janno, Jaan and von Wolfersdorf, Lothar},
  journal={Zeitschrift f{\"u}r Analysis und ihre Anwendungen},
  volume={19},
  number={2},
  pages={489--510},
  year={2000}
}

@article{thakolkaran2025ickan,
  author  = {Thakolkaran, Prakash and Guo, Yaqi and Saini, Shivam and Peirlinck, Mathias and Alheit, Benjamin and Kumar, Siddhant},
  title   = {Can {KAN} {CAN}s? {Input}-convex {Kolmogorov}-{Arnold} networks ({KAN}s) as hyperelastic constitutive artificial neural networks ({CAN}s)},
  journal = {Computer Methods in Applied Mechanics and Engineering},
  volume  = {443},
  pages   = {118089},
  year    = {2025},
  doi     = {10.1016/j.cma.2025.118089},
}

@article{deschatre2026ickan,
  author  = {Deschatre, Thomas and Warin, Xavier},
  title   = {Input Convex Kolmogorov Arnold Networks},
  journal = {arXiv preprint arXiv:2505.21208},
  year    = {2026},
}

@inproceedings{amos2017input,
  title={Input convex neural networks},
  author={Amos, Brandon and Xu, Lei and Kolter, J Zico},
  booktitle={International conference on machine learning},
  pages={146--155},
  year={2017},
  organization={PMLR}
}

@article{difonzo2025physics,
  title={Physics informed neural networks for an inverse problem in peridynamic models},
  author={Difonzo, Fabio V and Lopez, Luciano and Pellegrino, Sabrina F},
  journal={Engineering with Computers},
  volume={41},
  number={6},
  pages={4003--4012},
  year={2025},
  publisher={Springer}
}

@article{khaldi2026learning,
  title={Learning memory kernels in second-order Volterra models: a data-driven approach for viscoelastic systems},
  author={Khaldi, Yacine and Belaifa, Meriem and Benzaoui, Amir},
  journal={Mechanics of Time-Dependent Materials},
  volume={30},
  number={2},
  pages={42},
  year={2026},
  publisher={Springer}
}

@article{breda2025sparse,
  title={Sparse identification of delay equations with distributed memory},
  author={Breda, Dimitri and Tanveer, Muhammad and Wu, Jianhong},
  journal={arXiv preprint arXiv:2512.21070},
  year={2025}
}

@article{carrillo2025sparse,
  title={Sparse identification of nonlocal interaction kernels in nonlinear gradient flow equations via partial inversion},
  author={Carrillo, Jose A and Estrada-Rodriguez, Gissell and Mikolas, Laszlo and Tang, Sui},
  journal={Mathematical Models and Methods in Applied Sciences},
  volume={35},
  number={05},
  pages={1073--1131},
  year={2025},
  publisher={World Scientific}
}

@article{yu6404003physics,
  title={Physics-Guided Symbolic Regression of Nonlocal Transport Memory from Sparse Observations in Aquatic Systems},
  author={Yu, Xiangnan and Zhang, Yong and Sun, HongGuang and Chen, Zhibo and Chen, Yuntian},
  year={2026},
  journal={Preprint, available at SSRN 6404003}
}

@article{pandolfi2017identification,
  title={Identification of the relaxation kernel in diffusion processes and viscoelasticity with memory via deconvolution},
  author={Pandolfi, Luciano},
  journal={Mathematical Methods in the Applied Sciences},
  volume={40},
  number={7},
  pages={2542--2549},
  year={2017},
  publisher={Wiley Online Library}
}

@inproceedings{durdiev2024kernel,
  title={Kernel determination problem for a integro--differential heat equation with a variable thermal conductivity},
  author={Durdiev, Durdimurod and Nuriddinov, Javlon},
  booktitle={AIP Conference Proceedings},
  volume={3004},
  number={1},
  pages={040014},
  year={2024},
  organization={AIP Publishing LLC}
}

@article{pandolfi2015identification,
  title={Identification of a relaxation kernel using two boundary measures},
  author={Pandolfi, Luciano},
  journal={arXiv preprint arXiv:1503.03883},
  year={2015}
}

@article{durdiev2022memory,
  title={Memory kernel reconstruction problems in the integro-differential equation of rigid heat conductor},
  author={Durdiev, Durdimurod K and Zhumaev, Zhonibek Zh},
  journal={Mathematical Methods in the Applied Sciences},
  volume={45},
  number={14},
  pages={8374--8388},
  year={2022},
  publisher={Wiley Online Library}
}

@article{cavaterra1994identifying,
  title={Identifying memory kernels in linear thermoviscoelasticity of Boltzmann type},
  author={Cavaterra, Cecilia and Grasselli, Maurizio},
  journal={Mathematical Models and Methods in Applied Sciences},
  volume={4},
  number={06},
  pages={807--842},
  year={1994},
  publisher={World Scientific}
}

@incollection{lamm2000survey,
  title={A survey of regularization methods for first-kind Volterra equations},
  author={Lamm, Patricia K},
  booktitle={Surveys on solution methods for inverse problems},
  pages={53--82},
  year={2000},
  publisher={Springer}
}

@article{mostajeran2024epi,
  title={Epi-ckans: Elasto-plasticity informed kolmogorov-arnold networks using chebyshev polynomials},
  author={Mostajeran, Farinaz and Faroughi, Salah A},
  journal={arXiv preprint arXiv:2410.10897},
  year={2024}
}

@article{mostajeran2025scaled,
  title={Scaled-cpikans: Spatial variable and residual scaling in Chebyshev-based physics-informed Kolmogorov-Arnold networks},
  author={Mostajeran, Farinaz and Faroughi, Salah A},
  journal={Journal of Computational Physics},
  volume={537},
  pages={114116},
  year={2025},
  publisher={Elsevier}
}

@article{ss2024chebyshev,
  title={Chebyshev polynomial-based kolmogorov-arnold networks: An efficient architecture for nonlinear function approximation},
  author={SS, Sidharth and AR, Keerthana and KP, Anas and others},
  journal={arXiv preprint arXiv:2405.07200},
  year={2024}
}

@article{mostajeran2025minpo,
  title={MINPO: Memory-Informed Neural Pseudo-Operator to Resolve Nonlocal Spatiotemporal Dynamics},
  author={Mostajeran, Farinaz and Tleubek, Aruzhan and Faroughi, Salah A},
  journal={arXiv preprint arXiv:2512.17273},
  year={2025}
}

@article{kingma2014adam,
  title={Adam: A method for stochastic optimization},
  author={Kingma, Diederik P and Ba, Jimmy},
  journal={arXiv preprint arXiv:1412.6980},
  year={2014}
}

@article{paszke2019pytorch,
  title={Pytorch: An imperative style, high-performance deep learning library},
  author={Paszke, Adam and Gross, Sam and Massa, Francisco and Lerer, Adam and Bradbury, James and Chanan, Gregory and Killeen, Trevor and Lin, Zeming and Gimelshein, Natalia and Antiga, Luca and others},
  journal={Advances in neural information processing systems},
  volume={32},
  year={2019}
}

@article{baydin2018automatic,
  title={Automatic differentiation in machine learning: a survey},
  author={Baydin, Atilim Gunes and Pearlmutter, Barak A and Radul, Alexey Andreyevich and Siskind, Jeffrey Mark},
  journal={Journal of machine learning research},
  volume={18},
  number={153},
  pages={1--43},
  year={2018}
}

@article{more1994line,
  title={Line search algorithms with guaranteed sufficient decrease},
  author={Mor{\'e}, Jorge J and Thuente, David J},
  journal={ACM Transactions on Mathematical Software (TOMS)},
  volume={20},
  number={3},
  pages={286--307},
  year={1994},
  publisher={ACM New York, NY, USA}
}

@inproceedings{liu2025kan,
  title={KAN: Kolmogorov--arnold networks},
  author={Liu, Ziming and Wang, Yixuan and Vaidya, Sachin and Ruehle, Fabian and Halverson, James and Soljacic, Marin and Hou, Thomas and Tegmark, Max},
  booktitle={International conference on learning representations},
  volume={2025},
  pages={70367--70413},
  year={2025}
}

@article{yu2024kan,
  title={Kan or mlp: A fairer comparison},
  author={Yu, Runpeng and Yu, Weihao and Wang, Xinchao},
  journal={arXiv preprint arXiv:2407.16674},
  year={2024}
}

@article{shukla2024comprehensive,
  title={A comprehensive and FAIR comparison between MLP and KAN representations for differential equations and operator networks},
  author={Shukla, Khemraj and Toscano, Juan Diego and Wang, Zhicheng and Zou, Zongren and Karniadakis, George Em},
  journal={Computer Methods in Applied Mechanics and Engineering},
  volume={431},
  pages={117290},
  year={2024},
  publisher={Elsevier}
}

@article{guo2025physics,
  title={Physics-informed Kolmogorov--Arnold network with Chebyshev polynomials for fluid mechanics},
  author={Guo, Chunyu and Sun, Lucheng and Li, Shilong and Yuan, Zelong and Wang, Chao},
  journal={Physics of Fluids},
  volume={37},
  number={9},
  year={2025},
  publisher={AIP Publishing}
}

@article{faroughi2026neural,
  title={Neural tangent kernel analysis to probe convergence in physics-informed neural solvers: PIKANs vs. PINNs},
  author={Faroughi, Salah A and Mostajeran, Farinaz},
  journal={Computers \& Mathematics with Applications},
  volume={215},
  pages={155--191},
  year={2026},
  publisher={Elsevier}
}

@article{sill1997monotonic,
  title={Monotonic networks},
  author={Sill, Joseph},
  journal={Advances in neural information processing systems},
  volume={10},
  year={1997}
}

\newpage
\appendix
\section{Theoretical Guarantee}
\label{sec-appendix:theorem}

We now show that the kernel constraints introduced in Section~\ref{sec:mckan} are preserved throughout the full depth of MC-KAN. Since the architecture enforces monotonicity and convexity independently along each input dimension, all results are stated and proved axis-wise.

\begin{lemma}[First-layer composition rule]
\label{lem:first-layer-composition}
Let $t:[a,b]\rightarrow[0,1]$ be monotone increasing and concave, and let $\phi:[0,1]\rightarrow\mathbb{R}_{>0}$ be monotone decreasing and convex. Then $\phi\circ t$ is positive, monotone decreasing, and convex on $[a,b]$.
\end{lemma}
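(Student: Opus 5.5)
We argue directly from the definitions of monotonicity and convexity, so no differentiability of $t$ or $\phi$ is needed. This matters because the normalization $t(\tau)=((\tau-\tau^{\min})/(\tau^{\max}-\tau^{\min}))^{\alpha}$ with $\alpha<1$ fails to be differentiable at the left endpoint. Positivity is immediate: for every $x\in[a,b]$ we have $t(x)\in[0,1]$, and $\phi$ takes values in $\mathbb{R}_{>0}$, so $\phi(t(x))>0$.

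For monotonic decrease, take $x\le y$ in $[a,b]$. Since $t$ is increasing, $t(x)\le t(y)$. Since $\phi$ is decreasing, $\phi(t(x))\ge\phi(t(y))$. Hence $\phi\circ t$ is monotone decreasing.

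For convexity, fix $x,y\in[a,b]$ and $\lambda\in[0,1]$, and set $m=\lambda x+(1-\lambda)y$. Concavity of $t$ gives
\[
t(m)\ge \lambda t(x)+(1-\lambda)t(y),
\]
and both sides lie in $[0,1]$. Because $\phi$ is decreasing, applying it reverses this inequality:
\[
\phi(t(m))\le\phi\bigl(\lambda t(x)+(1-\lambda)t(y)\bigr).
\]
Convexity of $\phi$ on $[0,1]$ then bounds the right-hand side by $\lambda\phi(t(x))+(1-\lambda)\phi(t(y))$. Chaining the two inequalities gives convexity of $\phi\circ t$. This is the only step requiring care: the concavity inequality for $t$ must be combined with the \emph{decreasing} direction of $\phi$, so that the inequality flips the right way.

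As a remark connecting to Eq.~\eqref{eq:kernel-properties}, where $t$ and $\phi$ are twice differentiable the chain rule gives
\[
(\phi\circ t)''=\phi''(t)\,(t')^2+\phi'(t)\,t''.
\]
Here $\phi''\ge0$, while $\phi'\le0$ and $t''\le0$ make the second product nonnegative, so $(\phi\circ t)''\ge0$. Similarly $(\phi\circ t)'=\phi'(t)\,t'\le0$. This recovers the derivative form of the kernel conditions.

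Finally, the lemma applies to the actual first layer of MC-KAN. A Bernstein edge function with coefficients satisfying Table~\ref{tab:constraints} is decreasing and convex by Eqs.~\eqref{eq:derivative}--\eqref{eq:second-derivative}, and it is positive because $0<c_k<1$. The normalization in Eq.~\eqref{eq:bernstein-normalization} with $\alpha_i\le1$ is increasing and concave.
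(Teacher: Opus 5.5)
Your proof is correct and follows the paper's own argument essentially step for step: positivity from $\phi>0$, monotone decrease by composing an increasing map with a decreasing one, and convexity by chaining the concavity of $t$ with the decreasing direction and then the convexity of $\phi$. The chain-rule remark and the application to the Bernstein first layer are consistent extras and are not needed for the lemma itself.
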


\begin{proof}
Positivity follows immediately from $\phi>0$. For monotonicity, let $x\le y$. Since $t$ is monotone increasing, $t(x)\le t(y)$. Since $\phi$ is monotone decreasing, this implies $\phi(t(x))\ge \phi(t(y))$. Hence $\phi\circ t$ is monotone decreasing.
To prove convexity, let $x,y\in[a,b]$ and $\lambda\in[0,1]$. Since $t$ is concave, \[ t(\lambda x+(1-\lambda)y) \ge \lambda t(x)+(1-\lambda)t(y). \] Because $\phi$ is monotone decreasing, \[ \phi(t(\lambda x+(1-\lambda)y)) \le \phi\!\left(\lambda t(x)+(1-\lambda)t(y)\right). \] Using the convexity of $\phi$, \[ \phi\!\left(\lambda t(x)+(1-\lambda)t(y)\right) \le \lambda \phi(t(x))+(1-\lambda)\phi(t(y)). \] Therefore, \[ \phi(t(\lambda x+(1-\lambda)y)) \le \lambda \phi(t(x))+(1-\lambda)\phi(t(y)), \] which proves that $\phi\circ t$ is convex.
\end{proof}

\begin{lemma}[Deep-layer composition rule]
\label{lem:deep-layer-composition}
Let $g:[0,1]\rightarrow[0,1]$ be positive, monotone decreasing, convex, and twice differentiable on $(0,1)$. Let $\phi:[0,1]\rightarrow\mathbb{R}_{>0}$ be positive, monotone increasing, convex, and twice differentiable on $(0,1)$. Then $\phi\circ g$ is positive, monotone decreasing, and convex on the interior of its domain. These properties extend to the closed domain by continuity.
\end{lemma}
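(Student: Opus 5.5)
The plan is to verify the three properties directly with the chain rule, since both functions are twice differentiable on $(0,1)$. Positivity is immediate: $\phi$ takes values in $\mathbb{R}_{>0}$, so $\phi(g(x))>0$ for every $x$. The composition is well defined because $g$ maps $[0,1]$ into $[0,1]$, which is the domain of $\phi$.

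For monotonicity, write $h=\phi\circ g$. On $(0,1)$ we have $h'(x)=\phi'(g(x))\,g'(x)$. Here $\phi'\ge 0$ because $\phi$ is increasing, and $g'\le 0$ because $g$ is decreasing, so $h'\le 0$. Alternatively, mirroring the order-based argument in Lemma~\ref{lem:first-layer-composition}: if $x\le y$ then $g(x)\ge g(y)$, hence $\phi(g(x))\ge\phi(g(y))$.

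For convexity, compute
\[
h''(x)=\phi''(g(x))\,\bigl(g'(x)\bigr)^2+\phi'(g(x))\,g''(x).
\]
The first term is nonnegative since $\phi''\ge0$. The second term is a product of $\phi'\ge0$ and $g''\ge0$, so it is also nonnegative. Hence $h''\ge0$ on $(0,1)$, and $h$ is convex on the interior. One subtlety is that $g(x)$ might lie at an endpoint $0$ or $1$, where $\phi$ is only assumed differentiable on the open interval.

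To avoid this issue, I would give the derivative-free argument in parallel. By convexity of $g$,
\[
g(\lambda x+(1-\lambda)y)\le \lambda g(x)+(1-\lambda)g(y).
\]
Since $\phi$ is increasing, applying it preserves this inequality. Convexity of $\phi$ then bounds the right-hand side by $\lambda\phi(g(x))+(1-\lambda)\phi(g(y))$. This argument works on the whole closed interval and needs no endpoint care. The main (mild) obstacle is therefore the extension to the closed domain. A continuous function on $[0,1]$ that is convex on $(0,1)$ is convex on $[0,1]$, which follows by passing to the limit in the convexity inequality. Monotonicity extends in the same way, and positivity holds pointwise. Continuity of $h$ follows from continuity of $g$ and $\phi$, which is implicit in their being convex and monotone with finite values and is assumed for the Bernstein edge functions.

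Finally, I would remark how this lemma is used for the full network. Each layer output $z^{(l)}_p$ is an average of compositions of this type. Nonnegative averages preserve positivity, decreasing monotonicity, and convexity, so induction over layers, together with Lemma~\ref{lem:first-layer-composition} for the first layer, gives the axis-wise guarantee for $K_{\text{MC-KAN}}$.
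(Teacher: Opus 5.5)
Your chain-rule argument, $h'=\phi'(g)\,g'\le 0$ and $h''=\phi''(g)(g')^2+\phi'(g)\,g''\ge 0$ on $(0,1)$ followed by extension to the closed interval by continuity, is exactly the paper's proof. What you add is the derivative-free chain: convexity of $g$, then monotonicity of $\phi$, then convexity of $\phi$, plus the order argument for monotone decrease. The paper uses this Jensen-type reasoning only for the first-layer rule (Lemma~\ref{lem:first-layer-composition}). For this lemma it is actually the more robust route. Reading the convexity and monotonicity hypotheses on all of $[0,1]$, it proves the conclusions directly on the closed interval, with no use of twice differentiability and no continuity-based extension. It also never needs $\phi'$ at a point where $g(x)$ reaches the endpoint $1$, which the chain-rule route leaves unaddressed.

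One small correction. Continuity is \emph{not} implied by convexity, monotonicity and finite values, because such functions can jump at one endpoint. For example, $g(0)=1$ with $g\equiv 1/2$ on $(0,1]$ is positive, decreasing and convex, and an increasing convex $\phi$ can likewise jump up at $1$. This does not damage your proof, since the derivative-free argument never uses continuity. In the network setting continuity holds anyway, because layer outputs are built from polynomials and $x\mapsto x^{\alpha_i}$. The example does show, though, that the ``extend by continuity'' step in both your chain-rule route and the paper's relies on an extra assumption that the hypotheses alone do not supply.
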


\begin{proof}
Positivity follows immediately from $\phi>0$. On the interior of the domain, the chain rule gives \[ (\phi\circ g)'=\phi'(g)g'. \] Since $\phi'\ge0$ and $g'\le0$, it follows that \[ (\phi\circ g)'\le0. \] Similarly, \[ (\phi\circ g)''=\phi''(g)(g')^2+\phi'(g)g''. \] Since $\phi''\ge0$, $(g')^2\ge0$, $\phi'\ge0$, and $g''\ge0$, both terms are nonnegative. Therefore, \[ (\phi\circ g)''\ge0. \] Thus $\phi\circ g$ is positive, monotone decreasing, and convex on the interior. Since $g$ and $\phi$ are continuous on $[0,1]$, the composition is continuous on the closed domain, and the monotonicity and convexity properties extend to the boundary.
\end{proof}

Layer 1 edge functions satisfy the decreasing coefficient conditions of Table~\ref{tab:constraints}, while all subsequent layers satisfy the corresponding increasing coefficient conditions. In addition, the reparameterization in ~\ref{sec-appendix:implementation} ensures that all Bernstein coefficients lie strictly in $(0,1)$. Therefore, every edge function maps $[0,1]$ into $(0,1)$, and all layer outputs, being averages of edge functions, also remain in $(0,1)$.

\begin{theorem}[MC-KAN preserves kernel constraints]
\label{thm:main}
Let MC-KAN have architecture \\
$[d,h_1,\ldots,h_{L-1},1]$, where $d$ denotes the number of kernel inputs, and define $h_0=d$ and $h_L=1$. Suppose that the normalized inputs are given by \[ t_i(\boldsymbol{\tau})=\left(\frac{\tau_i-\tau_i^{\min}}{\tau_i^{\max}-\tau_i^{\min}}\right)^{\alpha_i}, \qquad \alpha_i\in(0,1], \qquad i=1,\ldots,d . \] Suppose further that the Bernstein coefficients of layer 1 satisfy the decreasing conditions of Table~\ref{tab:constraints}, while the coefficients of layers $l\ge2$ satisfy the corresponding increasing conditions. Assume that all Bernstein coefficients are generated by the reparameterization in ~\ref{sec-appendix:implementation}, so that all coefficients lie strictly in $(0,1)$. We call a parameter vector $\boldsymbol{\theta}$ admissible if its Bernstein reparameterization satisfies the coefficient constraints and if its normalization parameters satisfy $\alpha_i\in(0,1]$ for all $i$. Then, for every input dimension $\tau_i$, the resulting kernel $K_{\boldsymbol{\theta}}(\boldsymbol{\tau})$ is positive, monotone decreasing, and convex with respect to $\tau_i$ on the closed input domain.
\end{theorem}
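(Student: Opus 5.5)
The plan is to fix an input index $i$ and all other lag components, and to prove by induction on the layer index $l$ that every node output $z^{(l)}_p$, viewed as a function of $\tau_i$ alone, is positive, monotone decreasing, and convex on $[\tau_i^{\min},\tau_i^{\max}]$ and takes values in $(0,1)$. The claim for $K_{\boldsymbol{\theta}}=z^{(L)}_1$ is then the case $l=L$. Throughout I use two elementary closure facts. First, a nonnegative weighted sum (here the average with weights $1/h_{l-1}$) of functions that are monotone decreasing and convex is again monotone decreasing and convex. Second, a constant function is trivially both nonincreasing and convex.

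\emph{Base step and first layer.} The normalization $t_i(\tau_i)=s^{\alpha_i}$, with $s$ affine increasing in $\tau_i$ and $\alpha_i\in(0,1]$, is monotone increasing and concave into $[0,1]$. Each first-layer edge function $\phi^{(1)}_{p,q}$ is a Bernstein polynomial whose coefficients satisfy the decreasing and convex conditions of Table~\ref{tab:constraints}. By the derivative formulas \eqref{eq:derivative}--\eqref{eq:second-derivative} and the nonnegativity of $B_{k,N-1},B_{k,N-2}$, it is therefore decreasing and convex. Since its coefficients lie in $(0,1)$ and the basis is a partition of unity, it takes values in $(0,1)$. Lemma~\ref{lem:first-layer-composition} then gives that $\phi^{(1)}_{p,i}\circ t_i$ is positive, decreasing, and convex in $\tau_i$. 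The edges with $q\neq i$ are constant in $\tau_i$, and each is positive and lies in $(0,1)$. Averaging and applying the closure facts shows that $z^{(1)}_p$ has the inductive property.

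\emph{Inductive step.} Assume every $z^{(l-1)}_q$ is positive, decreasing, convex in $\tau_i$, and $(0,1)$-valued. For $l\ge2$ the edge functions have increasing convex coefficient sequences in $(0,1)$, so they are increasing, convex, positive, and smooth polynomials. Lemma~\ref{lem:deep-layer-composition} would give that each $\phi^{(l)}_{p,q}\circ z^{(l-1)}_q$ is positive, decreasing, and convex, and averaging closes the induction.

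The main obstacle I expect is this inductive step. Lemma~\ref{lem:deep-layer-composition} assumes $g$ is twice differentiable, but when $\alpha_i<1$ the map $t_i$ is not differentiable at the left endpoint, and that nonsmoothness propagates to every $z^{(l-1)}_q$. Rather than invoke the lemma as stated, I would give a derivative-free version of it, in the same style as the proof of Lemma~\ref{lem:first-layer-composition}. For $x\le y$, we have $g(x)\ge g(y)$, so $\phi(g(x))\ge\phi(g(y))$ because $\phi$ is increasing. For convexity, $g(\lambda x+(1-\lambda)y)\le\lambda g(x)+(1-\lambda)g(y)$. Applying the increasing $\phi$ to this inequality and then using the convexity of $\phi$ yields the convexity of $\phi\circ g$. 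This argument needs only convexity and monotonicity of $g$, and it holds directly on the closed interval, so no continuity-extension argument is needed. Positivity on the closed domain follows from all coefficients being strictly positive, which also keeps the outputs in $(0,1)$ at the endpoints. That in turn means the next layer's edge functions are always evaluated inside $[0,1]$. Since the index $i$ was arbitrary, the theorem follows.
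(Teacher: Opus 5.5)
Your proposal is correct and follows the paper's skeleton. You fix $\tau_i$, induct over layers, apply Lemma~\ref{lem:first-layer-composition} to the $q=i$ edge of layer 1 (the other edges are constant), close under averaging, and carry the $(0,1)$ bound forward so that each later layer is evaluated inside $[0,1]$.

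The genuine difference is the deep-layer step. The paper proves Lemma~\ref{lem:deep-layer-composition} with the chain rule, $(\phi\circ g)''=\phi''(g)(g')^2+\phi'(g)g''\ge0$, on the open interval, and then extends to the endpoints by continuity. You instead reuse the Jensen-type inequality chain from Lemma~\ref{lem:first-layer-composition}.

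Your stated reason for switching is not quite right, though. The lemma only requires twice differentiability on the \emph{open} interval. Since $t_i$ is $C^\infty$ for $\tau_i>\tau_i^{\min}$, every $z_q^{(l)}$ is smooth there as well. The blow-up of $t_i'$ at $\tau_i=\tau_i^{\min}$ when $\alpha_i<1$ is exactly what the paper's continuity clause covers, so the paper's argument is not broken.

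What your version buys is economy and generality. It needs no regularity of $g$ or $\phi$, so it would work verbatim for, e.g., piecewise-linear convex edge functions. It also holds directly on the closed interval. Finally, it lets the two lemmas merge into one composition rule: a convex $\phi$ composed with a convex $g$ is convex when $\phi$ is nondecreasing, and composed with a concave $g$ is convex when $\phi$ is nonincreasing.

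The paper's calculus route instead delivers the derivative inequalities of Eq.~\eqref{eq:kernel-properties} directly on the interior, at the price of the smoothness hypothesis and the limiting argument. Your explicit partition-of-unity justification that each edge function maps into $(0,1)$ also fills a step the paper only asserts.
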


\begin{proof}
The proof proceeds by induction on the layer index.

\textbf{Base case.} Let \[ \mathbf{t}(\boldsymbol{\tau})=(t_1,\ldots,t_d) \] denote the normalized inputs. For each input dimension, \[ t_i(\boldsymbol{\tau})=\left(\frac{\tau_i-\tau_i^{\min}}{\tau_i^{\max}-\tau_i^{\min}}\right)^{\alpha_i}, \qquad \alpha_i\in(0,1]. \] The scalar map $x\mapsto x^{\alpha_i}$ is monotone increasing and concave on $[0,1]$ for every $\alpha_i\in(0,1]$. Therefore, each normalized input $t_i$ is monotone increasing and concave with respect to $\tau_i$ on $[\tau_i^{\min},\tau_i^{\max}]$.
Each layer-1 edge function is positive, monotone decreasing, and convex as a function of its scalar argument by the coefficient conditions of Table~\ref{tab:constraints}. Let \[ h_{p,q}^{(1)}=\phi_{p,q}^{(1)}(t_q). \] Fix an input dimension $\tau_i$. If $q\neq i$, then $t_q$ does not depend on $\tau_i$, so $h_{p,q}^{(1)}$ is constant with respect to $\tau_i$. Hence it is positive, monotone decreasing, and convex with respect to $\tau_i$.
If $q=i$, then $t_i$ is monotone increasing and concave with respect to $\tau_i$, while $\phi_{p,i}^{(1)}$ is positive, monotone decreasing, and convex. By Lemma~\ref{lem:first-layer-composition}, \[ h_{p,i}^{(1)}=\phi_{p,i}^{(1)}(t_i) \] is positive, monotone decreasing, and convex with respect to $\tau_i$ on the closed interval $[\tau_i^{\min},\tau_i^{\max}]$.
Because layer outputs are averages of edge functions, \[ z_p^{(1)}=\frac1{h_0}\sum_{q=1}^{h_0}h_{p,q}^{(1)}, \qquad h_0=d, \] the same properties hold for every layer-1 output. Moreover, since all Bernstein coefficients lie strictly in $(0,1)$, each edge function maps $[0,1]$ into $(0,1)$, and hence \[ z_p^{(1)}\in(0,1). \]
\textbf{Inductive step.} Assume that every output $z_q^{(l)}$ is positive, monotone decreasing, and convex with respect to $\tau_i$, and satisfies \[ z_q^{(l)}\in(0,1). \] The output of layer $l+1$ is \[ z_p^{(l+1)}=\frac1{h_l}\sum_{q=1}^{h_l}\phi_{p,q}^{(l+1)}\!\left(z_q^{(l)}\right), \] where each $\phi_{p,q}^{(l+1)}$ is positive, monotone increasing, and convex.
By Lemma~\ref{lem:deep-layer-composition}, every term \[ \phi_{p,q}^{(l+1)}\!\left(z_q^{(l)}\right) \] is positive, monotone decreasing, and convex with respect to $\tau_i$. Since sums and positive averages preserve positivity, monotonicity, and convexity, $z_p^{(l+1)}$ inherits these properties.
Furthermore, because $z_q^{(l)}\in(0,1)$ and all Bernstein coefficients of $\phi_{p,q}^{(l+1)}$ lie strictly in $(0,1)$, each edge output also lies in $(0,1)$. Therefore their average satisfies \[ z_p^{(l+1)}\in(0,1). \]
By induction, every layer output is positive, monotone decreasing, and convex with respect to $\tau_i$, and remains in $(0,1)$. Since the kernel output is the final network output, \[ K_{\boldsymbol{\theta}}=z^{(L)}, \] the result follows.
\end{proof}

\begin{remark}
Theorem~\ref{thm:main} holds for arbitrary depth, width, and input dimension. The resulting kernel is positive, monotone decreasing, and convex along each input dimension by construction. For multidimensional kernels, these properties are enforced independently along each axis and do not imply joint convexity over the full input domain.
\end{remark}

\section{Implementation}
\label{sec-appendix:implementation}

To enforce the coefficient conditions of
Table~\ref{tab:constraints}, the Bernstein coefficients are generated from unconstrained trainable parameters through a differentiable reparameterization. For a Bernstein polynomial of degree $n$, let $\rho_k \in \mathbb{R}$, $k=0,\ldots,n-2$, denote unconstrained parameters. We first define non-negative increments,
\begin{equation}
    s_k
    =
    \operatorname{ReLU}\bigl(
        \operatorname{softplus}(\rho_k)-\varepsilon
    \bigr),
\end{equation}
where $\varepsilon>0$ is a small constant. For the first layer, a decreasing gap sequence is constructed as,
\begin{equation}
    g_{n-1}
    =
    -\operatorname{softplus}(\rho_{\mathrm{anchor}}),
\end{equation}
\begin{equation}
    g_k
    =
    g_{k+1}-s_k,
    \qquad
    k=n-2,\ldots,0.
\end{equation}
The coefficient range is controlled through,
\begin{equation}
    c_{\mathrm{level}}
    =
    \operatorname{sigmoid}(\rho_{\mathrm{level}}),
\end{equation}
\begin{equation}
    c_{\min}
    =
    c_{\mathrm{level}}
    \operatorname{sigmoid}(\rho_{\mathrm{floor}}),
\end{equation}
which satisfy
$
0<c_{\min}<c_{\mathrm{level}}<1.
$ The gap sequence is scaled according to,
\begin{equation}
    d
    =
    c_{\mathrm{level}}-c_{\min},
    \qquad
    S
    =
    \sum_{k=0}^{n-1}|g_k|,
\end{equation}
\begin{equation}
    \tilde g_k
    =
    g_k \frac{d}{S}.
\end{equation}
The Bernstein coefficients are then constructed recursively as,
\begin{equation}
    c_0
    =
    c_{\mathrm{level}},
\end{equation}
\begin{equation}
    c_{k+1}
    =
    c_k+\tilde g_k,
    \qquad
    k=0,\ldots,n-1.
\end{equation}
This construction yields,
\[
c_0 \ge c_1 \ge \cdots \ge c_n = c_{\min} > 0,
\]
with non-negative second differences. Consequently, the resulting
Bernstein polynomial is positive, monotone decreasing, and convex.

For layers $l\ge2$, monotone increasing edge functions are obtained by
reversing the coefficient sequence,
\begin{equation}
    c_k^{\uparrow}
    =
    c_{n-k},
    \qquad
    k=0,\ldots,n,
\end{equation}
which preserves convexity while reversing monotonicity. The learnable normalization exponent $\alpha_i$ is parameterized through an unconstrained variable $\tilde{\alpha}_i$ using the sigmoid transformation,
\begin{equation}
\alpha_i=\sigma(\tilde{\alpha}_i)=\frac{1}{1+e^{-\tilde{\alpha}_i}},
\end{equation}
which guarantees that $\alpha_i\in(0,1)$ throughout training. The network's trainable parameters are,
\begin{equation}
\boldsymbol{\theta}
= \Bigl\{
\rho_k^{(l,p,q)},
\,
\rho_{\mathrm{anchor}}^{(l,p,q)},
\,
\rho_{\mathrm{level}}^{(l,p,q)},
\,
\rho_{\mathrm{floor}}^{(l,p,q)},
\,
\tilde{\alpha_i}
\Bigr\},
\end{equation}
To improve numerical stability, Bernstein basis functions are evaluated
in log-space,
\begin{equation}
    \log B_{k,n}(t)
    =
    \log\binom{n}{k}
    +
    k\log t
    +
    (n-k)\log(1-t),
\end{equation}
with inputs clamped to $t \in [10^{-8},\,1-10^{-8}]$.

\section{Numerical Solver Details}
\label{sec:appendix-numerical}

\subsection{Experiment I: 1D Volterra IDE}\label{sec:appendix-p1}

\begin{figure}[h!]
    \centering
    \includegraphics[width=0.5\linewidth]{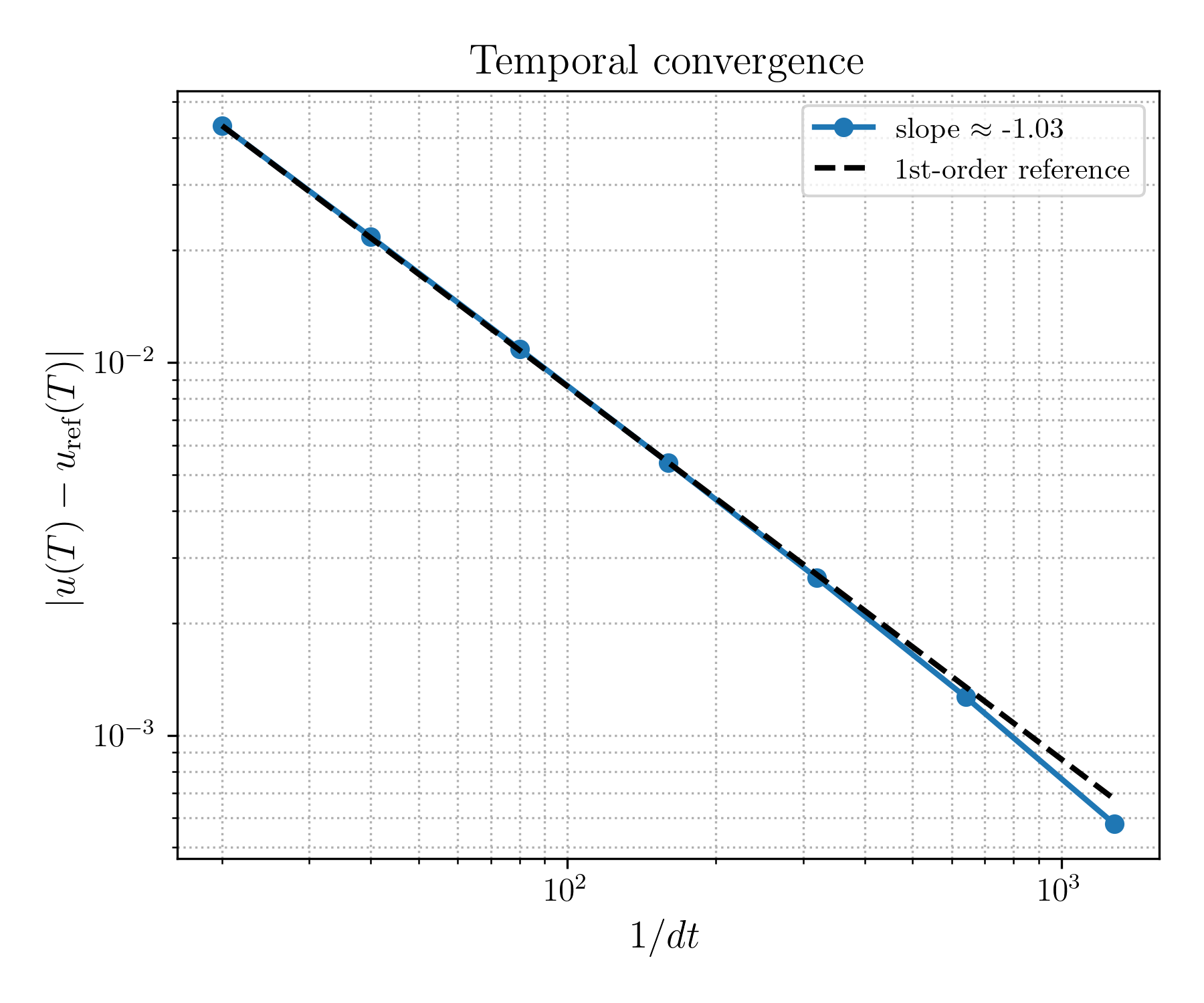}
    \caption{Temporal convergence of the Volterra solver. Endpoint error $|y(T)-y_{\mathrm{ref}}(T)|$ against a fine-grid reference, plotted versus $1/\Delta t$ on log--log axes. The fitted slope of approximately $1$ confirms the first-order forward Euler time-stepping scheme.} 
    \label{fig:appendix-p1-convergence}
\end{figure}

\begin{table}[ht]
\centering
\caption{Configuration for Experiment I: 1D Volterra IDE.}
\label{tab:exp1-config}
\begin{tabular}{lll}
\hline
Category & Parameter & Value \\
\hline

\multicolumn{3}{c}{\textbf{Problem setup}} \\
\hline
Kernel strength & $\kappa$ & $1.5$ \\
Final time & $T$ & $5.0$ s \\
Initial condition & $u(0)$ & $1.0$ \\
Reference kernel & $K(t)$ & $e^{-t}$ \\
\hline

\multicolumn{3}{c}{\textbf{Ground-truth generation}} \\
\hline
Time integrator & -- & Forward Euler \\
Memory quadrature & -- & Trapezoidal rule \\
Time step & $\Delta t$ & $5\times10^{-3}$ s \\
Total snapshots & $N_t$ & $1001$ \\
\hline

\multicolumn{3}{c}{\textbf{Training dataset}} \\
\hline
Observed snapshots & $N_{\mathrm{obs}}$ & $501$ \\
Sampling strategy & -- & Uniform subsampling \\
Noise model & -- & Additive Gaussian \\
Noise levels & $\sigma$ & $0.00,\;0.05,\;0.10,\;0.15$ \\
Observation model & -- & $u^{\mathrm{obs}} = u + \sigma\varepsilon$ \\
Noise distribution & -- & $\varepsilon \sim \mathcal{N}(0,1)$ \\
\hline

\multicolumn{3}{c}{\textbf{Differentiable solver}} \\
\hline
Time integrator & -- & Forward Euler \\
Memory quadrature & -- & Trapezoidal rule \\
Convolution evaluation & -- & Direct summation \\
Time step & $\Delta t$ & $5\times10^{-3}$ s \\
\hline

\multicolumn{3}{c}{\textbf{Optimization}} \\
\hline
Loss function & -- & MSE \\
Adam epochs & -- & $2000$ \\
Adam learning rate & -- & $3\times10^{-3}$ \\
L-BFGS epochs & -- & $200$ \\
L-BFGS learning rate & -- & $0.5$ \\
Line search & -- & Strong Wolfe \\
Cheb-KAN constraint weight & $\lambda$ & $10^{-1}$ \\
\hline
\multicolumn{3}{c}{\textbf{Symbolic regression (PySR)}} \\
\hline
Maximum expression size & \texttt{maxsize} & $20$ \\
Iterations & \texttt{niterations} & $40$ \\
Binary operators & -- & $+,\;\times$ \\
Unary operators & -- & $\cos,\;\exp,\;\sin,\;\tanh$ \\
Elementwise loss & -- & Squared error \\
Model selection & -- & Score \\
\hline
\end{tabular}
\end{table}

\begin{figure}[h!]
    \centering
    \includegraphics[width=1\linewidth]{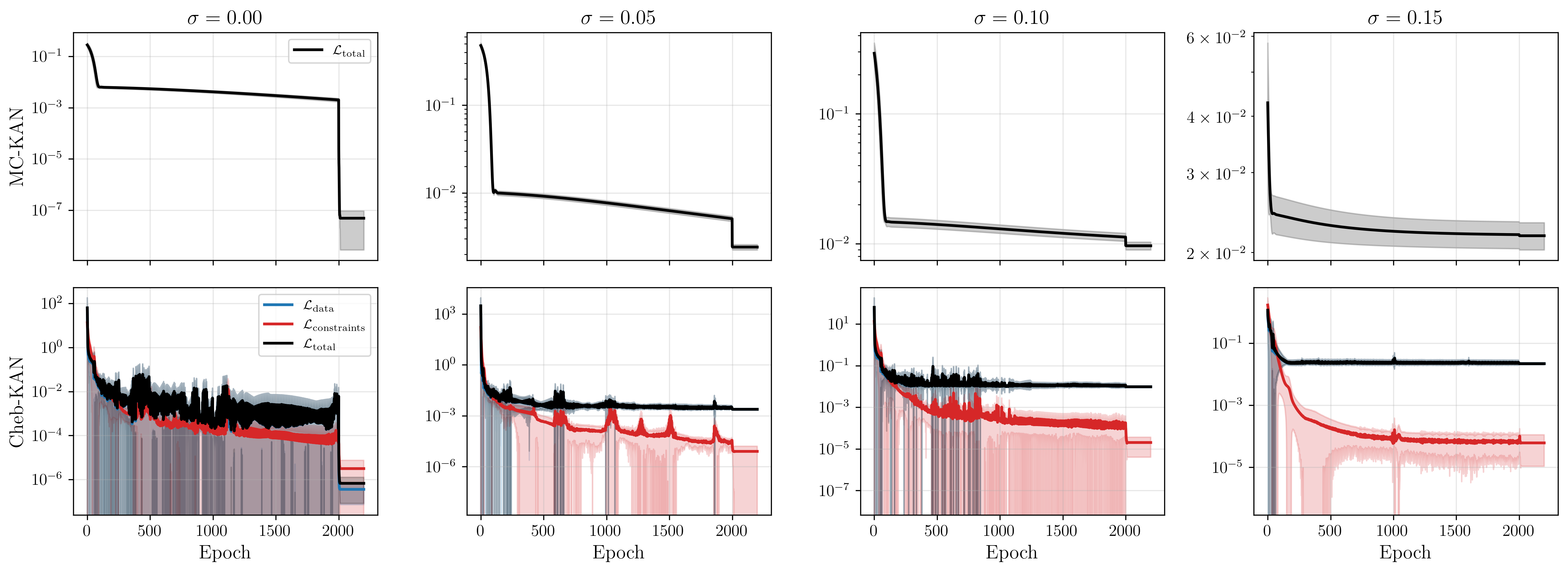}
    \caption{Training loss histories in Experiment I for the same best-performing architectures shown in Fig.~\ref{fig:p1-fig2}. Rows correspond to MC-KAN (top) and Cheb-KAN (bottom), and columns correspond to noise levels $\sigma=0.00$, $0.05$, $0.10$, and $0.15$ (left to right). The iteration axis combines the Adam and L-BFGS optimization stages into a single continuous trajectory. Curves show the mean over five seeds, with shaded bands indicating $\pm$ one standard deviation.}
    \label{fig:appendix-p1-fig1}
\end{figure}

\begin{figure}[h!]
    \centering
    \includegraphics[width=1\linewidth]{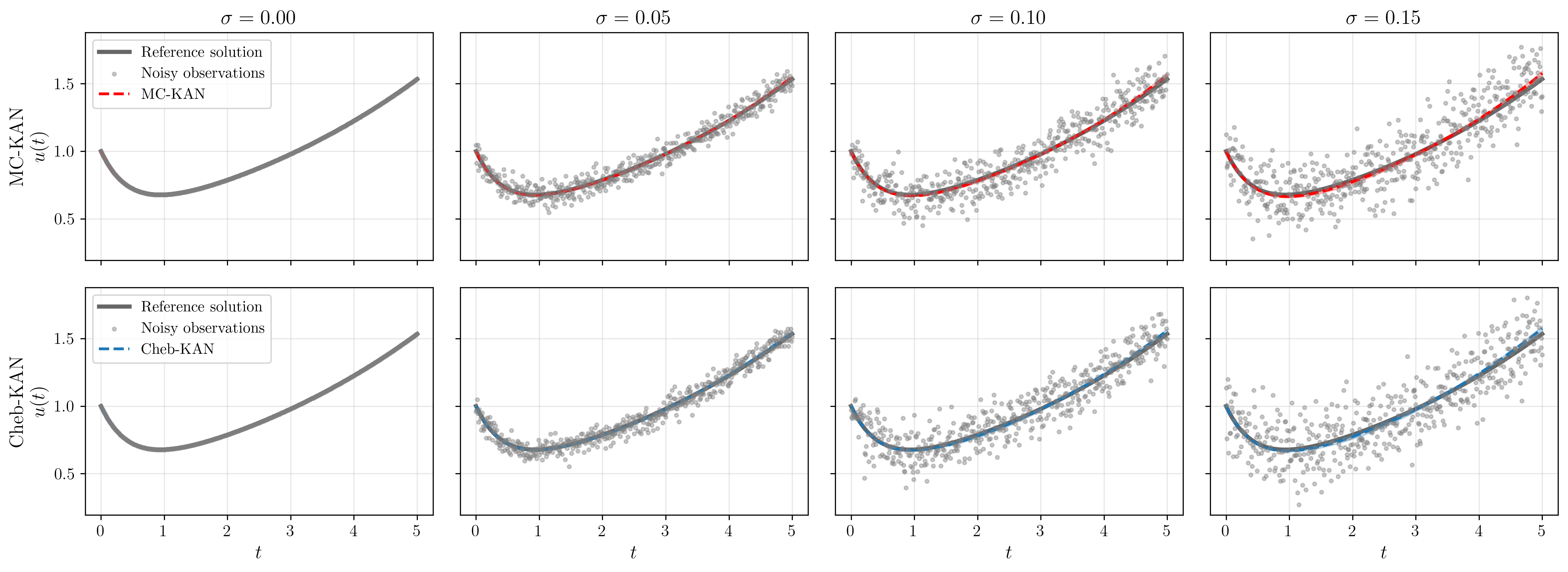}
    \caption{Predicted solution trajectories in Experiment I under increasing noise, using the same best-performing architecture (lowest $\mathcal{E}(K)$) at each noise level as in Fig.~\ref{fig:p1-fig2}. Rows correspond to MC-KAN (top) and Cheb-KAN (bottom), and columns correspond to noise levels $\sigma=0.00$, $0.05$, $0.10$, and $0.15$ (left to right). Each panel shows the observed solution (black), the pointwise mean prediction over five seeds (dashed; MC-KAN in red, Cheb-KAN in blue), and a shaded $\pm$ one standard deviation band.}
    \label{fig:appendix-p1-fig2}
\end{figure}

\newpage
\subsection{Experiment II: 1D Viscoelastic Wave PIDE}\label{sec:appendix-p2}

\begin{figure}[h!]
    \centering
    \includegraphics[width=1\linewidth]{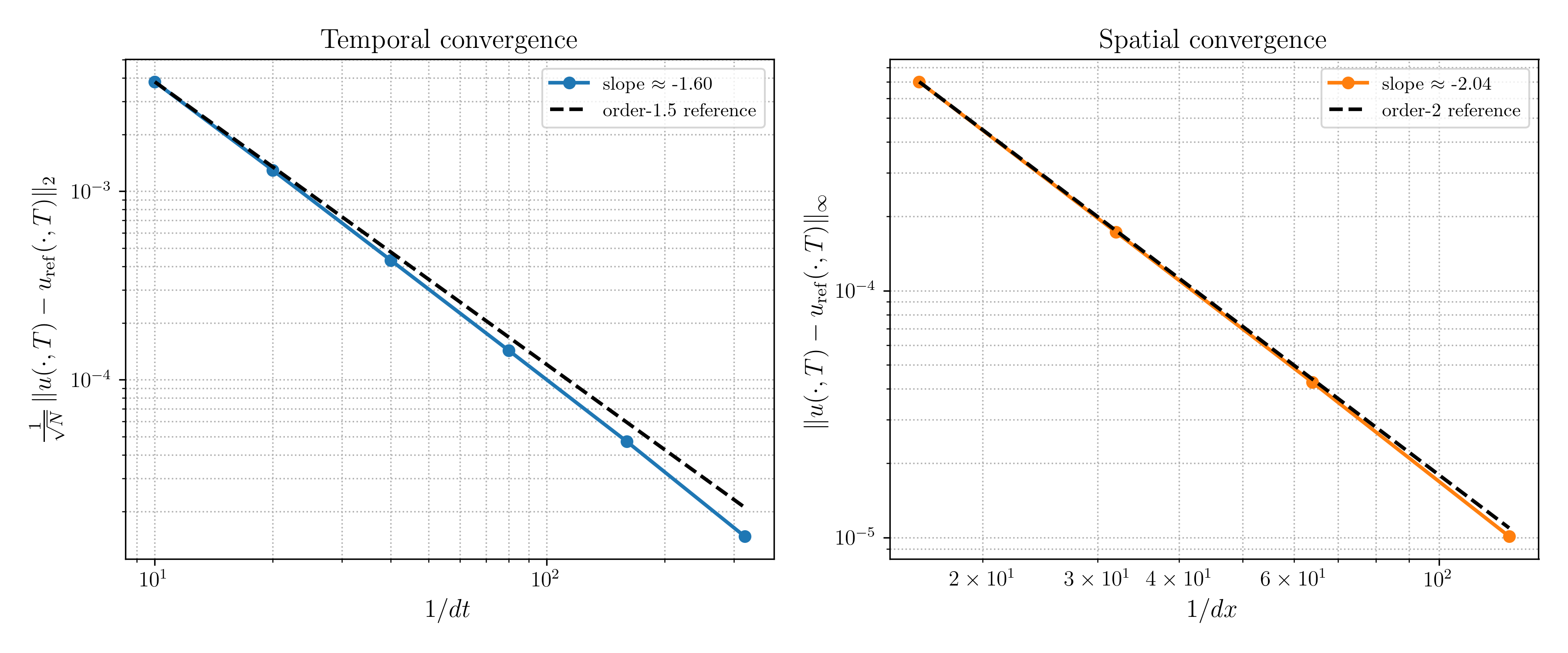}
    \caption{Convergence of the PIDE solver against a fine reference. Left: temporal convergence (spatial grid fixed), RMS error versus $1/\Delta t$ on log--log axes. Slope $1.6$ reflects the reduced temporal order induced by the weakly singular ($\beta=0.5$) memory kernel. Right: spatial convergence ($\Delta t$ fixed), max-norm error versus $1/\Delta x$. The fitted slope of approximately $2$ confirms the expected second-order accuracy of the central finite-difference discretization.}
    \label{fig:appendix-p2-convergence}
\end{figure}

\begin{table}[ht]
\centering
\caption{Configuration for Experiment II: 1D viscoelastic wave PIDE.}
\label{tab:exp2-config}
\begin{tabular}{lll}
\hline
Category & Parameter & Value \\
\hline

\multicolumn{3}{c}{\textbf{Problem setup}} \\
\hline
Rod length & $L$ & $1.0$ m \\
Mass density & $\rho$ & $1.0$ kg/m$^3$ \\
Final time & $T$ & $5.0$ s \\
Reference kernel & $K(t)$ & $\exp\!\left[-(t/\tau_0)^\beta\right]$ \\
Relaxation time & $\tau_0$ & $1.5$ s \\
Stretching exponent & $\beta$ & $0.5$ \\
\hline

\multicolumn{3}{c}{\textbf{Ground-truth generation}} \\
\hline
Spatial discretization & -- & Central finite differences \\
Time integrator & -- & Newmark--$\beta$ \\
Memory quadrature & -- & Trapezoidal rule \\
Spatial nodes & $N_x$ & $200$ \\
Total snapshots & $N_t$ & $1001$ \\
Spatial step & $\Delta x$ & $5.03\times10^{-3}$ m \\
Time step & $\Delta t$ & $5\times10^{-3}$ s \\
Newmark parameter & $\gamma$ & $1/2$ \\
Newmark parameter & $\beta_N$ & $1/4$ \\
\hline

\multicolumn{3}{c}{\textbf{Training dataset}} \\
\hline
Observed spatial locations & $N_x^{\mathrm{obs}}$ & $50$ \\
Observed temporal snapshots & $N_s$ & $\{101,\,51,\,21,\,11\}$ \\
Sampling strategy & -- & Uniform in space and time \\
Noise model & -- & None \\
\hline

\multicolumn{3}{c}{\textbf{Differentiable solver}} \\
\hline
Spatial discretization & -- & Central finite differences \\
Time integrator & -- & Newmark--$\beta$ \\
Memory quadrature & -- & Trapezoidal rule \\
Spatial nodes & $N_x$ & $200$ \\
Total snapshots & $N_t$ & $1001$ \\
Time step & $\Delta t$ & $5\times10^{-3}$ s \\
\hline

\multicolumn{3}{c}{\textbf{Optimization}} \\
\hline
Loss function & -- & MSE \\
Adam epochs & -- & $2000$ \\
Adam learning rate & -- & $3\times10^{-3}$ \\
L-BFGS epochs & -- & $200$ \\
L-BFGS learning rate & -- & $0.5$ \\
Line search & -- & Strong Wolfe \\
Cheb-KAN constraint weight & $\lambda$ & $3\times10^{-1}$ \\
\hline
\multicolumn{3}{c}{\textbf{Symbolic regression (PySR)}} \\
\hline
Maximum expression size & \texttt{maxsize} & $20$ \\
Iterations & \texttt{niterations} & $200$ \\
Binary operators & -- & $+,\;\times,\;\wedge$ \\
Unary operators & -- & $\exp,\;\log,\;\cos,\;\tanh$ \\
Elementwise loss & -- & Squared error \\
Model selection & -- & Score \\
\hline
\end{tabular}
\end{table}

\begin{figure}[ht]
    \centering
    \includegraphics[width=1\linewidth]{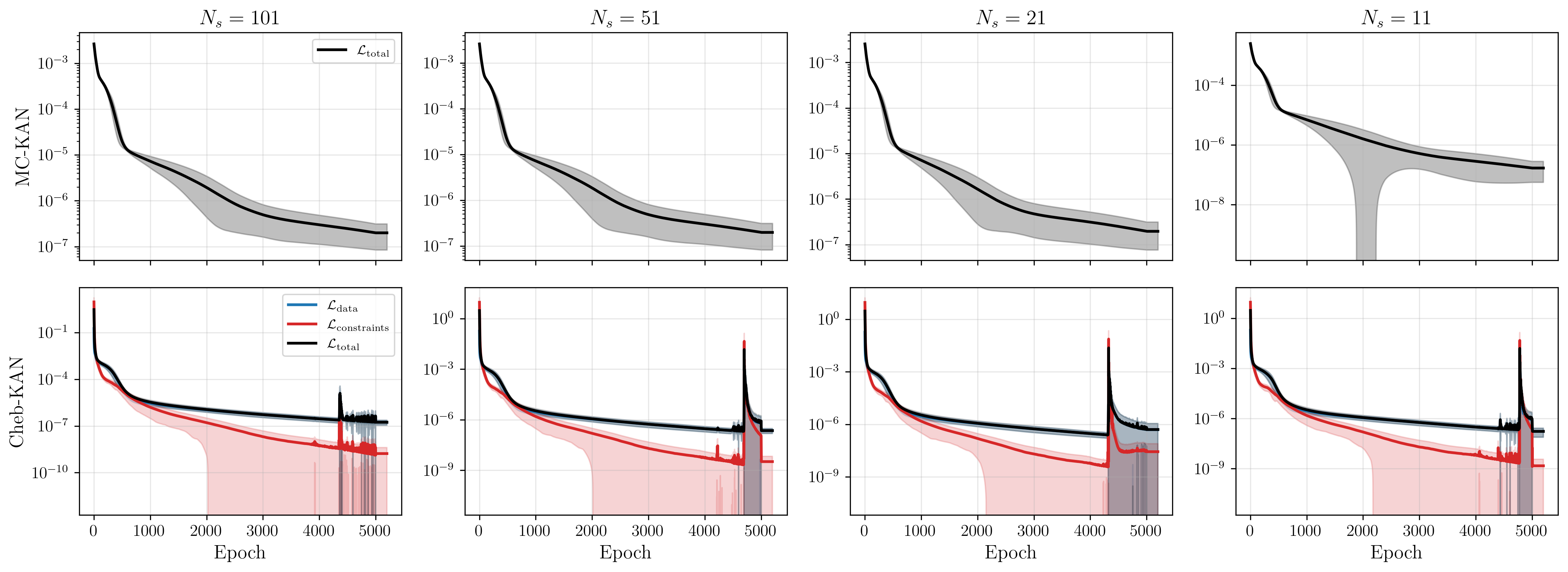}
    \caption{Training loss histories in Experiment II for the same models shown in Fig.~\ref{fig:p2-fig1}. Rows correspond to MC-KAN (top) and Cheb-KAN (bottom), and columns correspond to temporal sampling levels $N_s=101$, $51$, $21$, and $11$ (left to right). The iteration axis combines the Adam and L-BFGS optimization stages into a single continuous trajectory. Curves show the mean over five random seeds, with shaded bands indicating $\pm$ one standard deviation.}
    \label{fig:p2-appendix-fig1}
\end{figure}

\begin{figure}[ht]
    \centering
    \includegraphics[width=1\linewidth]{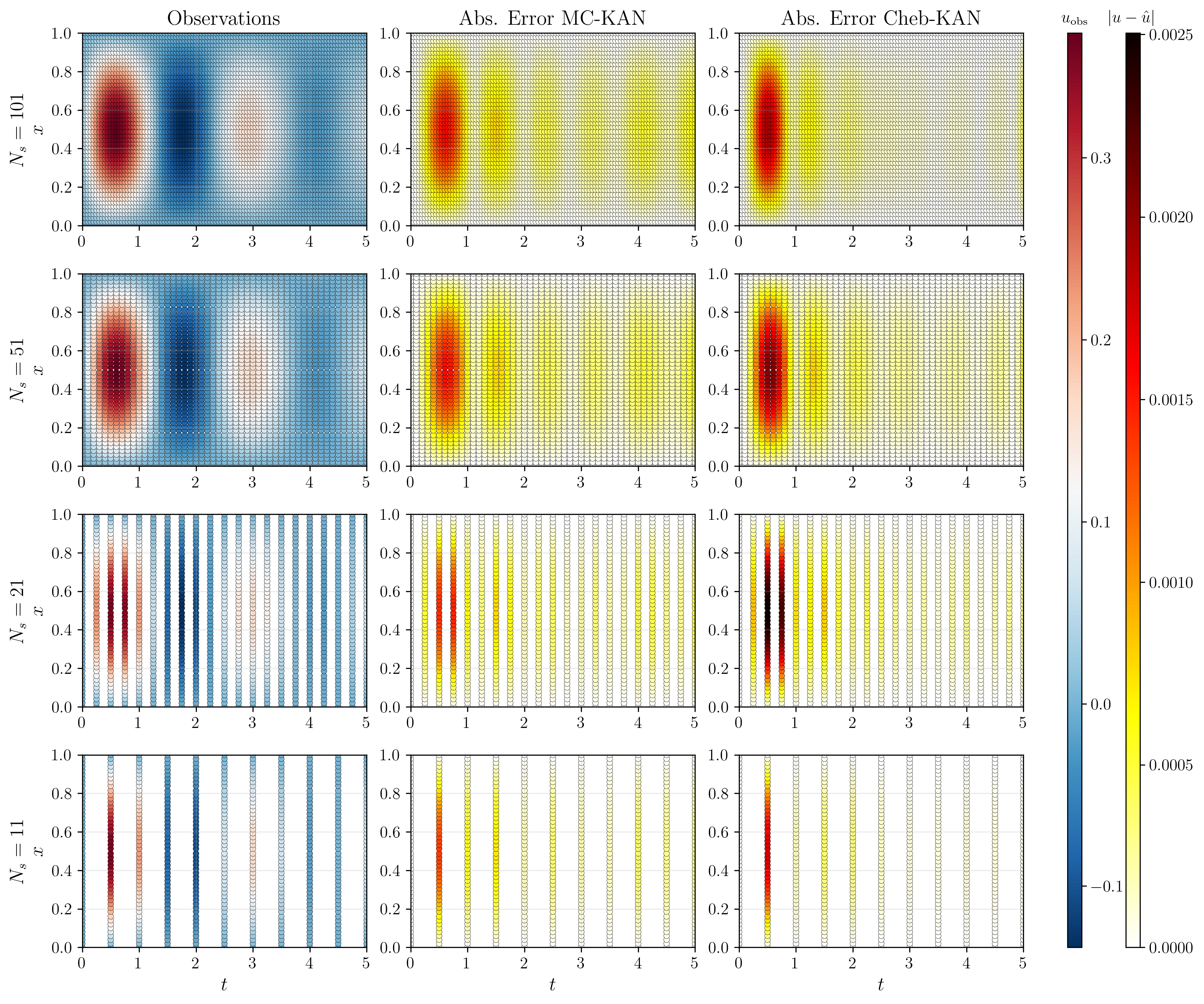}
    \caption{Sparse observations and solution reconstruction errors in Experiment II for the same models shown in Fig.~\ref{fig:p2-fig1}. Rows correspond to temporal sampling levels $N_s=101$, $51$, $21$, and $11$ (top to bottom). The first column shows the observed solution values at the sampled space--time locations. The second and third columns show the pointwise mean absolute solution error, $\frac{1}{5}\sum_{i=1}^{5}\lvert u_{\mathrm{pred}}^{(i)}-u_{\mathrm{true}}\rvert$, for MC-KAN and Cheb-KAN, respectively, where the average is taken over the five random seeds. All panels use the same color scale within each quantity for clear comparison.}
    \label{fig:p2-appendix-fig2}
\end{figure}

\newpage
\subsection{Experiment III: 2D Nonlocal Reaction-Diffusion Equation}\label{appendix-p3}

\subsubsection{Initial Condition Construction}\label{appendix:p3-ic}

The initial condition is generated as a smooth random field in order to excite a broad range of spatial modes while avoiding grid-scale noise. Let $\zeta(x,y)$ denote a realization of spatial white noise with independent standard normal samples. Its Fourier transform
$\widehat{\zeta}(\mathbf{k})$ is multiplied by a Gaussian low-pass filter,
\begin{equation}
    \widehat{u}_0(\mathbf{k})
    =
    \widehat{\zeta}(\mathbf{k})
    \exp\!\left(
        -\frac{\|\mathbf{k}\|^2}
               {2\sigma_s^2}
    \right),
\end{equation}
where $\mathbf{k}=(k_x,k_y)$ is the wavevector and
$\sigma_s$ controls the smoothness of the resulting field. The initial condition is then obtained by inverse Fourier transform,
\begin{equation}
    u_0(x,y)
    =
    \mathcal{F}^{-1}
    \!\left[
        \widehat{u}_0(\mathbf{k})
    \right].
\end{equation}
Finally, the field is normalized to have prescribed standard deviation $a$ and zero mean,
\begin{equation}
    u_0(x,y)
    \leftarrow
    a\,
    \frac{
        u_0(x,y)
    }{
        \operatorname{std}(u_0)
    },
\end{equation}
followed by
\begin{equation}
    u_0(x,y)
    \leftarrow
    u_0(x,y)
    -
    \frac{1}{|\Omega|}
    \int_{\Omega}
    u_0(x,y)\,dx\,dy.
\end{equation}
In this work, we use
\[
\sigma_s = 0.1,
\qquad
a = 0.5.
\]
The same initial condition is used for all simulations.
This initialization serves two purposes. From a physical perspective, it represents a spatially heterogeneous state containing fluctuations across a range of length scales, which is characteristic of many phase
separation and pattern-formation processes. From a numerical perspective, the broadband excitation ensures that the evolving field contains sufficient information about the underlying nonlocal interactions, producing gradients of adequate magnitude during backpropagation to enable reliable kernel identification.

\begin{figure}[h!]
    \centering
    \includegraphics[width=1\linewidth]{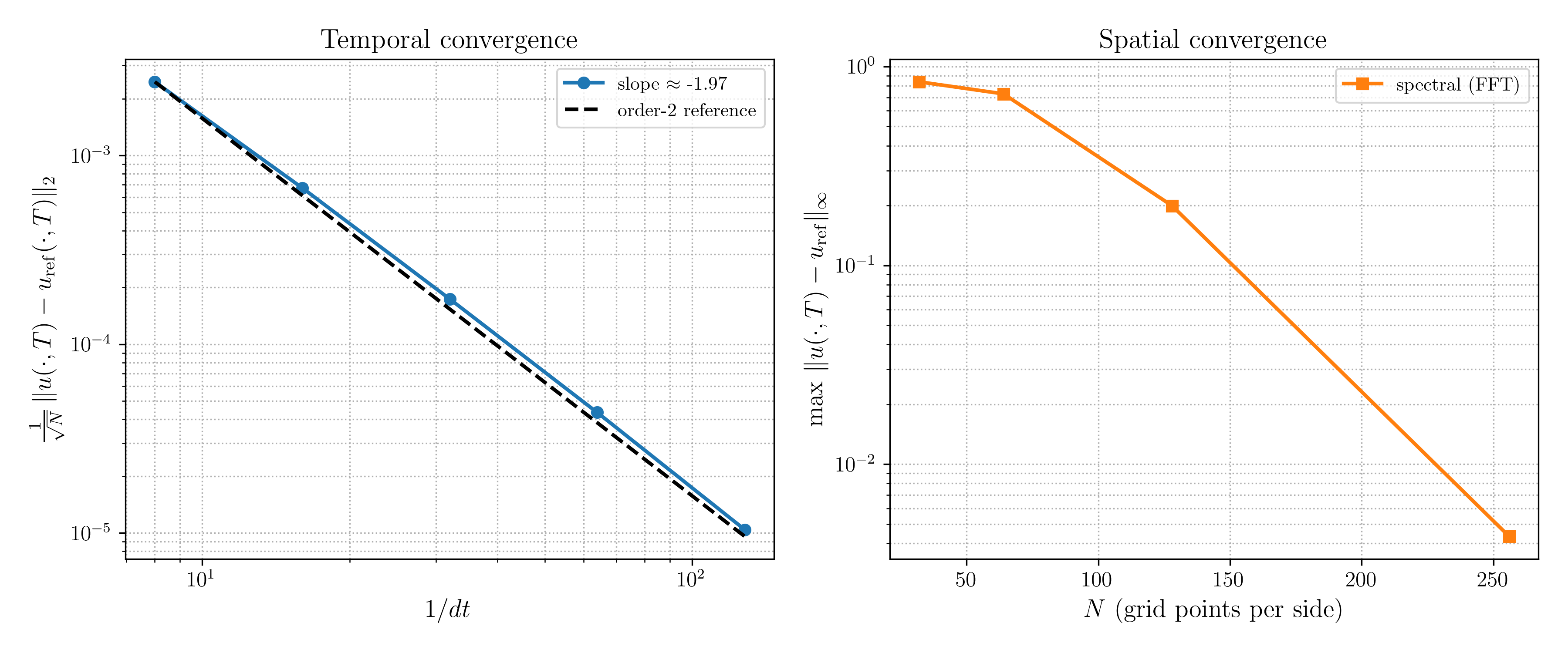}
    \caption{Convergence of the 2D spectral solver against a fine-grid reference. Left: temporal convergence (spatial grid fixed), RMS error versus $1/\Delta t$ on log--log axes, slope $\approx 2$ confirming the second-order Crank--Nicolson/Adams--Bashforth (CN/AB2) scheme. Right: spatial convergence ($\Delta t$ fixed), max-norm error versus the number of grid points $N$ on semilog axes. The accelerating decay is consistent with the spectral convergence expected from the Fourier discretization.}
    \label{fig:appendix-p3-convergence}
\end{figure}

\begin{table}[ht]
\centering
\caption{Configuration for Experiment III: 2D nonlocal reaction-diffusion equation.}
\label{tab:exp3-config}
\begin{tabular}{lll}
\hline
Category & Parameter & Value \\
\hline

\multicolumn{3}{c}{\textbf{Problem setup}} \\
\hline
Domain length ($x$) & $L_x$ & $10$ \\
Domain length ($y$) & $L_y$ & $10$ \\
Final time & $T$ & $6.0$ \\
Diffusion coefficient & $D$ & $0.001$ \\
Interaction strength & $\alpha$ & $0.12$ \\
Boundary conditions & -- & Periodic \\
\hline

\multicolumn{3}{c}{\textbf{Reference kernel}} \\
\hline
Interaction length & $\tau_x$ & $3.0$ \\
Interaction length & $\tau_y$ & $1.0$ \\
Decay exponent & $\beta_x$ & $0.4$ \\
Decay exponent & $\beta_y$ & $0.8$ \\
Mixed exponent & $\gamma_x$ & $0.5$ \\
Mixed exponent & $\gamma_y$ & $0.5$ \\
Coupling parameter & $\delta$ & $0.5$ \\
\hline

\multicolumn{3}{c}{\textbf{Initial condition}} \\
\hline
Construction & -- & Smoothed random field \\
Smoothing parameter & $\sigma_s$ & $0.1$ \\
Amplitude & $a$ & $0.5$ \\
\hline

\multicolumn{3}{c}{\textbf{Ground-truth generation}} \\
\hline
Spatial discretization & -- & Fourier spectral \\
Time integrator & -- & Semi-implicit IMEX \\
Convolution evaluation & -- & FFT-based \\
Spatial grid & $N_x\times N_y$ & $256\times256$ \\
Time step & $\Delta t$ & $0.05$ \\
\hline

\multicolumn{3}{c}{\textbf{Training dataset}} \\
\hline
Observed spatial grid & $N_x^{\mathrm{obs}}\times N_y^{\mathrm{obs}}$ & $32\times32$ \\
Observation times (coarse) & $t_{\mathrm{obs}}^{(1)}$ & $\{0,1,2,3,4,5,6\}$ \\
Observation times (dense) & $t_{\mathrm{obs}}^{(2)}$ & $\{0,0.25,0.50,\ldots,6.00\}$ \\
Sampling strategy & -- & Uniform spatial subsampling \\
Noise model & -- & Additive Gaussian \\
Noise levels & $\sigma$ & $\{0,0.02,0.04,0.06,0.08,0.10,0.15\}$ \\
Observation model & -- & $u^{\mathrm{obs}} = u + \sigma\varepsilon$ \\
Noise distribution & -- & $\varepsilon \sim \mathcal{N}(0,1)$ \\
\hline

\multicolumn{3}{c}{\textbf{Differentiable solver}} \\
\hline
Spatial discretization & -- & Fourier spectral \\
Time integrator & -- & Semi-implicit IMEX \\
Convolution evaluation & -- & FFT-based \\
Spatial grid & $N_x\times N_y$ & $256\times256$ \\
Time step & $\Delta t$ & $0.05$ \\
Loss evaluation & -- & Observed points only \\
\hline

\multicolumn{3}{c}{\textbf{Optimization}} \\
\hline
Optimizer & -- & Adam \\
Learning rate & -- & $10^{-1}$ for MC-KAN; $10^{-3}$ for Cheb-KAN \\
Maximum epochs (noise-free) & -- & $120\,000$\\
Maximum epochs (noisy) & -- & $30\,000$\\
Learning-rate scheduler & -- & StepLR \\
Scheduler step size & -- & $20\,000$ \\
Scheduler decay & -- & $0.7$ \\

Cheb-KAN constraint weight & $\lambda$ & $10^{-2}$ \\
\hline

\multicolumn{3}{c}{\textbf{Symbolic regression}} \\
\hline
Maximum expression size & \texttt{maxsize} & $20$ \\
Iterations & \texttt{niterations} & $200$ \\
Binary operators & -- & $+,\;\times,\;\wedge$ \\
Unary operators & -- & $\exp,\;\log$ \\
Elementwise loss & -- & Squared error \\
Model selection & -- & Score \\
\hline
\end{tabular}
\end{table}

\begin{figure}[ht]
    \centering
    \includegraphics[width=1\linewidth]{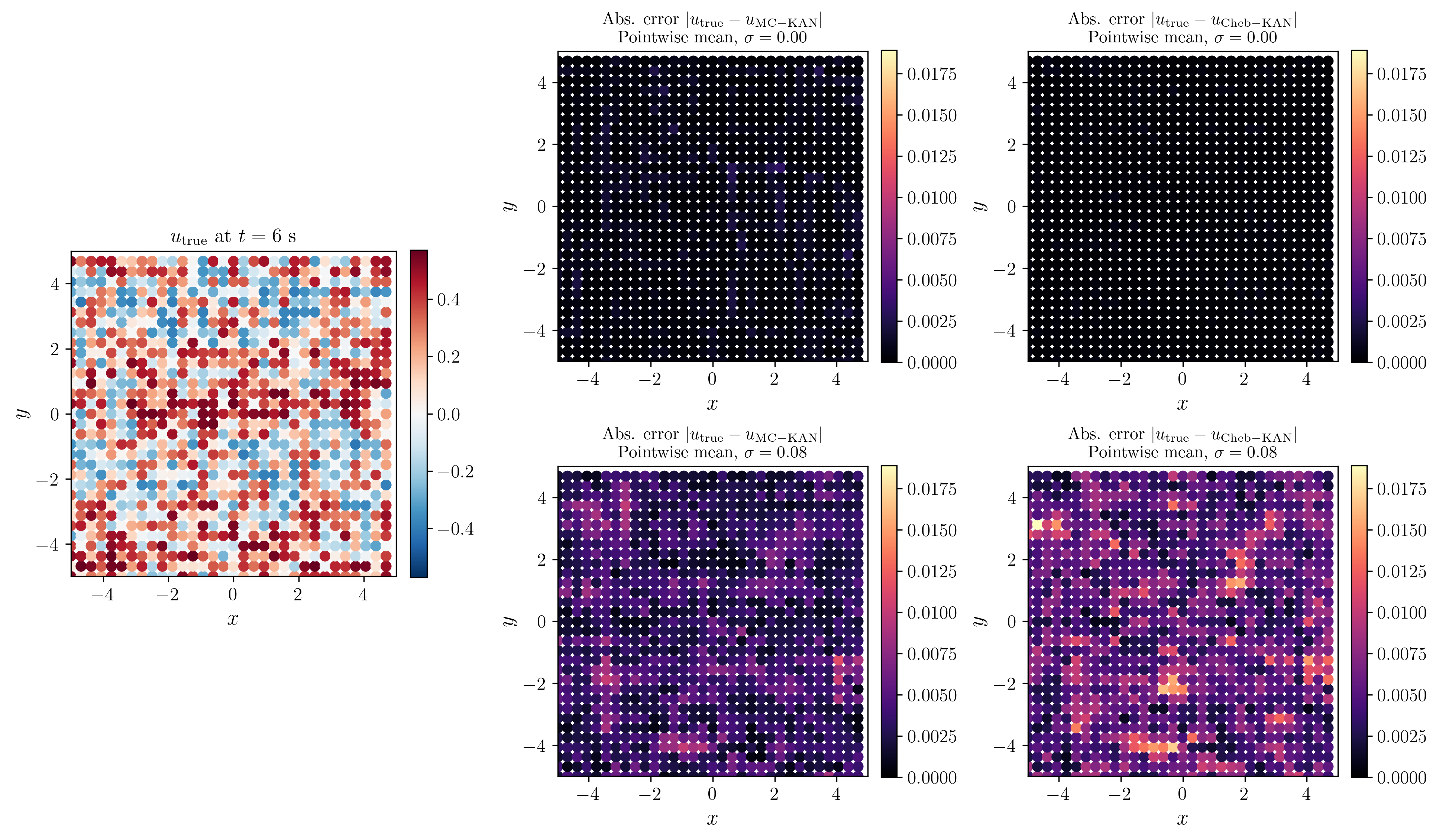}
    \caption{Final-time solution reconstruction error in Experiment III on the $32\times32$ observation grid using 7 temporal snapshots. The left column shows the ground-truth solution $u(\mathbf{x},T)$ at the final time $T=6$. The center and right columns show the pointwise mean absolute error, $\frac{1}{5}\sum_{i=1}^{5}\lvert u_{\mathrm{pred}}^{(i)}-u_{\mathrm{true}}\rvert$, for MC-KAN and Cheb-KAN, respectively, where the average is taken over the five noise realizations with fixed initialization. The top row corresponds to noise-free observations ($\sigma=0.00$), and the bottom row to $\sigma=0.08$. Both models use linear input normalization.}
    \label{fig:p3-appendix-fig1}
\end{figure}

\begin{figure}[ht]
    \centering
    \includegraphics[width=1\linewidth]{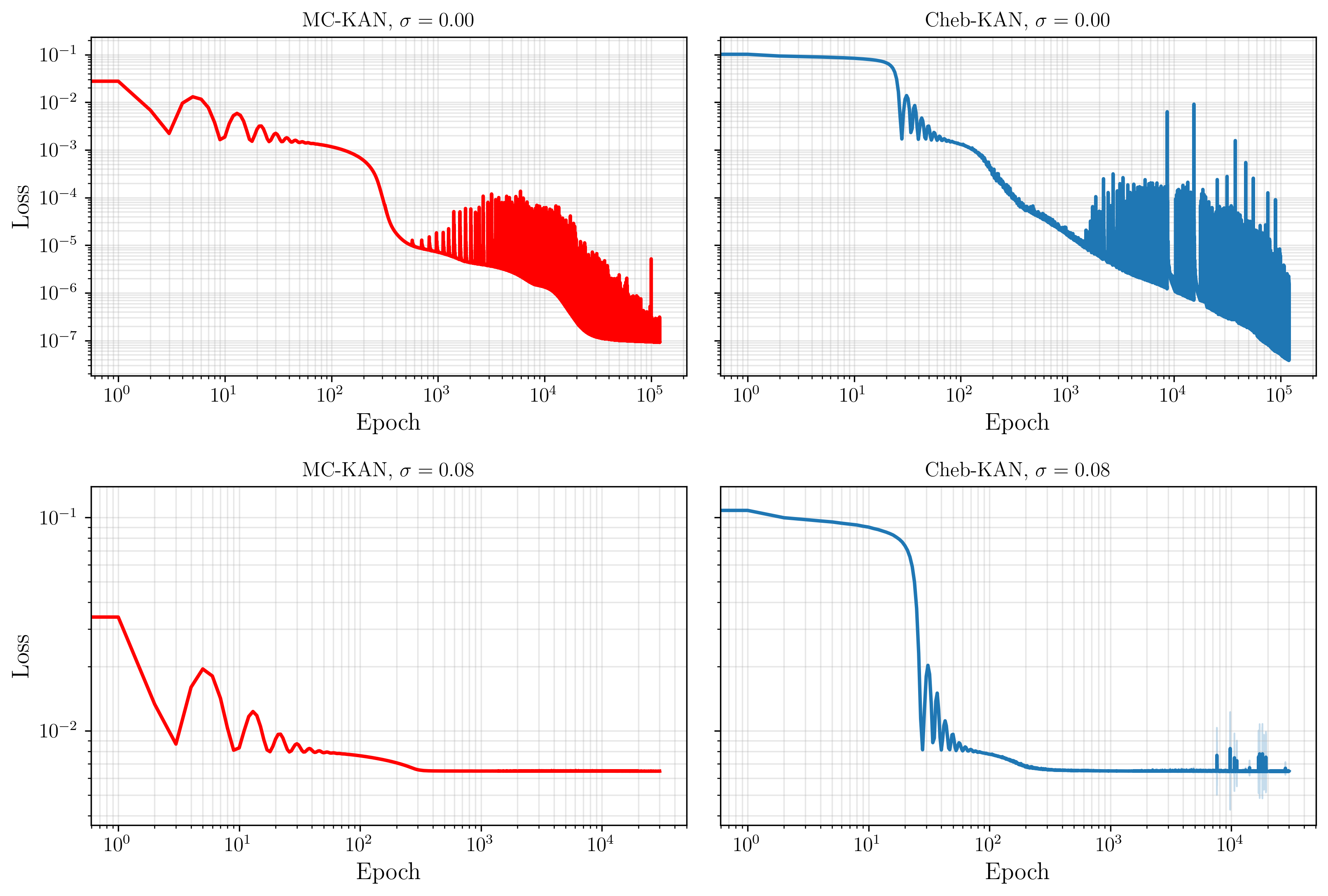}
    \caption{Training loss histories in Experiment III using 25 temporal snapshots. Curves show the mean over five noise realizations with fixed initialization, with shaded bands indicating $\pm$ one standard deviation. Rows correspond to observation noise levels $\sigma=0.00$ (top) and $\sigma=0.08$ (bottom), while columns correspond to MC-KAN (power-fixed input normalization with exponent $\alpha_1=\alpha_2=0.75$; left) and Cheb-KAN (linear input normalization; right).}
    \label{fig:p3-appendix-fig2}
\end{figure}

\end{document}